\documentclass[11pt]{article}
\pdfoutput=1
\usepackage{enumerate}
\usepackage{pdfsync}
\usepackage[OT1]{fontenc}

\usepackage[usenames]{color}
\usepackage[colorlinks,
            linkcolor=red,
            anchorcolor=blue,
            citecolor=blue,
            linktocpage=true,
            linkcolor=blue            
            ]{hyperref}
\usepackage{fullpage}
\usepackage{hyperref}
\usepackage[protrusion=true,expansion=true]{microtype}
\usepackage{pbox}
\usepackage{setspace}
\usepackage{tabularx}
\usepackage{float}
\usepackage{wrapfig,lipsum}
\usepackage{enumitem}
\usepackage{natbib}
\usepackage{multirow} 

\usepackage[ruled,vlined,linesnumbered]{algorithm2e}

\let\oldnl\nl
\newcommand{\nonl}{\renewcommand{\nl}{\let\nl\oldnl}}

\makeatletter
\newcommand*{\rom}[1]{\expandafter\@slowromancap\romannumeral #1@}
\makeatother

\usepackage{dsfont}
\usepackage{booktabs}
\usepackage{macro}

\allowdisplaybreaks

\def \dd{\text{d}}
\def \hmu{\hat\mu}
\def \stageOne{\emph{Stage I}}
\def \stageTwo{\emph{Stage II}}
\def \stageThree{\emph{Stage III}}
\def \stageFour{\emph{Stage IV}}
\def \failflag{\cF_\text{fail}}

\begin{document}
\title{\huge  Double Explore-then-Commit: Asymptotic Optimality and Beyond}
\author
{
	Tianyuan Jin\thanks{School of Computing, National University of Singapore, Singapore; e-mail: {\tt Tianyuan1044@gmail.com}} 
	~~~and~~~
	Pan Xu\thanks{Department of Computer Science, University of California, Los Angeles, Los Angeles, CA 90095; e-mail: {\tt panxu@cs.ucla.edu}} 
	~~~and~~~
	Xiaokui Xiao\thanks{School of Computing, National University of Singapore, Singapore; e-mail: {\tt xkxiao@nus.edu.sg}} 
	~~~and~~~
	Quanquan Gu\thanks{Department of Computer Science, University of California, Los Angeles, Los Angeles, CA 90095; e-mail: {\tt qgu@cs.ucla.edu}}
}
\date{}
\maketitle

\begin{abstract}%
We study the multi-armed bandit problem with subgaussian rewards. The explore-then-commit (ETC) strategy, which consists of an exploration phase followed by an exploitation phase, is one of the most widely used algorithms in a variety of online decision applications. Nevertheless, it has been shown in \cite{garivier2016explore} that ETC is suboptimal in the asymptotic sense as the horizon grows, and thus, is worse than fully sequential strategies such as Upper Confidence Bound (UCB). In this paper, we show that a variant of ETC algorithm can actually achieve the asymptotic optimality for multi-armed bandit problems as UCB-type algorithms do and extend it to the batched bandit setting. Specifically, we propose a double explore-then-commit (DETC) algorithm that has two exploration and exploitation phases and prove that DETC achieves the asymptotically optimal regret bound. To our knowledge, DETC is the first non-fully-sequential algorithm that achieves such asymptotic optimality. In addition, we extend DETC to batched bandit problems, where (i) the exploration process is split into a small number of batches and (ii) the round complexity\footnote{Round complexity is defined as the total number of times an algorithm needs to update its learning policy. For instance, an UCB algorithm on a bandit problem with time horizon $T$ will have $O(T)$ round complexity because it needs to update its estimation for arms based on the reward collected at each time step.} is of central interest. We prove that a batched version of DETC can achieve the asymptotic optimality with only a constant round complexity. This is the first batched bandit algorithm that can attain the optimal asymptotic regret bound and optimal round complexity simultaneously.
\end{abstract}



\section{Introduction}
We study the multi-armed bandit problem, where an agent is asked to choose a bandit arm $A_t$ from a set of arms $\{1, 2, \ldots, K\}$ at every time step $t$. Then it observes a reward $r_t$ associated with arm $A_t$ following a 1-subgaussian distribution with an unknown mean value $\mu_{A_t}$. For an arbitrary horizon length $T$, the performance of any strategy for the bandit problem is measured by the {\it expected cumulative regret}, which is defined as:
\begin{equation}\label{eq:def_regret_2arm}
    R_{\mu}(T)= T \cdot \max\limits_{i\in \{1,2,\cdots,K\}} \mu_i - \mathbb{E}_{\mu} \bigg[ \sum_{t=1}^T r_t\bigg],
\end{equation}
where the subscript $\mu$ denotes the bandit instance consisting of the $K$ arms $\{\mu_1,\ldots,\mu_K\}$.


Assume without loss of generality that arm $1$ has the highest mean, i.e., $\mu_1=\max\{\mu_1,\ldots,\mu_K\}$.  \citet{lai1985asymptotically,katehakis1995sequential}  show that when each arm's reward distribution is Gaussian,
the expected regret of any strategy is at least $\sum_{i:\Delta_i>0} \frac{2\log T}{\Delta_i}$ when $T$ approaches infinity, where $\Delta_i=|\mu_1-\mu_i|$ denotes the difference between the mean rewards of arm $1$ and $i$. That is,
\begin{equation}\label{def:asy_opt_unknown}
   \liminf\limits_{T\rightarrow \infty} \frac{R_{\mu}(T)}{\log T}\geq \sum_{i:\Delta_i>0}\frac{2}{\Delta_i}.
\end{equation}
When $\Delta_i$ ($i = 1, 2, \ldots, K$) are known to the decision maker in advance, \cite{garivier2016explore} show that the asymptotic lower bound turns to
\begin{equation}\label{def:asy_opt_known}
   {\liminf\limits_{T\rightarrow \infty}} \frac{R_{\mu}(T)}{\log T}\geq \sum_{i:\Delta_i>0}\frac{1}{2\Delta_i}.
\end{equation}
We refer to ${\lim}_{T \rightarrow\infty} R_{\mu}(T)/\log T$ as the {\it asymptotic regret rate}, and we say that an algorithm is {\it asymptotically optimal} if it achieves the regret lower bound in \eqref{def:asy_opt_unknown} (when $\Delta_i$ are unknown) or \eqref{def:asy_opt_known} (when $\Delta_i$ are known).

There exist a number of multi-armed bandit algorithms (e.g., UCB~\citep{katehakis1995sequential,garivier2011kl}, Thompson Sampling~\citep{agrawal2017near,korda2013thompson}, Bayes UCB~\citep{kaufmann2018bayesian}) that are asymptotically optimal. All of these algorithms, however, are {\it sequential}, in the sense that they need to observe the outcome of each arm pull before deciding which arm should be pulled next.  
Such sequential algorithms are unsuitable for applications where each arm pull take a substantial amount of time. For example, in clinical trials, each treatment involving a human participate can be regarded as an arm pull, and the outcome of the treatment can only be observed after a defined time period. It is thus infeasible to conduct all treatments in a sequential manner due to the prohibitive total time cost. 
In such applications, a more preferable strategy is to pull arms simultaneously to reduce the waiting time for outcomes. Motivated by this, existing work \citep{perchet2016batched,bertsimas2007learning,chick2009economic,agarwal2017learning,NIPS2019_8341,esfandiari2019batched} has studied the {\it batched bandit} problem, which requires arms to be pulled in {\em rounds}. In each round, we are allowed to pull multiple arms at the same time, but can only only observe the outcomes at the end of the round. The problem asks for a strategy that minimizes not only the expected cumulative regret after $T$ arm pulls, but also the {\it number of rounds}.


In contrast to fully-sequential bandit algorithms, strategies with distinct exploration and exploitation stages are often more efficient and achieve lower round complexities in batched bandits, where outcomes are only needed at the stage switching time.  
The most natural approach for separating the exploration and exploitation stages is to first pull each arm for a fixed number of times (the exploration stage), and then pull the arm with the larger average reward repeatedly based on the result in the previous stage (the exploitation stage). The length of the exploration stage can be a data-dependent stopping time. Such strategies with distinct exploration and exploitation stages fall into the class of approaches named explore-then-commit (ETC) \citep{perchet2016batched,garivier2016explore}, which are simple and widely implemented in various online applications, such as clinical trials, crowdsourcing and marketing~\citep{perchet2016batched,garivier2016explore,NIPS2019_8341}.  
Regarding the regret analysis, \citet{garivier2016optimal} suggested that carefully-tuned variants of such two-stage strategies might be near-optimal. Yet  \citet{garivier2016explore} later proved that a class of two-stage ETC strategies are actually suboptimal in the sense that they cannot achieve the asymptotically optimal lower bounds in \eqref{def:asy_opt_unknown} or \eqref{def:asy_opt_known}. Existing batched bandit algorithms~\citep{perchet2016batched,NIPS2019_8341,esfandiari2019batched} are based on two-stage ETC, hence is suboptimal in the asymptotic sense.  To this end, a natural and open question is:
\begin{center}
    \textit{Can non-fully-sequential strategies such as more stages ETC strategies \\achieve the optimal regret?}
\end{center}

In this paper, we answer the above question affirmatively by proposing a double explore-then-commit (DETC) algorithm that consists of two exploration and two exploitation stages, which directly improves the ETC algorithm proposed in \cite{garivier2016explore}. 
Take the two-armed bandit problem as an example, the key idea of DETC is illustrated as follows: based on the result of the first exploration stage, the algorithm will commit to the arm with the largest average reward and pull it for a long time in the exploitation stage. After the first exploitation stage, the algorithm will have a confident estimate of the chosen arm. However, since the unchosen arm is never pulled after the first exploration stage, the algorithm is still not sure whether the unchosen arm is underestimated. Therefore, a second exploration stage for the algorithm to pull the unchosen arms is necessary. After this stage, the algorithm will have sufficiently accurate estimate for all arms and just needs another exploitation stage to commit to the arm with the largest average reward. 
In contrast to the above double explore-then-commit algorithm, existing ETC algorithms may have inaccurate estimates for both the optimal arm and the suboptimal arms and hence suffers a suboptimal regret.


\vspace{1ex}
\subsection{Our Contributions}
    We first study the two-armed bandit problem. In this case, we simplify the notation by denoting $\Delta=\Delta_2=|\mu_1-\mu_2|$ as the gap. When the gap $\Delta$ is a known parameter to the algorithm, we prove that  DETC  achieves the asymptotically optimal regret rate  $1/(2\Delta)$, the instance-dependent optimal regret $O(\log(T\Delta^2)/\Delta)$ and the minimax optimal regret $O(\sqrt{T})$ for two-armed bandits. 
This result significantly improves the $4/\Delta$ asymptotic regret  rate of   ETC with fixed length and the $1/\Delta$ asymptotic regret  rate of SPRT-ETC with data-dependent stopping time for the exploration stage proposed in \cite{garivier2016explore}. 

When $\Delta$ is unknown, we prove that the DETC strategy achieves the  asymptotically optimal regret rate {$2/\Delta$}, the instance-dependent regret $O(\log(T\Delta^2)/\Delta)$ and the minimax optimal regret $O(\sqrt{T})$. This again improves the $4/\Delta$ asymptotic regret  rate of the BAI-ETC algorithm proposed in \cite{garivier2016explore}. In both the known gap and the unknown gap settings, this is the first time that the regrets of ETC algorithms have been proved to match the asymptotic lower bounds and therefore are asymptotically optimal. In contrast, \citet{garivier2016explore} proved that the $1/\Delta$ asymptotic regret  rate for the known gap case and the $4/\Delta$ asymptotic regret  rate for the unknown gap case are not improvable in `single' explore-then-commit algorithms, which justifies the essence of the double exploration technique in  DETC in order to achieve the asymptotic regret. 

We also propose a variant of DETC that is simultaneously instance-dependent/minimax optimal and asymptotically optimal for two-armed bandit problems. Our analysis and algorithmic framework also suggests an effective way of combining the asymptotically optimal DETC algorithm with any other minimax optimal algorithms to achieve the  instance-dependent/minimax  and asymptotic optimality simultaneously. 
We further extend our DETC algorithm to $K$-armed bandit problems and prove that DETC achieves the asymptotically optimal regret rate $\sum_{i:\Delta_i>0}2/\Delta_i$ for $K$-armed bandits \citep{lai1985asymptotically}, where $\Delta_i$ is the gap between the best arm and the $i$-th arm, $i\in[K]$. 

To demonstrate the advantages and potential applications of our double explore-then-commit strategy, we also study the batched bandits problem \citep{perchet2016batched} where the round complexity is of central interest. We prove that a simple variant of the proposed DETC algorithm can achieve $O(1)$ round complexity while maintaining the asymptotically optimal regret for two-armed bandits. This is a significant improvement of the round complexity of fully sequential strategies such as UCB and UCB2 \citep{lai1985asymptotically,auer2002finite,garivier2011kl}, which usually requires $O(T)$ or $O(\log T)$ rounds. This is the first batched bandit algorithm that achieves the asymptotic optimality in regret and the optimal round complexity. Our result also suggests that it is not necessary to use the outcome at each time step as in fully sequential algorithms such as UCB to achieve the asymptotic optimality.


\vspace{1ex}
\noindent\textbf{Notation} We denote $\log^{+} (x)=\max \{0, \log x\}$. We use notations $\lfloor x\rfloor$ (or $\lceil x\rceil$) to denote the largest integer that is no larger (or no smaller) than $x$. We use $O(T)$ to hide constants that are independent of $T$. A random variable $X$ is said to follow 1-subgaussian distribution, if it holds that $\EE_{X}[\exp(\lambda X-\lambda \EE_{X}[X])]\leq \exp(\lambda^2/2)$ for all $\lambda \in \RR$.


\section{Related Work}
For regret minimization in stochastic bandit problems, \cite{lai1985asymptotically} proved the first asymptotically lower bound that any strategy must have at least  $C(\mu)\log(T)(1-o(1))$ regret when the horizon $T$ approaches infinity, where $C(\mu)$ is a constant. Later, strategies such as UCB~\citep{lai1985asymptotically,auer2002finite,garivier2011kl}, Thompson Sampling~\citep{korda2013thompson,agrawal2017near} and Bayes UCB~\citep{kaufmann2018bayesian} are all shown to be asymptotically optimal in the unknown gap setting. For the known gap setting, \cite{garivier2016explore} developed the $\Delta$-UCB algorithm that matches the lower bound. To our knowledge, all previous asymptotically optimal algorithms are fully sequential.  Despite the asymptotic optimality,  for a fixed time horizon $T$, the problem-independent lower bound~\citep{auer2002nonstochastic} states that any strategy has at least a regret in the order of $\Omega(\sqrt{KT})$, which is called the \emph{minimax optimal} regret.  MOSS~\citep{audibert2009minimax} is the first method proved to be minimax optimal. Subsequently, two UCB-based methods,  AdaUCB~\citep{lattimore2018refining} and KL-UCB$^{++}$~\citep{menard2017minimax}, are also shown to achieve minimax optimality. 


There is less work yet focusing on the batched bandit setting with limited rounds. UCB2 \citep{auer2002finite}, which needs implicitly $O(\log T)$ rounds of queries, is a variant of UCB that takes $O(T)$ rounds of queries. 
\citet{cesa2013online} studied the batched bandit problem under the notion of switching cost and showed that $\log \log T$ rounds are sufficient to achieve the minimax optimal regret~\citep{audibert2009minimax}. \citet{perchet2016batched} studied the two-armed batched bandit problem with limited rounds. They developed polices that is minimax optimal and proved that their round cost is near optimal.  \citet{NIPS2019_8341} used similar polices for $K$-armed batched bandits and proved that their batch complexity and regret are both near optimal, which is recently further improved by \cite{esfandiari2019batched}. Besides, \cite{NIPS2019_8341,esfandiari2019batched,perchet2016batched} also provide the instance dependent regret bound under the limited rounds setting. In the asymptotic sense, the regret bound is $O(K\log T)$. However, the hidden constant in $O(K\log T)$ makes it suboptimal in terms of the asymptotic regret and the round cost of these works is $\Theta(\log T)$. In addition, the batched bandit problem is also studied in the linear bandit setting \citep{esfandiari2019batched,han2020sequential,ruan2020linear}, best arm identification~\citep{agarwal2017learning,tianyuan2019efficient} and in theoretical computer science under
the name of {\it parallel algorithms} \citep{valiant1975parallelism,tao2019collaborative,alon1988sorting,feige1994computing,bollobas1983parallel,ajtai1986deterministic,braverman2016parallel,duchi2018minimax}, to mention a few.

\section{Double Explore-then-Commit Strategies}
The vanilla ETC strategy \citep{perchet2016batched,garivier2016explore} consists of two stages: in stage one (the exploration stage), the agent pulls all arms for the same number of times, which can be a fixed integer or a data-dependent stopping time, leading to the FB-ETC and SPRT-ETC (or BAI-ETC) algorithms in \citep{garivier2016explore}; in stage two (the exploitation stage), the agent pulls the arm that achieves the best average reward according to the outcome of stage one. As we mentioned in the introduction, none of these algorithms can achieve the asymptotic optimality in \eqref{def:asy_opt_unknown} or \eqref{def:asy_opt_known}. To tackle this problem, we propose a double explore-then-commit strategy for two-armed bandits that improves ETC to be asymptotically optimal while still keeping non-fully-sequential. 


\subsection{Warm-Up: The Known Gap Setting}\label{sec:detc_known}
We first consider the case where the gap $\Delta=\mu_1-\mu_2$ is known to the decision maker (recall that we assume w.l.o.g. that arm $1$ is the optimal arm). We propose a double explore-then-commit (DETC) algorithm, which consists of four stages. The details are displayed in Algorithm \ref{alg:double-exploration-known-gap}.

\begin{algorithm}[t]
\caption{Double Explore-then-Commit  (DETC) in the Known Gap Setting}
\label{alg:double-exploration-known-gap}
\KwIn {$T$, $\epsilon_T$ and $\Delta$.}
\textbf{Initialization:} Pull arms $A_1=1$ and $A_2=2$, $t\leftarrow 2$, $T_1= \lceil 2\log (T\Delta^2)/(\epsilon_T^2\cdot \Delta^2)\rceil$, $\tau_1=4\lceil\log(T_1\Delta^2)/\Delta^2\rceil$\; 
\nonl \hrulefill\\ 
\nonl \textit{Stage I: Explore all arms uniformly}\\  
\While{$t\leq 2\tau_1$} 
    { 
    Pull arms  $A_{t+1}=1$ and $A_{t+2}=2$, $t\leftarrow t+2$\;} 
\nonl \hrulefill\\ 
\nonl \textit{Stage II:  Commit to the arm with the largest average reward}\\ 
${1'}\leftarrow\arg \max_{k\in\{1,2\}} \hat{\mu}_k(t)$\;
\While{$T_{1'}(t)\leq T_1$}
    { 
    Pull arm  $A_{t+1}={1'}$, $t\leftarrow t+1$\;}
\nonl \hrulefill\\ 
\nonl \textit{Stage III: Explore the unchosen arm in Stage II}\\
$\mu'\leftarrow \hat{\mu}_{1'}(t) $, $t_2\leftarrow 0$, $2'\leftarrow \{1,2\}\setminus {1'}$, 
$\theta_{2',0}\leftarrow 0$\;\label{algline:detc_known_stage3}
\While{${2(1-\epsilon_T)t_2\Delta}\mid \mu'-\theta_{2',t_2} \mid <\log(T\Delta^2)$}
    {
    Pull arm $A_{t+1}=2'$ and observe reward $r_{t+1}$\; $\theta_{2',t_2+1}=(t_2\theta_{2',t_2}+r_{t+1})/(t_2+1)$, $t\leftarrow t+1$, $t_2\leftarrow t_2+1$\;}
\nonl \hrulefill \\
\nonl \textit{Stage IV: Commit to the arm with the largest average reward }\\
$a\leftarrow 1'\ind\{\hat{\mu}_{1'}(t)\geq\theta_{2',t_2}\}+2'\ind\{\hat{\mu}_{1'}(t)<\theta_{2',t_2}\}$\; 
\While{$t\leq T$}
    { 
    Pull arm $a$, $t\leftarrow t+1$\;}
\end{algorithm}

At the initialization step, we  pull both arms once, after which we set the current time step $t=2$. In $\stageOne$, DETC pulls both arms for $\tau_1=4\lceil\log(T_1\Delta^2)/\Delta^2\rceil$ times respectively, where both $\tau_1$ and $T_1$ are predefined parameters. At time step $t$, we define $T_{k}(t)$ to be the total number of times that arm $k$ ($k=1,2$) has been pulled so far, i.e., {$T_k(t)=\sum_{i=1}^t\ind_{\{A_i=k\}}$, where $A_i$ is the arm pulled at time step $i$.  Then we define the average reward of arm $k$ at time step $t$ as $\hmu_k(t):=\sum_{i=1}^t\ind_{\{A_i=k\}}r_i/T_k(t)$, where $r_i$ is the reward received by the algorithm at time $i$. }

In $\stageTwo$, DETC repeatedly pulls the arm with the largest average reward at the end of $\stageOne$, denoted by arm  ${1'}=\arg \max_{k=1,2}\hat{\mu}_{k,\tau_1}$, where $\hat{\mu}_{k,\tau_1}$ is the average reward of arm $k$ after its $\tau_1$-$th$ pull. Note that before $\stageTwo$, arm $1'$ has been pulled for {$\tau_1$ times}. We will terminate $\stageTwo$ after the total number of pulls of arm $1'$ reaches $T_1$. It is worth noting that $\stageOne$ and $\stageTwo$ are similar to existing ETC algorithms \citep{garivier2016explore}, where these two stages are referred to as the $\emph{Explore}$ (explore different arms) and the $\emph{Commit}$ (commit to one single arm) stages respectively. 

The key difference here is that instead of pulling arm $1'$ till the end of the horizon (time step $T$), our Algorithm \ref{alg:double-exploration-known-gap} sets a check point $T_1<T$. After arm $1'$ has been pulled for $T_1$ times, we stop and check the average reward of the arm that is not chosen in $\stageTwo$, denoted by arm $2'$. The motivation for this halting follows from a natural question: 
\emph{What if we have committed to the wrong arm?} Even though arm $2'$ is not chosen based on the outcome of $\stageOne$, it can still be optimal due to random sampling errors.  To avoid such a case, we pull arm $2'$ for more steps such that the average rewards of both arms can be  distinguished from each other. Specifically, in $\stageThree$ of Algorithm \ref{alg:double-exploration-known-gap}, arm $2'$ is repeatedly pulled until
\begin{align}\label{eq:detc_stage3_stopping_rule}
    2(1-\epsilon_T)t_2\Delta|\mu'-\theta_{2',t_2}|\geq\log(T\Delta^2),
\end{align}
where $\epsilon_T>0$ is a parameter, $t_2$ is the total number of pulls in Stage \emph{III}, $\theta_{2',t_2}$ is the average reward of arm $2'$ in $\stageThree$ and $\mu'$ is the average reward of arm $1'$ recorded at the end of $\stageTwo$. Note that $\mu'=\hmu_{1'}(t)$ throughout $\stageThree$ since arm $1'$ is not pulled in this stage. 

As is discussed in the above paragraph, at the end of $\stageTwo$, the average reward $\mu'$ for arm $1'$ already concentrates on its expected reward. Therefore, in $\stageThree$ of DETC, the sampling error only comes from pulling arm $2'$. Hence, our DETC algorithm offsets the drawback ETC algorithms where  the sampling error comes from both arms. In the remainder of the algorithm ($\stageFour$), we just again commit to the arm with the largest empirical reward from at the end of $\stageThree$.

Now, we present the regret bound of Algorithm \ref{alg:double-exploration-known-gap}. Note that if $T\Delta^2<1$, the worst case regret  is trivially bounded by $T\Delta<\sqrt{T}$ and the asymptotic regret rate is meaningless since $\Delta\rightarrow0$ when $T\rightarrow\infty$. Hence, in the following theorem, we assume $T\Delta^2\geq 1$.
\begin{theorem}\label{theorem:knowndelta_2arm}
If $\epsilon_{T}$ is chosen such that $T_1\Delta^2\geq 1$, the regret of Algorithm~\ref{alg:double-exploration-known-gap} is upper bounded as 
\begin{equation}\label{eq:regret_known_gap}
  R_{\mu}(T)\leq 2\Delta+ \frac{8}{\Delta}+\frac{4\log(T_1\Delta^2)}{\Delta}+\frac{\log(T\Delta^2)}{2(1-\epsilon_T)^2\Delta}+\frac{2\sqrt{\log(T\Delta^2)}+2}{(1-\epsilon_T)^2\Delta}.
\end{equation}
In particular, let $\epsilon_T=\min\{\sqrt{{\log(T\Delta^2)}/({\Delta^2\log^2T})}, 1/2\}$, then $\limsup_{T\rightarrow \infty}  R_{\mu}(T)/\log T\leq 1/(2\Delta)$, and $R_{\mu}(T)=O(\Delta+\log(T\Delta^2)/{\Delta})=O(\Delta+\sqrt{T})$.
\end{theorem}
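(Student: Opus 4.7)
The plan is to decompose $R_\mu(T)$ over the four stages of Algorithm~\ref{alg:double-exploration-known-gap} and to control every stage on a single good event $\mathcal{G}$. Let $\mathcal{G}$ be the intersection of (a) $1'=1$, so that Stage~II commits to the optimal arm, (b) $|\hat\mu_1-\mu_1|$ is small at the end of Stage~II, and (c) uniform concentration of the running mean $\theta_{2',t_2}$ around $\mu_{2'}$ over all $1\leq t_2\leq T$. Since $\hat\mu_{1,\tau_1}-\hat\mu_{2,\tau_1}$ is $\sqrt{2/\tau_1}$-subgaussian with mean $\Delta$, the choice $\tau_1=4\lceil\log(T_1\Delta^2)/\Delta^2\rceil$ gives $\Pr[1'\neq 1]\leq \exp(-\tau_1\Delta^2/4)\leq 1/(T_1\Delta^2)$, and a standard peeling / maximal-inequality argument supplies (b)--(c) with probability $1-O(1/(T\Delta^2))$. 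The total regret on $\mathcal{G}^c$ is then at most $T\Delta\cdot\Pr[\mathcal{G}^c]=O(1/\Delta)$, which will be absorbed into the $8/\Delta$ constant of \eqref{eq:regret_known_gap}. On $\mathcal{G}$, Stage~I together with the initialization pulls the suboptimal arm at most $\tau_1+1$ times, contributing $O(\log(T_1\Delta^2)/\Delta+\Delta)$ regret (accounting for the $4\log(T_1\Delta^2)/\Delta$ and $2\Delta$ pieces of the bound), while Stage~II produces no regret since $1'=1$.

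The crux is Stage~III. Using
\begin{equation*}
|\mu'-\theta_{2',t_2}| \;\geq\; \Delta - |\mu'-\mu_1| - |\theta_{2',t_2}-\mu_2|
\end{equation*}
together with the concentration of $\mu'$ and $\theta_{2',t_2}$ on $\mathcal{G}$, the right-hand side is at least $(1-\epsilon_T)\Delta$ up to a fluctuation of order $\sqrt{\log(T\Delta^2)}/(t_2\Delta)$. Substituting into the stopping rule $2(1-\epsilon_T)t_2\Delta|\mu'-\theta_{2',t_2}|\geq \log(T\Delta^2)$ and solving for the first $t_2$ at which it fires yields the high-probability bound
\begin{equation*}
t_2^\ast \;\leq\; \frac{\log(T\Delta^2)}{2(1-\epsilon_T)^2\Delta^2} + \frac{2\sqrt{\log(T\Delta^2)}+2}{(1-\epsilon_T)^2\Delta^2},
\end{equation*}
and multiplying by $\Delta$ gives the final two terms of \eqref{eq:regret_known_gap}. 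The same rule enforces $\mathrm{sign}(\hat\mu_{1'}-\theta_{2',t_2^\ast})=\mathrm{sign}(\mu_1-\mu_2)$ on $\mathcal{G}$, so Stage~IV commits to arm~$1$ and incurs no regret. Adding the on-$\mathcal{G}$ contributions to the $O(1/\Delta)$ off-event contribution yields \eqref{eq:regret_known_gap}.

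For the asymptotic claim, plugging $\epsilon_T=\min\{\sqrt{\log(T\Delta^2)/(\Delta^2\log^2T)},1/2\}$ into \eqref{eq:regret_known_gap} makes $\epsilon_T\to 0$ and $T_1=2\log^2T(1+o(1))$, so $\log(T_1\Delta^2)=O(\log\log T)$. Hence every term except $\log(T\Delta^2)/(2(1-\epsilon_T)^2\Delta)$ is $o(\log T)$, giving $\limsup R_\mu(T)/\log T\leq 1/(2\Delta)$; a one-variable maximisation of each surviving term in $\Delta$ subject to $T\Delta^2\geq 1$ gives $R_\mu(T)=O(\sqrt T)$. The main technical obstacle I anticipate is the uniform control of $\theta_{2',t_2}$ and the resulting stopping-time analysis: the rule couples $t_2$ to the random running mean, and the absolute value in $|\mu'-\theta_{2',t_2}|$ permits firing from either side, so I would use a peeling / self-normalised argument uniform in $t_2$ and split into the two sign cases to pin down $t_2^\ast$ with the sharp $(1-\epsilon_T)^2$ constant.
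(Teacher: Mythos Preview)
Your single good-event decomposition has a genuine gap in the treatment of $\{1'\neq 1\}$. You correctly compute $\Pr[1'\neq 1]\le 1/(T_1\Delta^2)$, but you then include $\{1'=1\}$ inside $\mathcal G$ and bound the \emph{entire} off-event regret by $T\Delta\cdot\Pr[\mathcal G^c]$. Since $T_1=\Theta(\log^2 T)\ll T$, this yields
\[
T\Delta\cdot\Pr[\mathcal G^c]\;\ge\;T\Delta\cdot\frac{1}{T_1\Delta^2}\;=\;\frac{T}{T_1\Delta},
\]
which is not $O(1/\Delta)$ and in fact diverges with $T$. The paper avoids this precisely by \emph{not} folding $\{1'=1\}$ into the conditioning event. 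It bounds the Stage~II contribution separately as $\Delta T_1\,\Pr[1'=2]\le 1/\Delta$ (crucially with $T_1$, not $T$), and then shows that the Stage~III/IV analysis goes through symmetrically on both $E_1=\{E,\,1'=1\}$ and $E_2=\{E,\,1'=2\}$: if $1'=2$ then $2'=1$, Stage~III pulls the optimal arm (zero regret), and the stopping rule together with Stage~IV still identify arm $1$. Your single-event approach discards this recovery mechanism, which is exactly what makes the double ETC work.

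A secondary point: for Stage~IV the paper controls $\Pr[\tau_2<T,\,a=2\mid E_1]$ by an optional-stopping argument applied to the supermartingale $M_n=\exp(-2(1-\epsilon_T)\Delta\, S_n')$, giving $1/(T\Delta^2)$ directly with no peeling. Your route instead asks event~(c) to rule out the stopping rule ever firing on the wrong side for \emph{all} $t_2\le T$ with failure probability $O(1/(T\Delta^2))$; a ``standard peeling'' bound at that failure level typically inflates the confidence width by a $\log T$-type factor, which would spoil the sharp $(1-\epsilon_T)^{-2}$ constant you need in the Stage~III term. Likewise, the paper obtains $\EE[\tau_2\mid E_1]$ by integrating the tail $\Pr[\tau_2\ge n_x\mid E_1]$ in $x$, rather than a single high-probability bound on $t_2^\ast$; this is what produces the $\sqrt{\log(T\Delta^2)}$ correction cleanly. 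Your plan is not unsalvageable, but to match \eqref{eq:regret_known_gap} you will have to drop $\{1'=1\}$ from $\mathcal G$, argue Stages~III--IV symmetrically in $1'$, and replace the uniform-in-$t_2$ concentration by the martingale/tail-integration arguments the paper uses.
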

The proof of Theorem \ref{theorem:knowndelta_2arm} can be found in Section \ref{sec:proof_of_thm_knowdelta_2arm}. This theorem states that Algorithm \ref{alg:double-exploration-known-gap}  achieves the asymptotically optimal regret  rate $1/(2\Delta)$, instance-dependent optimal regret $O(\Delta+1/\Delta\log(T\Delta^2))$ and minimax regret $O(\Delta+\sqrt{T})$, when parameter $\epsilon_T$ is properly chosen. In comparison, the ETC algorithm in \cite{garivier2016explore} can only achieve $1/\Delta$ asymptotic regret  rate under the same setting, which is suboptimal for multi-armed bandit problems \citep{lai1985asymptotically} when gap $\Delta$ is known to the decision maker. It is important to note that,  \citet{garivier2016explore} also proved a lower bound for asymptotic optimality of ETC and showed that the $1/\Delta$ asymptotic regret  rate of `single' explore-then-commit algorithms cannot be improved. Therefore, the double exploration techniques in our DETC is indeed essential for breaking the $1/\Delta$ barrier in the asymptotic regret  rate. 


The asymptotic optimality is also achieved by the $\Delta$-UCB algorithm in ~\cite{garivier2016explore}, which is a fully sequential strategy. 
In stark contrast, DETC shows that non-fully-sequential  algorithms can also achieve the asymptotically optimal regret for multi-armed bandit problems. Compared with $\Delta$-UCB, DETC has distinct stages of exploration and exploitation which makes the implementation simple and more practical. A more important and unique feature of DETC is its lower round complexity for batched bandit problems, which will be thoroughly discussed in Section \ref{sec:batch_known}.


\subsection{Double Explore-then-Commit in the Unknown Gap Setting}\label{sec:detc_unknown}
In real world applications, the gap $\Delta$ is often unknown. Thus, it is favorable to  design an algorithm without the knowledge of $\Delta$. However, this imposes issues with Algorithm \ref{alg:double-exploration-known-gap}, since the stopping rules of the two exploration stages ($\stageOne$ and $\stageThree$) are unknown. To address this challenge, we propose a DETC algorithm where the gap $\Delta$ is unknown to the decision maker, which is displayed in Algorithm \ref{alg:double-exploration-unknown-gap}. 
\begin{algorithm}[t]
   \caption{Double Explore-then-Commit (DETC) in the Unknown Gap Setting}
   \label{alg:double-exploration-unknown-gap}
  \KwIn{$T, T_1$}
 \textbf{Initialization:} Pull arms $A_1=1$, $A_2=2$, $t\leftarrow 2$; \\
\nonl \hrulefill \\
\nonl  \textit{Stage I: Explore all arms uniformly} \\  
  \While{$\mid \hmu_1(t)-\hat{\mu}_2(t)\mid<\sqrt{16/t\log^{+}(T_1/t)}$} 
     {
       Pull arms  $A_{t+1}=1$ and $A_{t+2}=2$, $t\leftarrow t+2$\;
     }
\nonl \hrulefill \\ 
\nonl \textit{Stage II: Commit to the arm with the largest average reward} \\ 
  $1'\leftarrow \arg \max_{i} \hat{\mu}_i(t)$\;
  \While{$T_{1'}(t)\leq {T_1}$}
 { Pull arm  $A_{t+1}=1'$, $t\leftarrow t+1$\;}
\nonl \hrulefill\\ 
\nonl \textit{Stage III: Explore the unchosen arm in Stage II} \\
  $\mu' \leftarrow  \hat{\mu}_{1'}(t) $, $2'\leftarrow \{1,2\}\setminus {1'}$\; 
  Pull arm $ A_{t+1}=2'$ and observe reward $r_{t+1}$, $\theta_{2',1}=r_{t+1}$, $t\leftarrow t+1$, $t_2\leftarrow 1$\;
  \While{$|\mu'-\theta_{2',t_2}|<\sqrt{2/t_2\log\big(T/t_2\big(\log^2(T/t_2)+1\big)\big)}$}
 {Pull arm $A_{t+1}=2'$ and observe reward $r_{t+1}$\; $\theta_{2',t_2+1}=(t_2\theta_{2',t_2}+r_{t+1})/(t_2+1)$, $t\leftarrow t+1$, $t_2\leftarrow t_2+1$\;}
\nonl \hrulefill\\ 
\nonl \textit{Stage IV: Commit to the arm with the largest average reward } \\
 { 
    $a\leftarrow 1'\ind\{\hat{\mu}_{1'}(t)\geq\theta_{2',t_2}\}+2'\ind\{\hat{\mu}_{1'}(t)<\theta_{2',t_2}\}$\;
     \While{$t\leq {T}$}
        { 
            Pull arm $a$, $t\leftarrow t+1$\;}
        }
\end{algorithm}

Similar to Algorithm \ref{alg:double-exploration-known-gap}, Algorithm \ref{alg:double-exploration-unknown-gap} also consists of four stages, where $\stageOne$ and $\stageThree$ are double exploration stages that ensure we have chosen the right arm to pull in the subsequent stages. Since we have no knowledge about $\Delta$, we derive the stopping rule for $\stageOne$ by comparing the empirical average rewards of both arms. Once we have obtained empirical estimates of the mean rewards that are able to distinguish two arms in the sense that $|\hmu_1(t)-\hmu_2(t)|\geq\sqrt{16\log^{+}(T_1/t)/t}$, we terminate $\stageOne$. Here $t$ is the current time step of the algorithm and $T_1$ is a predefined parameter. Similar to Algorithm \ref{alg:double-exploration-known-gap}, based on the outcomes of $\stageOne$, we commit to arm $1'=\argmax_{i=1,2}\hmu_i(t)$ at the end of $\stageOne$ and pull this arm repeatedly throughout $\stageTwo$. In $\stageThree$, we turn to pull arm $2'$ that is not chosen in $\stageTwo$  
until the average reward of arm $2'$ is significantly larger or smaller than that of arm $1'$ chosen in $\stageTwo$. In $\stageFour$, we again commit to the best empirically  preforming arm and pull it till the end of the algorithm.

Compared with Algorithm \ref{alg:double-exploration-known-gap}, in both exploration stages of Algorithm \ref{alg:double-exploration-unknown-gap}, we do not use the information of the gap $\Delta$ at the cost of sequentially deciding the stopping rule in these two stages. In the following theorem, we present the regret bound of Algorithm \ref{alg:double-exploration-unknown-gap} and show that this regret is still asymptotically optimal.
\begin{theorem}
\label{theorem:unknowndelta}
Let $T_1=\log^2 T$, then the regret of Algorithm~\ref{alg:double-exploration-unknown-gap} satisfies 
\begin{align*}
    \lim_{T\rightarrow \infty} R_{\mu}(T)/\log T =2/\Delta.
\end{align*}  
\end{theorem}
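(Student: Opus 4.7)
The plan is to decompose the expected regret over the four stages of Algorithm~\ref{alg:double-exploration-unknown-gap} and show that only \stageThree contributes $\Theta(\log T)$ asymptotically. Define the \emph{good event} $\mathcal{G} = \{1' = 1\}$ that \stageOne terminates with the empirically best arm equal to the truly best arm. By standard subgaussian concentration, the test statistic $|\hmu_1(t) - \hmu_2(t)|$ is within $O(\sqrt{\log(T_1/t)/t})$ of $\Delta$, so the stopping rule triggers around $t \asymp 16\log(T_1)/\Delta^2$; with $T_1 = \log^2 T$, the expected length of \stageOne is $O(\log\log T/\Delta^2)$, contributing only $O(\log\log T/\Delta) = o(\log T)$ regret. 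A maximal inequality over the possible stopping times (peeling in $t$) further gives $\mathbb{P}(\mathcal{G}^c) = o(1/\log T)$, so \stageTwo contributes no regret on $\mathcal{G}$ (arm $1$ is pulled for $T_1$ steps) and at most $\Delta \cdot T_1 \cdot \mathbb{P}(\mathcal{G}^c) = o(\log T)$ on $\mathcal{G}^c$.

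The dominant contribution comes from \stageThree, analyzed conditional on $\mathcal{G}$. At the start of the stage, $\mu' = \hmu_{1'}$ is based on at least $T_1 = \log^2 T$ samples, so $|\mu' - \mu_1| = O(1/\log T)$ with high probability. As arm $2' = 2$ is pulled $t_2$ times, $\theta_{2', t_2}$ concentrates at $\mu_2$, hence $|\mu' - \theta_{2', t_2}| \to \Delta$. The stopping condition is $t_2(\mu' - \theta_{2', t_2})^2 \geq 2\log\bigl(T/(t_2(\log^2(T/t_2) + 1))\bigr)$; plugging in $\mu' - \theta_{2', t_2} \approx \Delta$ and the ansatz $t_2 \asymp \log T / \Delta^2$ makes the right-hand side $(1 + o(1)) \cdot 2\log T$, so the stopping time $t_2^\star$ satisfies $t_2^\star = (2 + o(1))\log T/\Delta^2$ in expectation. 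Multiplying by the per-pull regret $\Delta$ yields a \stageThree regret of $(2 + o(1))\log T/\Delta$, which is exactly the target rate.

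On $\mathcal{G}$, \stageFour commits to arm $1$ unless $\theta_{2', t_2^\star} > \hmu_{1'}$, which by the \stageThree design is an atypical event. The delicate point is that the double-log factor $\log(T/(t_2(\log^2(T/t_2)+1)))$ in the threshold is tuned so that a time-uniform concentration argument (peeling over dyadic blocks of $t_2$, in the style of boundary-crossing inequalities for subgaussian walks) gives $\mathbb{P}(\text{miscommit at end of \stageThree}) = o(1/T)$. Consequently, the worst-case cost $\Delta \cdot T$ of committing to the wrong arm in \stageFour contributes only $o(\log T)$. Summing all four stages yields $\limsup_{T \to \infty} R_\mu(T)/\log T \leq 2/\Delta$, and the matching lower bound is \eqref{def:asy_opt_unknown} specialized to $K = 2$, giving the claimed equality.

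\textbf{Main obstacle.} The hard part will be the uniform-in-$t_2$ control of the miscommit probability together with the sharp expectation bound $\mathbb{E}[t_2^\star] \leq (2 + o(1))\log T/\Delta^2$: the double-log correction in the threshold is just large enough to beat the $T$ factor in \stageFour but small enough not to inflate $t_2^\star$ beyond the optimal $2\log T/\Delta^2$, and any looseness here would destroy the matching constant $2/\Delta$. Handling the randomness in $\mu'$ (from the finite \stageTwo sample) while simultaneously peeling in $t_2$ is the main technical hurdle.
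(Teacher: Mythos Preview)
Your overall plan matches the paper's approach closely: stage-wise decomposition, a good event controlling the identity of $1'$ and the concentration of $\mu'$, Lemma-style time-uniform (peeling) control of the \stageThree stopping time and of the \stageFour miscommit probability. The identification of \stageThree as the sole $\Theta(\log T)$ contributor and the heuristic for $t_2^\star=(2+o(1))\log T/\Delta^2$ are exactly right.

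There is, however, one genuine gap. You analyze \stageThree and \stageFour only on $\mathcal G=\{1'=1\}$, and on $\mathcal G^c$ you account solely for the \stageTwo cost $\Delta T_1\,\PP(\mathcal G^c)$. But on $\mathcal G^c$ the algorithm still runs \stageThree and \stageFour, and a wrong commit in \stageFour costs $\Delta T$. Your bound $\PP(\mathcal G^c)=o(1/\log T)$ (in fact $O(1/T_1)=O(1/\log^2 T)$) is nowhere near small enough to absorb a $\Delta T$ factor. The paper handles this by conditioning symmetrically on $E_1=\{E,\,1'=1\}$ and $E_2=\{E,\,1'=2\}$: when $1'=2$, \stageThree pulls the optimal arm (zero regret), $\mu'$ concentrates around $\mu_2$, and the same boundary-crossing argument shows the \stageFour miscommit probability is $O(1/(T\Delta^2))$ on $E_2$ as well. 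You need this symmetric branch; without it the \stageFour term on $\mathcal G^c$ is uncontrolled.

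A smaller quantitative point: with $T_1=\log^2 T$ samples, the deviation $|\mu'-\mu_{1'}|$ that holds with failure probability $O(1/T)$ is of order $\sqrt{\log T/T_1}=O(1/\sqrt{\log T})$, not $O(1/\log T)$ as you write. The paper encodes this via $\epsilon_T=\sqrt{2\log(T\Delta^2)/(T_1\Delta^2)}\to 0$; your conclusion $t_2^\star=(2+o(1))\log T/\Delta^2$ survives because $(1-\epsilon_T)^{-2}\to 1$, but the stated precision for $\mu'$ is off and would not by itself give failure probability small enough for the \stageFour step.
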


The proof of Theorem \ref{theorem:unknowndelta} can be found in Section \ref{sec:proof_of_thm_unknowdelta_2arm}. Here we provide some comparison between existing algorithms and Algorithm \ref{alg:double-exploration-unknown-gap}. For two-armed bandits, \citet{lai1985asymptotically} proved that the asymptotically optimal regret  rate is $2 /\Delta$. This optimal bound has been achieved by a series of fully sequential bandit algorithms such as UCB~\citep{garivier2011kl,lattimore2018refining}, Thompson sampling~\citep{agrawal2017near}, Ada-UCB~\citep{kaufmann2018bayesian}, etc. All these algorithms are fully sequential, which means they have to examine the outcome from current pull before it can decide which arm to pull in the next time step. In contrast, DETC (Algorithm \ref{alg:double-exploration-unknown-gap}) is non-fully-sequential and separates the exploration and exploitation stages, which is much more practical in many real world applications such as clinical trials and crowdsourcing. In particular, DETC can be easily adapted to batched bandits and achieve a much smaller round complexity than these fully sequential algorithms. We will elaborate this in Section \ref{sec:batch_unknown}. 

Compared with other ETC algorithms in the unknown gap setting, ~\citet{garivier2016explore}  proved a lower bound ${4}/{\Delta}$ for `single' explore-then-commit algorithms, while the regret upper bound of DETC is improved to $2/\Delta$.  Therefore, in order to break the ${4 }/{\Delta}$  barrier in the asymptotic regret  rate, our double exploration technique in Algorithm \ref{alg:double-exploration-unknown-gap} is crucial. Different from DETC in the known gap setting, Theorem \ref{theorem:unknowndelta} does not say anything about the minimax or instance-dependent optimality of   Algorithm~\ref{alg:double-exploration-unknown-gap}. 
Because we need to guess the gap $\Delta$ during the exploration process in $\stageOne$ and $\stageThree$, additional errors may be introduced if the guess is not accurate enough. We will discuss this in details in the next section.


\subsection{Minimax and Asymptotically Optimal DETC}
If we compare the sopping rules of the exploration stages in Algorithm \ref{alg:double-exploration-known-gap} and Algorithm \ref{alg:double-exploration-unknown-gap}, we can observe that the stopping rule in the known gap setting (Algorithm \ref{alg:double-exploration-known-gap}) depends on the gap $\Delta$ (more specifically, it depends on the quantity $1/\Delta^2$ according to our analysis of the theorems in the appendix). In Algorithm \ref{alg:double-exploration-unknown-gap}, the gap $\Delta$ is unknown and guessed by the decision maker. This causes problems when the unknown $\Delta$ is too small (e.g., $\Delta=1/T^{0.1}$), where $1/\Delta^2$ is significantly large than $\log^2 T$. In this  case, after $T_1=\log^{10} T$ pulls of arm $1'$ in $\stageTwo$ of Algorithm \ref{alg:double-exploration-unknown-gap}, the average reward of $1'$ may not be close to its mean reward within a  $\Delta$ range. Hence, it fails to achieve the instance-dependent/minimax optimality.


Now we are going to show that a simple variant of Algorithm~\ref{alg:double-exploration-unknown-gap} with additional stopping rules is simultaneously minimax/instance-dependent order-optimal and asymptotically optimal.

The new algorithm is displayed in Algorithm~\ref{alg:double-exploration-unknown-gap-minimax} which has the same input, initialization, $\stageOne$ and $\stageTwo$ as Algorithm~\ref{alg:double-exploration-unknown-gap}. In \stageThree, we add an additional stopping rule $t_2<\log^2 T$ and everything else remains unchanged as in Algorithm \ref{alg:double-exploration-unknown-gap}.
The most notable change is  in $\stageFour$ of Algorithm \ref{alg:double-exploration-unknown-gap-minimax}. Instead of directly committing to the arm with the largest average reward, we will first find out how many pulls are required in $\stageThree$ to distinguish the two arms. The number of pulls in $\stageThree$ is denoted by $t_2$. If $t_2<\log^2 T$, then we just commit to the arm with the largest average reward and pull it till the end of the algorithm. However, if $t_2\geq \log^2 T$, this would mean that the gap $\Delta$ between two arms is extremely small.
In fact, we will prove that if we need to pull arm $2'$ for $\log^2 T$ times to distinguish it from arm $1'$, then with high probability the gap $\Delta$ is very small.  Consequently, we need to explore both arms again to obtain accurate estimate of their mean rewards. In a nutshell, the early stopping rule $t_2\geq\log^2T$ helps us detect the scenario with small $\Delta$ with high probability,  
which ensures the minimax/instance-dependent optimality of Algorithm~\ref{alg:double-exploration-unknown-gap-minimax}. Moreover, we will show that this condition is only violated with a tiny probability that goes to zero as $T\rightarrow\infty$, which ensures that the regret is still asymptotically optimal. 
\begin{theorem}
\label{theorem:unknowndelta-minimax}
Let $T_1=\log^{10} T$. Assume $T\Delta^2\geq 16e^3$, then the regret of Algorithm \ref{alg:double-exploration-unknown-gap-minimax} satisfies 
\begin{align*}
    \lim_{T\rightarrow \infty} R_{\mu}(T)/\log T =2/\Delta \qquad \text{ and}  \qquad  R_{\mu}(T)=O(\Delta+\log(T\Delta^2)/\Delta)=O(\Delta+\sqrt{T}).
\end{align*}  
\end{theorem}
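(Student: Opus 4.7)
The plan is to mimic the regret decomposition used for Algorithm~\ref{alg:double-exploration-unknown-gap} in Theorem~\ref{theorem:unknowndelta}, while separately controlling the new cap event $\mathcal{A}=\{t_2\geq \log^2 T\}$ in \stageThree{} that distinguishes Algorithm~\ref{alg:double-exploration-unknown-gap-minimax}. I would decompose $R_{\mu}(T)$ into per-stage contributions and then, within \stageThree{}/\stageFour{}, further condition on $\mathcal{A}$ to separate the ``normal'' regime (yielding asymptotic optimality) from the ``small-$\Delta$'' regime (yielding minimax and instance-dependent optimality).

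First I would set up high-probability good events analogous to those in the proof of Theorem~\ref{theorem:unknowndelta}. Let $\tau_1$ denote the length of \stageOne. A standard sub-Gaussian tail bound applied to $\hmu_1(t)-\hmu_2(t)$ shows that with probability at least $1-O(1/(T\Delta^2))$ the algorithm commits to the correct arm $1'=1$ and $\tau_1=(4+o(1))\log(T_1)/\Delta^2=O(\log\log T/\Delta^2)$, so \stageOne{} contributes $O(\log\log T/\Delta)=o(\log T/\Delta)$. Conditional on $1'=1$, \stageTwo{} contributes no instantaneous regret, while on the failure event $\{1'=2\}$ its contribution is at most $T_1\Delta\cdot O(1/(T\Delta^2))=O(\log^{10} T/(T\Delta))$ in expectation, which is negligible on both the asymptotic scale $\log T/\Delta$ and the minimax scale $\sqrt{T}$ (using $T\Delta^2\geq 16e^3$).

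Next I would analyze \stageThree{} by splitting on $\mathcal{A}$. Conditioning on $1'=1$ and on the concentration $|\mu'-\mu_1|=O(1/\log^4 T)$ at the end of \stageTwo{} (which holds with probability $1-O(1/\log^{10} T)$ by Hoeffding on $T_1=\log^{10}T$ samples), the natural stopping rule $|\mu'-\theta_{2',t_2}|\geq \sqrt{2 t_2^{-1}\log\bigl(T t_2^{-1}(\log^2(T/t_2)+1)\bigr)}$ is met at $t_2=(2+o(1))\log T/\Delta^2$ by the same peeling / law-of-the-iterated-logarithm argument used for Algorithm~\ref{alg:double-exploration-unknown-gap}. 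Whenever $\Delta$ is bounded away from zero, for $T$ large enough this forces $t_2<\log^2 T$ with probability $1-o(1/\log T)$, so \stageThree{} contributes $(2+o(1))\log T/\Delta$ and \stageFour{} contributes $o(\log T)$; combined with the lower bound~\eqref{def:asy_opt_unknown}, this yields $\lim_{T\to\infty} R_{\mu}(T)/\log T=2/\Delta$.

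For the minimax / instance-dependent bound $O(\Delta+\log(T\Delta^2)/\Delta)$, the hard case is when $\mathcal{A}$ is triggered. Under the good events above, $\mathcal{A}$ can only fire if $\log^2 T\cdot \Delta^2 = O(\log T)$, i.e., $\Delta=O(1/\sqrt{\log T})$, which places us in the regime where $\sqrt{T}\gtrsim \log(T\Delta^2)/\Delta$. In this case the re-exploration subroutine invoked by \stageFour{} uniformly samples both arms under a stopping rule analogous to that of Algorithm~\ref{alg:double-exploration-known-gap}; by the argument of Theorem~\ref{theorem:knowndelta_2arm} it terminates within $O(\log(T\Delta^2)/\Delta^2)$ rounds and misidentifies the best arm with probability $O(1/(T\Delta^2))$. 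Its regret contribution is $O(\log(T\Delta^2)/\Delta)$ from the exploration itself, plus $O(T\Delta\cdot 1/(T\Delta^2))=O(1/\Delta)$ from the tail event of committing wrongly, matching the stated bound; optimizing over $\Delta$ subject to $T\Delta^2\geq 16e^3$ gives $O(\Delta+\sqrt{T})$.

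The main obstacle is the uniform concentration of $\theta_{2',t_2}$ over $1\leq t_2\leq \log^2 T$ needed to show simultaneously that $\Pr(\mathcal{A})$ is small enough in the large-$\Delta$ regime not to corrupt the asymptotic constant $2/\Delta$, and that $\mathcal{A}$ reliably detects the small-$\Delta$ regime so the minimax bound goes through. This is handled by a geometric peeling across dyadic scales of $t_2$ together with the time-uniform deviation encoded in the $\log(\log^2(T/t_2)+1)$ correction inside the stopping rule. A subtler point is that the contribution $T_1\Delta\cdot\Pr(1'\neq 1)$ from \stageTwo{} must vanish even after being amplified by conditioning on $\mathcal{A}$, which is precisely why $T_1$ is enlarged from $\log^2 T$ in Algorithm~\ref{alg:double-exploration-unknown-gap} to $\log^{10} T$ here.
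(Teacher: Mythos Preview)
Your overall strategy matches the paper's: reuse the Theorem~\ref{theorem:unknowndelta} decomposition for asymptotic optimality, and treat the cap event $\mathcal{A}=\{t_2\ge\log^2 T\}$ separately for the instance-dependent/minimax bound. Two numerical claims need correction: $\Pr(1'\neq1)=O(1/(T_1\Delta^2))$ by Lemma~\ref{lem:peeling-simp}, not $O(1/(T\Delta^2))$; and Hoeffding on $T_1=\log^{10}T$ samples at deviation $1/\log^4 T$ gives failure probability $\exp(-\Theta(\log^2 T))\ll 1/T$, not $O(1/\log^{10}T)$. The second slip matters: a $1/\log^{10}T$ bound would let the bad-$\mu'$ contribution $\Delta T\cdot\Pr(\text{bad }\mu')$ destroy the minimax guarantee.

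The more substantive issue is that you have the hard case for the minimax bound backwards. When $\mathcal{A}$ fires, the fresh re-exploration in \stageFour{} is a vanilla ETC and the $O(\log(T\Delta^2)/\Delta)$ bound is immediate; nothing delicate happens there. The crux is the term $\Delta T\,\Pr(\tau_2<\log^2 T,\ a=2)$ when $\Delta$ is tiny (the paper in fact splits the minimax argument on $\Delta\gtrless 1/\log^4 T$, not on $\mathcal{A}$). In that regime $T_1\Delta^2\ll1$, so $\mu'$ is \emph{not} within $\Delta/2$ of $\mu_{1'}$, and the \stageThree{} argument that gave $\Pr(a=2)=O(1/(T\Delta^2))$ in Theorem~\ref{theorem:unknowndelta} collapses entirely. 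The paper instead shows, via a dedicated peeling bound (Lemma~\ref{lem:colt17-1}), that conditioned on $|\mu'-\mu_{1'}|\le 1/\log^4 T$ the \stageThree{} stopping rule simply does not fire before the cap except with probability $O(\log T/T)$, so the whole term is $O(\Delta\log T)\le 1$. You allude to this (``$\mathcal{A}$ reliably detects the small-$\Delta$ regime'') but frame it as a side obstacle; it is the heart of the proof. It is also the real reason for $T_1=\log^{10}T$: one needs $T_1\Delta^2\gg\log T$ all the way down to $\Delta\approx 1/\log^4 T$ so that the Theorem~\ref{theorem:unknowndelta} analysis still yields the instance-dependent bound there, and simultaneously $|\mu'-\mu_{1'}|\le 1/\log^4 T$ with probability $1-O(1/T)$ in the complementary range. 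The quantity $T_1\Delta\cdot\Pr(1'\neq1)$ you cite is $O(1/\Delta)$ for any choice of $T_1$ and is not the bottleneck.
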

The proof of Theorem~\ref{theorem:unknowndelta-minimax} can be found in Section \ref{sec:proof_of_two_arm_all_optimal}. 
This theorem states that Algorithm \ref{alg:double-exploration-unknown-gap-minimax}  achieves the instance-dependent/minimax and the asymptotic optimality regret simultaneously.  This is the first ETC-type algorithm that achieves these three optimal regrets simultaneously. For a two-armed bandit problem, the simultaneously instance-dependent/minimax and asymptotically optimal is also achieved by $\Delta$-UCB~\citep{garivier2016explore} and ADA-UCB~\citep{lattimore2018refining}. However, both of them are fully sequential. In contrast, our DETC algorithm shows that a non-fully-sequential algorithm can also obtain the three optimality simultaneously.  Apart from the advantages of achieving these optimalities at the same time, we emphasize that  Algorithm~\ref{alg:double-exploration-unknown-gap-minimax} also provides a  framework on how to combine an asymptotically optimal algorithm with a minimax/instance-dependent optimal algorithm. Specifically, in Algorithm~\ref{alg:double-exploration-unknown-gap-minimax}, the first part (Lines~\ref{line-alg3-1}-\ref{line-alg3-11}) of Algorithm~\ref{alg:double-exploration-unknown-gap-minimax} ensures the asymptotic optimality and the second part (Lines~\ref{line-alg3-11}-\ref{line-alg3-20}) of Algorithm \ref{alg:double-exploration-unknown-gap-minimax} ensures the minimax/instance-dependent optimality.  Following our proof of Theorem~\ref{theorem:unknowndelta-minimax} in Section \ref{sec:proof_of_two_arm_all_optimal}, one can easily verify that the second part (Lines~\ref{line-alg3-11}-\ref{line-alg3-20}) can be replaced by any other algorithm that is instance dependent optimal and  Theorem~\ref{theorem:unknowndelta-minimax} still holds. The main reason that two optimality algorithm can be combined here is that: (i): the asymptotic optimality focuses on the case that $T\rightarrow \infty$, and hence $T$ should dominate $1/\Delta$; (ii) the minimax optimality focuses on the worst case bandits  for a fixed $T$, and hence $\Delta$ could be very small (e.g., $\Delta=1/T^{0.1}$); (iii) our framework can detect if $\Delta$ is very small  via the stopping rule $t_2<\log^2 T$ in Line~\ref{line-alg3-4} of Algorithm~\ref{alg:double-exploration-unknown-gap-minimax}.


\begin{algorithm}[t]
 \small
\caption{Minimax and Asymptotically Optimal DETC in the Unknown Gap Setting}\label{alg:double-exploration-unknown-gap-minimax}
  \KwIn{$T$, $T_1$} 
 \textbf{Initialization:} Pull arms $A_1=1$, $A_2=2$, $t\leftarrow 2$; \label{line-alg3-1} \\  
\nonl \emph{Stage I: Explore all arms uniformly} \hfill(same as in Algorithm \ref{alg:double-exploration-unknown-gap})\\
\nonl \emph{Stage II: Commit to the arm with the largest average reward}
\hfill(same as in Algorithm \ref{alg:double-exploration-unknown-gap})\\
\nonl \hrulefill\\ 
\nonl \textit{Stage III: Explore the unchosen arm in Stage II} \\
$\mu'\leftarrow\hat{\mu}_{1'}(t) $, $2'\leftarrow \{1,2\}\setminus {1'}$\; 
Pull arm $ A_{t+1}=2'$ and observe reward $r_{t+1}$, $\theta_{2',1}=r_{t+1}$, $t\leftarrow t+1$, $t_2\leftarrow 1$\;
  \While{$|\mu'-\theta_{2',t_2}|<\sqrt{2/t_2\log\big(eT/t_2\big(\log^2(T/t_2)+1\big)\big)}$ and $t_2<\log^2 T$ \label{line-alg3-4}}
 {Pull arm $A_{t+1}=2'$ and observe reward $r_{t+1}$\; $\theta_{2',t_2+1}=(t_2\theta_{2',t_2}+r_{t+1})/(t_2+1)$, $t\leftarrow t+1$, $t_2\leftarrow t_2+1$\;}
\nonl \hrulefill\\ 
\nonl \textit{Stage IV: Commit to the arm with the largest average reward } \\
  \If{$t_2< \log^2 T$}
    { 
    $a\leftarrow 1'\ind\{\hat{\mu}_{1'}(t)\geq\theta_{2',t_2}\}+2'\ind\{\hat{\mu}_{1'}(t)<\theta_{2',t_2}\}$\;
     \While{$t\leq {T}$}
     { 
            Pull arm $a$, $t\leftarrow t+1$\;}
   \label{line-alg3-11} }
 \Else 
  {
   Pull arms $A_{t+1}=1$, $A_{t+2}=2$ and observe rewards $r_{t+1}$ and $r_{t+2}$\;
   $p_{1,1}=r_{t+1}$, $p_{2,1}=r_{t+2}$, $t\leftarrow t+2$, $s\leftarrow 1$\;\label{line-pis} 
   \While{$ |p_{1,s}-p_{2,s}|<\sqrt{8/s\log^{+}(T/s)}$ \label{line-alg3-18}} 
       {
          Pull arms  $A_{t+1}=1$ and $A_{t+2}=2$, and observe rewards $r_{t+1}$ and $r_{t+2}$\;
          $p_{1,s+1}=(s\cdot p_{1,s}+r_{t+1})/(s+1)$, $p_{2,s+1}=(s\cdot p_{2,s}+r_{t+2})/(s+1)$\; $t\leftarrow t+2$, $s\leftarrow s+1$\;
      }
         $a\leftarrow 1\ind\{p_{1,s}\geq p_{2,s}\}+2\ind\{p_{2,s}\geq p_{1,s}\}$\;
    \While{$t\leq {T}$}
      { 
          Pull arm $a$, $t\leftarrow t+1$. \label{line-pis-1} 
     }
 \label{line-alg3-20}}
\end{algorithm}


\section{Double Explore-then-Commit for $K$-Armed Bandits}
\label{sec:karm}
In this section, we extend our DETC framework to $K$-armed bandit problems, where $K> 2$. Due to the similarity in both structures and analyses between Algorithm \ref{alg:double-exploration-known-gap} for the known gap setting and Algorithm \ref{alg:double-exploration-unknown-gap} for the unknown gap setting, we only present the $K$-armed bandit algorithm for the unknown gap setting, which is usually more general in practice and challenging in analysis.

We present the double explore-then-commit algorithm for $K$-armed bandits in Algorithm \ref{alg:double-exploration-unknown-gap-K}. Similar to Algorithm \ref{alg:double-exploration-unknown-gap} for two-armed bandits, the algorithm proceeds as follows: (1) in $\stageOne$, we uniformly explore over all the $K$ arms; (2) in $\stageTwo$, we pull the arm with the largest average reward; (3) in $\stageThree$, we aim to ensure that the difference between the chosen arm $1'$ in $\stageTwo$ and unchosen arms is sufficient by pulling all the unchosen arm  $i'$ ($i\geq 2$) repeatedly until the average reward of arm $i'$ collected in this stage can be clearly distinguished from the average reward of arm $1'$. We set a check flag $\failflag$ initialized as $0$, which will be set to $1$ if any unchosen arm $i'$ is pulled for $\log^2T$ times; (4) in $\stageFour$, if $\failflag=0$ and $\hat\mu_{1'}$ is larger than the recalculated average reward for any other arm, then we pull  $1'$ till the end. Otherwise, $1'$ may not be the best arm.  Then we pull all arms $\log^2 T$ times, and pull the arm with the largest recalculated average reward till the end.

\begin{algorithm}[t]
 \small
   \caption{Double Explore-then-Commit  for $K$-Armed Bandits (DETC-K)}
   \label{alg:double-exploration-unknown-gap-K}
  \KwIn{$T$,  $K$.} 
  \textbf{Initialization: } $t\leftarrow 0$\;
 \nonl\hrulefill \\
 \nonl\textit{Stage I: Explore all arms uniformly} \\  
  \While{$t\leq K\sqrt{\log T}$}{Pull every arm once, $t\leftarrow t+K$\;}
   \nonl\hrulefill\\ \nonl\textit{Stage II: Commit to the arm with the largest average reward}  \\
  $1'\leftarrow \arg \max_{k} \hat{\mu}_k(t)$, $s\leftarrow 0$, $p_0\leftarrow 0$\;
  \While{$s\leq\log^2 T$}{Pull arm $A_{t+1}=1'$ and observe reward $r_{t+1}$\;
  $p_{s+1}=(s\cdot p_{s}+r_{t+1})/(s+1)$, $s\leftarrow s+1$, $t\leftarrow t+1$\;
  }
  \nonl \hrulefill\\ \nonl \textit{Stage III: Explore the unchosen arm in Stage II} \\
   $\mu'\leftarrow p_s$,  Denote $\{2',\cdots,K'\}=\{1,2,\cdots,K\}\setminus \{1'\}$\;
 \For{$i=2,3,\cdots,K$}
  {
  $t_i\leftarrow 1$,  $\theta_{i',0}=0$\;
 \While{{$|\mu'-\theta_{i',t_i}|<\sqrt{2/t_i\log\big(T/t_i\big(\log^2(T/t_i)+1\big)\big)}$ and $t_i\leq \log^2 T$}}
 {
  Pull arm $i'$ and observe reward $r_{t+1}$\;
  $\theta_{i',t_i+1}=(t_i\cdot \theta_{i',t_i}+r_{t+1})/(t_i+1)$, $t\leftarrow t+1$, $t_i\leftarrow t_i+1$\;
 }
 \If{ $t_i> \log^2 T$}
  { $\failflag\leftarrow 1$ and \textbf{break\;}
  }
 }
  \nonl\hrulefill\\ \nonl\textit{Stage IV: Commit to the arm with the largest average reward } \\
   $j':=\max_{i'} \theta_{i't_i}$\;
  \If{$\hat{\mu}_{1'}\geq\theta_{j't_j}$ and $\failflag=0$}
  {
  Let $a\leftarrow 1'$\;
 \While{$t< T$} {
  Pull arm $a$, $t\leftarrow t+1$\;}
  }
  \Else 
  { Pull every arm $\log^2 T$ times and let $a$ be the arm with the largest average reward for this pull\; \label{line:k1}
  Pull arm $a$ till $T$ time steps.\label{line:k2}
 }
\end{algorithm}

Now we present the regret bound of Algorithm \ref{alg:double-exploration-unknown-gap-K}.
\begin{theorem}\label{theorem:unknowndelta_Karm} 
The regret of Algorithm~\ref{alg:double-exploration-unknown-gap-K} with 1-subgaussian rewards satisfies
\begin{equation}\label{eq:asymptotic_rate_k}
    \lim_{T \rightarrow\infty}R_{\mu}(T)/\log (T)=\textstyle{\sum_{i: \Delta_i>0}}2/ \Delta_i.
\end{equation}
\end{theorem}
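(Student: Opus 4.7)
The plan is to extend the four-stage analysis of Theorem~\ref{theorem:unknowndelta} to $K$ arms and decompose the expected regret $R_\mu(T) = R_{\mathrm{I}}+R_{\mathrm{II}}+R_{\mathrm{III}}+R_{\mathrm{IV}}$ across the four stages, controlling each piece on an explicit good event. Assume without loss of generality that arm~$1$ is optimal, and define $\mathcal{E}_1 = \{1'=1\}$ (Stage~I selects the optimal arm), $\mathcal{E}_2 = \{|\mu'-\mu_1|\le 1/\log T\}$ (the Stage~II estimate of arm~$1$ is accurate), $\mathcal{E}_3 = \{\failflag = 0\}$, and $\mathcal{E}_4 = \{\hmu_{1'}\ge \max_{i\ge 2}\theta_{i',t_i}\}$. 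On $\bigcap_j \mathcal{E}_j$ the \emph{if} branch of Stage~IV fires and commits to arm~$1$ for the remainder of the horizon, so Stages~I--III are the only contributors to the asymptotic rate.

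For Stage~I, the cost is trivially $O(K\Delta_{\max}\sqrt{\log T})=o(\log T)$. For Stage~II, $1$-subgaussian concentration over the $\sqrt{\log T}$ Stage~I samples per arm together with a union bound over the $K-1$ suboptimal arms give $\Pr(\mathcal{E}_1^c)\le (K-1)\exp(-\sqrt{\log T}\,\Delta_{\min}^2/4)$, which decays faster than any power of $1/\log T$, so the $\log^2 T$ pulls of Stage~II contribute $o(\log T)$; a Hoeffding bound on the $\log^2 T$ Stage~II samples then gives $\Pr(\mathcal{E}_2^c\mid\mathcal{E}_1)\le 2\exp(-\log^2 T/2)$. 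The dominant contribution is Stage~III. Conditioned on $\mathcal{E}_1\cap\mathcal{E}_2$, the stopping rule for arm~$i'$ triggers once $|\mu'-\theta_{i',t_i}|$ crosses the threshold $\sqrt{(2/t_i)\log(T/t_i(\log^2(T/t_i)+1))}$; since $\mu'-\mathbb{E}[\theta_{i',t_i}] = \Delta_i(1-o(1))$, inverting the threshold at $\Delta_i$ gives $t_i\approx 2\log T/\Delta_i^2$, and a one-sided concentration argument yields $\mathbb{E}[t_i\mid\mathcal{E}_1\cap\mathcal{E}_2] \le (2+o(1))\log T/\Delta_i^2$. Summing,
\[
R_{\mathrm{III}} \le \sum_{i:\Delta_i>0}\Delta_i\,\mathbb{E}[t_i] = \sum_{i:\Delta_i>0}\frac{2\log T}{\Delta_i}\bigl(1+o(1)\bigr).
\]

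Stage~IV contributes zero on $\bigcap_j\mathcal{E}_j$; on the complementary \emph{else} branch (Lines~\ref{line:k1}--\ref{line:k2}) it may commit to a suboptimal arm for up to $T$ steps, contributing at most $T\Delta_{\max}\cdot\Pr\bigl((\mathcal{E}_3\cap\mathcal{E}_4)^c\bigr)$, which I absorb into $o(\log T)$ by showing $\Pr(\mathcal{E}_3^c),\Pr(\mathcal{E}_4^c)=o(1/T)$. Combined with the classical Lai--Robbins lower bound $\liminf R_\mu(T)/\log T\ge \sum_{i:\Delta_i>0}2/\Delta_i$, this yields the claimed limit. The main obstacle is the uniform concentration analysis that underlies the Stage~III upper bound on $\mathbb{E}[t_i]$ and the $o(1/T)$ control of $\Pr(\mathcal{E}_3^c\cup\mathcal{E}_4^c)$: one must prove, simultaneously for all $t_i\in\{1,\ldots,\log^2 T\}$ and all $i\ge 2$, that $|\theta_{i',t_i}-\mu_i|$ stays within a shrinking envelope of the stopping threshold. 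The $\log^2(T/t_i)+1$ factor in the threshold is tailored so that a peeling union bound across geometric scales of $t_i$ produces tail probabilities $O(1/T)$, and the additional union bound over the $K-1$ suboptimal arms costs only a constant factor; this peeling step, which is the analogue of the one used in the proof of Theorem~\ref{theorem:unknowndelta}, together with its $K$-way extension, is the only genuinely new ingredient.
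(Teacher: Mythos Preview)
Your stage-by-stage decomposition and the peeling/concentration control of the Stage~III stopping times match the paper's approach, and your treatment of Stages~I--III is essentially correct. The Stage~IV accounting, however, has two real gaps. First, the claim $\PP(\mathcal{E}_3^c)=o(1/T)$ fails in general: if two suboptimal arms share the same mean, then on $\{1'\ne 1\}$ Stage~III must distinguish two arms with zero gap and will hit the $\log^2 T$ cap with high probability, so $\PP(\failflag=1)$ is at least of order $\PP(\mathcal{E}_1^c)=\exp(-\Theta(\sqrt{\log T}))$, which is not $o(1/T)$. Hence the bound $T\Delta_{\max}\cdot\PP((\mathcal{E}_3\cap\mathcal{E}_4)^c)$ blows up. The fix is not to sharpen this probability but to use the structure of the \emph{else} branch: Lines~\ref{line:k1}--\ref{line:k2} pull every arm $\log^2 T$ times \emph{before} committing, so the else-branch regret splits as $(\log^2 T\sum_i\Delta_i)\cdot\PP(\text{else fires})+T\Delta_{\max}\cdot\PP(\text{else-commit is suboptimal})$. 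The second probability is at most $K\exp(-\Delta_{\min}^2\log^2 T/4)=o(1/T)$ unconditionally, and the first term only needs $\PP(\text{else fires})=o(1/\log T)$, which your events do deliver.

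Second, you miss the case where the \emph{if} branch fires but commits to a suboptimal arm: on $\mathcal{E}_1^c\cap\mathcal{E}_3\cap\mathcal{E}_4$ the algorithm sets $a=1'\ne 1$ and pulls it for the remaining $\Theta(T)$ steps. This event lies in $(\bigcap_j\mathcal{E}_j)^c$ but is \emph{not} the else branch, so your bound $T\Delta_{\max}\cdot\PP((\mathcal{E}_3\cap\mathcal{E}_4)^c)$ does not cover it, and $\PP(\mathcal{E}_1^c)$ alone is far too large to absorb a factor of $T$. Controlling it requires a separate argument: when $1'\ne 1$, the optimal arm~$1$ is among those explored in Stage~III, and the third statement of Lemma~\ref{lem:colt17} applied to its stopping rule shows it terminates with $\theta_{1,t_1}>\mu'$ except with probability $O(1/(T\Delta_{1'}^2))$, so $\PP(\mathcal{E}_1^c\cap\mathcal{E}_4)=O(1/T)$ and this case contributes $O(1)$ to the regret. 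The paper isolates exactly this as its term $I_3$.
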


The proof of Theorem \ref{theorem:unknowndelta_Karm} can be found in Section \ref{sec:proof_of_thm_k_arm}. 
In the second case of $\stageFour$ of Algorithm \ref{alg:double-exploration-unknown-gap-K}, we actually believe that we have failed to choose the best arm via previous stages and need to explore again for a fixed number of pulls ($\log^2 T$) for all arms and commit to the best arm based on the pulling results. Note that this can be seen as the naive ETC strategy with fixed design \citep{garivier2016explore}, which has an asymptotic regret rate $4/\Delta$. Fortunately, Theorem \ref{theorem:unknowndelta_Karm} indicates our DETC algorithm can still achieve the asymptotically optimal regret for $K$-armed bandits \citep{lai1985asymptotically}.
This means that the probability of failing in the first three stages of Algorithm \ref{alg:double-exploration-unknown-gap-K} is rather small and thus the extra ETC step does not affect the asymptotic regret of our DETC algorithm.
Lastly, it would be an interesting problem to extend the idea of Algorithm \ref{alg:double-exploration-unknown-gap-minimax} in two-armed bandits to $K$-armed
bandits, where simultaneously achieving the  instance-dependent and asymptotically optimal regret is still an  open problem~\citep{agrawal2017near,lattimore2018refining}.  







\section{An Anytime Algorithm with Asymptotic Optimality}
\label{sec:anytime}
In previous sections, the stopping rules of DETC depend on the horizon length $T$. However, this may not be the case in some practical cases, where we prefer to stop the algorithm at an arbitrary time without deciding it at the beginning. This is referred to as the anytime setting in the bandit literature \citep{degenne2016anytime,lattimore2018bandit}. In this section, we provide an extension of our DETC algorithm for two-armed bandits to the anytime setting.
Our algorithm guesses $T$ in epochs. For the $r$-th epoch, we guess $T=2^{r+1}$. At the $r$-th epoch of the algorithm, the algorithm proceeds as follows: we  find the arm $1'$ which is the arm that played most often in the first $r-1$ epochs; then we pull arm $2'$ till the stopping rules  
\begin{align*}
    |\hat{\mu}_{1'}(t)-\hat{\mu}_{2'}(t)|<\sqrt{\frac{2}{T_{2'}(t)}\log\bigg(\frac{r\cdot 2^r}{T_{2'}(t)}\bigg(\log^2\bigg(\frac{r\cdot 2^r}{T_{2'}(t)}\bigg)+1\bigg)\bigg)} \qquad \text{and} \qquad t\leq 2^{r+1}
\end{align*}
is satisfied. The aim here is to ensure that the regret of pulling the winner $a(r)$ $2^{r}$ times is bounded.  After this step, we commit to the arm with the largest average reward. Compared with Algorithm~\ref{alg:double-exploration-unknown-gap},  in each epoch, anytime ETC seems only to perform the third stage and fourth stage of Algorithm~\ref{alg:double-exploration-unknown-gap}. The reason here is that: the first two stages of   Algorithm~\ref{alg:double-exploration-unknown-gap} aims to pull one arm $\log^2 T$ times while keeping the optimal regret. In anytime ETC algorithm, when the algorithm runs $\log^2 T$ steps,  $1'$ is the arm that pulled most often, thus $1'$ is pulled $O(\log^2 T)$ times. Besides, as we will prove later that the regret of first $\log^2 T$ steps is bounded by $O(\sqrt{\log T})$.

\begin{algorithm}[t]
   \caption{Anytime Asymptotically Optimal ETC in the Unknown Gap Setting}
   \label{alg:double-exploration-unknown-gap_anytime}
 \textbf{Initialization:} Pull arms $A_1=1$, $A_2=2$, $t\leftarrow 2$; \\
\For{$r=1,2,\cdots$}
 { $1' \leftarrow  \arg\max_{i\in\{1,2\}}T_{i}(t)$, $2'\leftarrow \{1,2\}\setminus {1'}$ \label{algline-anytime-choose-best-arm}\; 
  \While{$|\hat{\mu}_{1'}(t)-\hat{\mu}_{2'}(t)|<\sqrt{\frac{2}{T_{2'}(t)}\log\big(\frac{r\cdot 2^r}{T_{2'}(t)}\big(\log^2(\frac{r\cdot 2^r}{T_{2'}(t)}\big)+1\big)\big)}$ and $t\leq 2^{r+1}$
 \label{line-stopping} } 
 { $A_{t+1}=2'$, $t\leftarrow t+1$\; \label{line-alg5-5}}
    $a(r)\leftarrow 1'\ind\{\hat{\mu}_{1'}(t)\geq\hat{\mu}_{2'}(t)\}+2'\ind\{\hat{\mu}_{1'}(t)<\hat{\mu}_{2'}(t)\}$\;
     \While{$t\leq 2^{r+1}$
       }
        { 
            Pull arm $a(r)$, $t\leftarrow t+1$\; \label{line-alg5-8}}
}
\end{algorithm}

The following theorem shows that Algorithm \ref{alg:double-exploration-unknown-gap_anytime} is still asymptotically optimal for an unknown horizon $T$. 

\begin{theorem}
\label{thm:anytime}
The total expected regret for the anytime version of DETC (Algorithm \ref{alg:double-exploration-unknown-gap_anytime}) satisfies
$\lim_{T\rightarrow \infty}R_{\mu}(T)/\log T=2/\Delta$.
\end{theorem}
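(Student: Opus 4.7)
\textbf{Proof plan for Theorem \ref{thm:anytime}.}
The plan is to reduce the anytime analysis to a per-epoch version of the Stage III stopping-time argument used in Theorem~\ref{theorem:unknowndelta}, while carefully bookkeeping the fact that the empirical counts $T_{2'}(t)$ and means $\hmu_{1'}(t),\hmu_{2'}(t)$ accumulate across epochs rather than resetting. Throughout, let $R=\lceil\log_2 T\rceil$ denote the index of the last (partially executed) epoch. Starting from $R_\mu(T)=\Delta\cdot\EE[N_2(T)]$ with $N_2(T)=\sum_{s\leq T}\ind\{A_s=2\}$, I split $N_2(T)=A+B$, where $A$ counts arm-$2$ pulls inside the Stage III while-loop (Line \ref{line-alg5-5}) and $B$ counts arm-$2$ pulls inside the commit loop (Line \ref{line-alg5-8}). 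Stage III of epoch $r$ contributes to $A$ only when $1'(r)=1$ (so that $2'(r)=2$), and the commit loop contributes at most $2^r$ pulls to $B$ exactly when $a(r)=2$.

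The first step is an inductive ``good epoch'' argument. Let $G_r$ be the event $\{1'(r)=1\}\cap\{a(r)=1\}$. Conditional on $\bigcap_{r'<r}G_{r'}$, arm $1$ has accumulated at least $\sum_{r'<r}2^{r'}=2^r-1$ pulls from past commits, while arm $2$ has accumulated only the Stage III pulls, which Step 2 will upper bound by $O(\log(r\cdot 2^r)/\Delta^2)$. Past a deterministic threshold $r_0=r_0(\Delta)$ these two scales are incomparable, forcing $T_1(t)>T_2(t)$ at the start of epoch $r$ and hence $1'(r)=1$. Conditional on $1'(r)=1$, the event $a(r)=2$ requires $\hmu_2(t)-\hmu_1(t)$ to exceed the Line \ref{line-stopping} threshold at the exit time of Stage III; a $1$-subgaussian peeling argument over $T_2(t)$ analogous to the one used for Theorem~\ref{theorem:unknowndelta} shows this occurs with probability at most $C/(r\cdot 2^r(\log(r\cdot 2^r))^2)$, so $\EE[B]\leq\sum_{r=1}^{R}2^r\Pr(a(r)=2)=O(1)$ and does not contribute to the leading-order regret.

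The second step controls $A$ under the good event. On $\bigcap_{r'\leq r}G_{r'}$ the empirical mean of arm $1$ is $O(2^{-r/2})$-close to $\mu_1$ because arm $1$ has $\Omega(2^r)$ samples, so $\hmu_{1'}(t)-\hmu_{2'}(t)\to\Delta$ as $T_2(t)$ grows, and the while-loop exits once
\[
\Delta^2 \;\gtrsim\; \frac{2}{T_2(t)}\log\!\Bigl(\frac{r\cdot 2^r}{T_2(t)}\Bigl(\log^2\!\Bigl(\frac{r\cdot 2^r}{T_2(t)}\Bigr)+1\Bigr)\Bigr),
\]
which inverts to $T_2(t)\leq(2\log(r\cdot 2^r)/\Delta^2)(1+o(1))$. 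Since $T_2(t)$ is monotone across epochs and $A$ equals its terminal value at epoch $R$, I obtain $A\leq(2\log(R\cdot 2^R)/\Delta^2)(1+o(1))=(2\log T/\Delta^2)(1+o(1))$. Combining $\EE[A]$ and $\EE[B]$ with an $O(1)$ contribution from bad events $\sum_r\Pr(\overline{G_r})$ yields $\EE[N_2(T)]\leq(2\log T/\Delta^2)(1+o(1))$, so $\limsup_{T\to\infty}R_\mu(T)/\log T\leq 2/\Delta$; the matching $\liminf\geq 2/\Delta$ is the Lai-Robbins lower bound in \eqref{def:asy_opt_unknown} specialized to $K=2$.

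The main obstacle is Step~1: in early epochs the good event can fail, and a single $a(r)=2$ verdict inserts $\Theta(2^r)$ commit pulls of arm $2$, which could flip $1'(r+1)$ to arm $2$ and invalidate the induction. I would handle this by choosing a deterministic $r_0=r_0(\Delta)$ such that $\Pr(\overline{G_{r_0}})$ is already summably small, and then exploiting the fact that on the good event arm $1$'s commit count grows geometrically in $r$ while arm $2$'s Stage III count grows only logarithmically, so the induction becomes self-stabilizing beyond $r_0$ and the conditional bad-event probabilities are summable via the $C/(r\cdot 2^r)$ tail, keeping the aggregate bad-event regret at $O(1)$ rather than $o(\log T)$.
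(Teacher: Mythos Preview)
Your decomposition $N_2(T)=A+B$ and the idea of exploiting the monotonicity of $T_2(t)$ across epochs are natural, but the inductive good-event argument in Step~1 has a genuine gap that you correctly flag and do not actually close. The problem is not just that early epochs may fail; it is that once $\overline{G_r}$ occurs, your bounds for all \emph{subsequent} epochs evaporate. Concretely, if $a(r)=2$ then arm~2 receives $\Theta(2^r)$ commit pulls, which can force $1'(r{+}1)=2$, and from there your peeling bound on $\Pr(a(\cdot)=2)$ no longer applies because it was conditional on $1'=1$. Your proposed fix only shows that $\sum_r \Pr(\overline{G_r}\mid \bigcap_{r'<r}G_{r'})$ is summable, which controls the regret \emph{in} the first bad epoch; it says nothing about the regret in epochs $r{+}1,\dots,R$ conditional on $\overline{G_r}$, and that conditional regret can be $\Theta(T)$. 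A related slip: on $\bigcap_{r'<r}G_{r'}$ arm~1 does not receive $2^r-1$ commit pulls, only $2^r-T_2(2^r)$, since each epoch's commit phase is shortened by its Stage~III length. Also, the claim $\EE[B]=O(1)$ is too strong: the third statement of Lemma~\ref{lem:colt17} gives $\Pr(a(r)=2\mid 1'(r)=1)=O(1/(r\cdot 2^r\Delta^2))$ without the extra $(\log(r\cdot2^r))^2$ factor you invoke, so $\sum_r 2^r\cdot O(1/(r\cdot2^r\Delta^2))=O(\log\log T/\Delta^2)$, which is still $o(\log T)$ but not $O(1)$.

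The paper sidesteps the cascade entirely by \emph{not} trying to force $1'(r)=1$. Instead it splits time into three regimes ($t\le\sqrt{\log T}$, $\sqrt{\log T}\le t\le\log^2 T$, $t\ge\log^2 T$) and, in the latter two, conditions on the event $E=\{\,|\hat\mu_{1'(r)}(2^r)-\mu_{1'(r)}|<\epsilon_T\Delta\text{ for all }r\text{ in the regime}\}$, which holds with high probability because $1'(r)$, whichever arm it is, has at least $2^{r-1}$ samples. Under $E$ the per-epoch contributions $c_r^+$ (Line~\ref{line-alg5-5}) and $c_r^-$ (Line~\ref{line-alg5-8}) are bounded symmetrically in the two cases $1'(r)=1$ and $1'(r)=2$ using the second and third statements of Lemma~\ref{lem:colt17}; summing over epochs gives the leading $2\log T/\Delta^2$ from $\sum_r c_r^+$ and an $O(\log\log T/\Delta^2)$ remainder from $\sum_r c_r^-$. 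Because $E$ is a one-shot event whose failure probability is $o(1/\log T)$ uniformly over which arm plays the role of $1'$, there is no recursion to unwind.
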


The proof of Theorem \ref{thm:anytime} could be found in Section~\ref{sec:detc-anytime}. The result shows that even for the anytime setting (unknown horizon length), the ETC strategy can also be asymptotically optimal as  UCB~\citep{katehakis1995sequential} and Thompson Sampling~\citep{korda2013thompson} do. An advantage of the anytime ETC algorithm is that Algorithm \ref{alg:double-exploration-unknown-gap_anytime} only needs $O(\log T)$ epochs, where in each epoch it can separate exploration and exploitation stages, while for anytime UCB or Thompson Sampling algorithms often need $O(T)$ mixed exploration and exploitation stages.


\section{Asymptotically Optimal DETC in Batched Bandit Problems}\label{sec:batch}
The proposed DETC algorithms in this paper can be easily extended to batched bandit problems \citep{perchet2016batched,NIPS2019_8341,esfandiari2019batched}. In this section, we present simple modifications to Algorithms \ref{alg:double-exploration-known-gap} and  \ref{alg:double-exploration-unknown-gap} which we refer to as \emph{Batched DETC}. We prove that they not only achieve the asymptotically optimal regret bounds  but also enjoy $O(1)$ round complexities.

\subsection{Batched DETC in the Known Gap Setting}\label{sec:batch_known}
We use the same notations that are used  in Section \ref{sec:detc_known}. The Batched DETC algorithm is identical to Algorithm \ref{alg:double-exploration-known-gap} except the stopping rule of $\stageThree$. More specifically,
let $\tau_0=\log(T\Delta^2)/(2(1-\epsilon_T)^2\Delta^2)$. In $\stageThree$ of Algorithm \ref{alg:double-exploration-known-gap}, instead of querying the result $\theta_{2',t_2}$ at every step $t_2=0,1,\ldots$, we only query it at the following time grid: 
\begin{equation}
\label{eq:timenodes}
    \cT=\bigg\{\bigg\lceil \tau_0+\frac{2\sqrt{\log(T\Delta^2)}+4}{2(1-\epsilon_T)^2\Delta^2} \bigg\rceil, \bigg \lceil \tau_0+\frac{2(2\sqrt{\log(T\Delta^2)}+4)}{2(1-\epsilon_T)^2\Delta^2} \bigg \rceil, \ \bigg \lceil \tau_0+\frac{3(2\sqrt{\log(T\Delta^2)}+4)}{2(1-\epsilon_T)^2\Delta^2} \bigg \rceil, \cdots  \bigg\}.
\end{equation}
At each time point $t_2\in\cT$, we query the results of the bandits pulled since the last time point. Between two time points, we pull the arm $2'$ without accessing the results. The period between two times points is also referred to as a round~\citep{perchet2016batched}. Reducing the total number of queries, namely, the round complexity, is an important research topic in the batched bandit problem. For the convenience of readers, we present the Batched DETC algorithm for known gaps in Algorithm \ref{alg:batched_DETC_known_gap}. Note that in this batched version, $\stageOne$, $\stageTwo$ and $\stageFour$ of Algorithm \ref{alg:batched_DETC_known_gap} are identical to that of Algorithm \ref{alg:double-exploration-known-gap}, and we omit them for the simplicity of presentation.

\begin{algorithm}[tbp]
\caption{Batched DETC in the Known Gap Setting}\label{alg:batched_DETC_known_gap}
  \KwIn{$T$, $\epsilon_T$,  $\Delta$ and $\cT$ defined in \eqref{eq:timenodes}} 
\textbf{Initialization:} Pull arms $A_1=1$ and $A_2=2$, $t\leftarrow 2$, $T_1= \lceil 2\log (T\Delta^2)/(\epsilon_T^2\cdot \Delta^2)\rceil$, $\tau_1=4\lceil\log(T_1\Delta^2)/\Delta^2\rceil$\;  
  \nonl \emph{Stage I: Explore all arms uniformly} \hfill(same as in Algorithm \ref{alg:double-exploration-known-gap})\\
  \nonl \emph{Stage II: Commit to the arm with the largest average reward} \hfill(same as in Algorithm \ref{alg:double-exploration-known-gap})\\
  \nonl\hrulefill\\ \nonl \textit{Stage III: Explore the unchosen arm in Stage II} \\
   $\mu'\leftarrow \hat{\mu}_{1'}(t) $, $t_2\leftarrow 0$, $2'\leftarrow \{1,2\}\setminus {1'}$, $\theta_{2',s}$ is the recalculated average reward of arm $2'$ after its $s$-\emph{th} pull  in Stage \emph{III} and $\theta_{2's}\leftarrow 0$, for $s=0$\;
  \While{\textbf{true}}
  {
  \If{$t_2\in\cT$}
     {
  \If{${2(1-\epsilon_T)t_2\Delta}\mid \mu'-\theta_{2',t_2} \mid \geq\log(T\Delta^2)$}
     {
  \textbf{break}\;
     }
     }
   Pull arm $A_{t+1}=2'$, $t\leftarrow t+1$, $t_2\leftarrow t_2+1$\; 
 }
 \nonl \emph{Stage IV: Commit to the arm with the largest average reward} \hfill{(same as in Algorithm \ref{alg:double-exploration-known-gap})} \\
\end{algorithm}

Now, we present the round complexity of  the Batched DETC in Algorithm~\ref{alg:batched_DETC_known_gap}.
\begin{theorem}\label{thm:knowngap_batch}
In the batched bandit problem, the expected number of rounds used in Algorithm~\ref{alg:batched_DETC_known_gap} is $O(1)$. At the same time, the regret of Algorithm \ref{alg:batched_DETC_known_gap} is asymptotically optimal.
\end{theorem}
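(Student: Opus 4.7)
The plan is to decompose Algorithm~\ref{alg:batched_DETC_known_gap} into its four stages, bound the expected number of rounds in each separately, and then argue that the batching of Stage \emph{III} preserves the regret analysis of Theorem~\ref{theorem:knowndelta_2arm} up to lower-order terms.

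First, I would observe that \stageOne, \stageTwo, and \stageFour\ are inherently batched: each pulls arms for a deterministic number of steps determined only by the output of the preceding stage, with no adaptive stopping rule firing inside the stage. Consequently, these three stages use exactly one round each, contributing $O(1)$ to the round complexity regardless of realizations.

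The non-trivial piece is \stageThree. Let $N$ be the random number of grid points in $\cT$ at which the algorithm evaluates the stopping rule, and let $t_2^{(k)}=\lceil \tau_0+kg\rceil$, where $g=(2\sqrt{\log(T\Delta^2)}+4)/(2(1-\epsilon_T)^2\Delta^2)$ is the common grid spacing. From $E[N]=\sum_{k\geq 1}P(N\geq k)$, it suffices to control these tail probabilities. I would condition on the high-probability event that \stageTwo\ yielded $|\mu'-\mu_{1'}|\leq \epsilon_T\Delta$; since $T_1=\Omega(\log(T\Delta^2)/(\epsilon_T^2\Delta^2))$ pulls of arm $1'$ are taken in \stageTwo, a Hoeffding bound gives failure probability $T^{-\Omega(1)}$ for this event. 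On the good event, the stopping condition at grid point $k$ fails only if the Stage \emph{III} sample mean $\theta_{2',t_2^{(k)}}$ is shifted away from $\mu_{2'}$ by a constant fraction of $\Delta$; by Hoeffding's inequality this has probability at most $\exp(-c\,t_2^{(k)}\Delta^2)$ for some $c=c(\epsilon_T)>0$. Since consecutive grid points differ by $g=\Omega(\sqrt{\log(T\Delta^2)}/\Delta^2)$, we get $P(N\geq k+1)\leq \exp(-ck\sqrt{\log(T\Delta^2)})$, and summing yields a geometric series whose total is $O(1)$ uniformly in $T$ and $\Delta$.

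For the regret, I would reuse the proof of Theorem~\ref{theorem:knowndelta_2arm} verbatim except for the analysis of \stageThree. In the worst case, batching can overshoot the ideal (per-step) stopping time of Algorithm~\ref{alg:double-exploration-known-gap} by at most one grid spacing $g$, which contributes an extra $g\Delta=(2\sqrt{\log(T\Delta^2)}+4)/(2(1-\epsilon_T)^2\Delta)$ in the regret. This extra term is of the same order as the error term $(2\sqrt{\log(T\Delta^2)}+2)/((1-\epsilon_T)^2\Delta)$ that already appears in \eqref{eq:regret_known_gap}; indeed, the grid spacing was chosen precisely to match this quantity. Consequently, the instance-dependent and minimax bounds are preserved and $\limsup_{T\to\infty}R_\mu(T)/\log T\leq 1/(2\Delta)$ still holds.

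The main obstacle I anticipate is making the tail bound on $N$ uniform, rather than depending on $T$ or $\Delta$ through the grid geometry. One needs the geometric-decay constant $c$ to be a genuine constant (via the choice of $\epsilon_T$), and one must separately account for the small but nonzero probability that Stage \emph{II} committed to the suboptimal arm, showing that even in that case the wrong $\mu'$ still gives $|\mu'-\mu_{2'}|\approx\Delta$ and the same geometric tail bound applies, so that this contribution to $E[N]$ is also $O(1)$.
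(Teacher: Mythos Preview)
Your overall structure matches the paper's proof exactly: Stages I, II, IV contribute one round each, the Stage III tail is controlled by conditioning on the event $E=\{|\mu'-\mu_{1'}|\le \epsilon_T\Delta\}$ and showing geometric decay across grid points, and the regret survives because the grid spacing $g$ was designed to match the lower-order term already present in \eqref{eq:regret_known_gap}.

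There is, however, a quantitative slip in your Stage III tail argument. You claim that on the good event the stopping condition fails at grid point $k$ only if $\theta_{2',t_2^{(k)}}$ is shifted from $\mu_{2'}$ by a \emph{constant fraction} of $\Delta$, and hence the failure probability is $\exp(-c\,t_2^{(k)}\Delta^2)$. This is too optimistic. At $t_2=\tau_0+kg$, the stopping threshold is $\log(T\Delta^2)/(2(1-\epsilon_T)t_2\Delta)=(1-\epsilon_T)\Delta\cdot\tau_0/(\tau_0+kg)$, so the deviation needed for failure is of order $(1-\epsilon_T)\Delta\cdot kg/(\tau_0+kg)$, which at $k=1$ is $\Theta(\Delta/\sqrt{\log(T\Delta^2)})$, not $\Theta(\Delta)$. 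Hoeffding then gives a failure probability that is a \emph{constant} (not $\exp(-c\sqrt{\log(T\Delta^2)})$). The paper gets the sharp bound by reusing \eqref{eq:roundused1}: at $x_i=i(2\sqrt{\log(T\Delta^2)}+4)$ one has
\[
\PP(\tau_2>n_{x_i}\mid E_1)\le \exp\!\Big(-\tfrac{x_i^2}{4(\log(T\Delta^2)+x_i)}\Big)\le \exp\!\Big(-\tfrac{x_i}{2\sqrt{\log(T\Delta^2)}+4}\Big)=e^{-i}\le 2^{-i},
\]
and $\sum_i i\,2^{-i}=2$. Your conclusion $E[N]=O(1)$ is correct, but the decay rate is $e^{-ck}$, not $e^{-ck\sqrt{\log(T\Delta^2)}}$.

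A second gap: you correctly flag that the case $1'=2$ is symmetric, but you do not say what happens on $E^c$ itself. Knowing $\PP(E^c)\le 2/(T\Delta^2)$ is not enough; you still need a (crude) deterministic bound on $N$ there. The paper simply observes that the total number of grid points in $[0,T]$ is at most $T/g=O(T\Delta^2/\sqrt{\log(T\Delta^2)})$, so the contribution $\PP(E^c)\cdot T/g=O(1/\sqrt{\log(T\Delta^2)})\to 0$.
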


\begin{remark}
The proof of Theorem \ref{thm:knowngap_batch} can be found in Section \ref{sec:proof_batch_detc_known_gap_round}. Compared with fully sequentially adaptive bandit algorithms such as UCB, which needs $O(T)$ rounds of queries, our DETC algorithm only needs constant rounds of queries (independent of the horizon length $T$). Compared with another constant round algorithm FB-ETC in \citep{garivier2016explore}, our DETC algorithm simultaneously  improves the asymptotic regret rate of FB-ETC (i.e., $4/\Delta$) by a factor of $8$.
\end{remark}

\subsection{Batched DETC in the Unknown Gap Setting}\label{sec:batch_unknown}
In the unknown gap setting, both the stopping rules of $\stageOne$ and $\stageThree$ in Algorithm \ref{alg:double-exploration-unknown-gap} need to be modified. In what follows, we describe a variant of  Algorithm~\ref{alg:double-exploration-unknown-gap} that only needs to check the results of pulls at certain time points in $\stageOne$ and $\stageThree$. In particular, let $T_1=\log^{2}T$. In $\stageOne$, we query the results and test the condition in Line 3 
of Algorithm \ref{alg:batched_DETC_unknown_gap} at the following time grid: 
\begin{equation}
\label{eq:testtt}
    t\in\cT_{2}=\{2\sqrt{\log T}, 4\sqrt{\log T}, 6\sqrt{\log T},\ldots\}. 
\end{equation}
In $\stageThree$, we the condition in Line 10 of Algorithm \ref{alg:batched_DETC_unknown_gap} is only checked at the following time grid. 
\begin{align}
\label{eq:testt_2}
\begin{split}
    t_2\in\cT_2'=\big\{N_1, &2/\hat{\Delta}^2 N_2\log (T \log^3 T)+1/\hat{\Delta}^2 N_2(\log T)^{\frac{2}{3}},\\
    &2/\hat{\Delta}^2 N_2\log (T \log^3 T)+2/\hat{\Delta}^2 N_2(\log T)^{\frac{2}{3}},\\
    &2/\hat{\Delta}^2N_2\log (T \log^3 T)+3/\hat{\Delta}^2 N_2(\log T)^{\frac{2}{3}}, \cdots, \log^{2} T \big\}.
\end{split}
\end{align}
where $N_1=(2\log T)/\log \log T$,  $N_2=(1+(\log T)^{-\frac{1}{4}})^2$, and $\hat{\Delta}= |\mu'-\theta_{2',N_1}|$ is an estimate of $\Delta'$ based on the test result after the first round (the first $N_1$ steps).
Apart from restricting $t_2\in\cT_2'$, another difference here from Algorithm \ref{alg:double-exploration-unknown-gap} is that we require $t_2 \leq \log^2 T$. Thus we will terminate $\stageThree$ after at most $\log^2 T$ pulls of arm $2'$. For the convenience of readers, we display the modified Algorithm \ref{alg:double-exploration-unknown-gap} for batched bandits with  unknown gaps in Algorithm \ref{alg:batched_DETC_unknown_gap}. 

\begin{algorithm}[h]
\caption{Batched DETC in the Unknown Gap Setting}\label{alg:batched_DETC_unknown_gap}
  \KwIn{$T$, $T_1$, $\cT_2$ defined in \eqref{eq:testtt}, and $\cT_2'$ defined in \eqref{eq:testt_2}} 
 \textbf{Initialization:} Pull arms $A_1=1$, $A_2=2$, $t\leftarrow 2$\;
 \nonl \hrulefill \\
 \nonl \textit{Stage I: Explore all arms uniformly}\\  
  \While{ {\bf true} } 
  {
  \If{$t\in\cT_2$} 
  {
  \If{$\mid \hmu_1(t)-\hat{\mu}_2(t)\mid \geq\sqrt{16/t\log^{+}(T_1/t)}$}
  { {\bf break}\;
  }
  }
   Pull arms  $A_{t+1}=1$ and $A_{t+2}=2$, $t\leftarrow t+2$\;
  } 
  \nonl \emph{Stage II: Commit to the arm with the largest average reward} \hfill(same as in Algorithm \ref{alg:double-exploration-unknown-gap})\\
  \nonl \hrulefill\\ \nonl \textit{Stage III: Explore the unchosen arm in Stage II} \;
  $\mu' \leftarrow  \hat{\mu}_{1'}(t) $, $2'\leftarrow \{1,2\}\setminus {1'}$, $t_2\leftarrow 0$,  $\theta_{2's}$ is the  recalculated average reward of arm $2'$ after its $s$-\emph{th} pull in Stage \emph{III} and $\theta_{2's}\leftarrow 0$, for $s=0$\;
 \While{$t_2\leq \log^2 T$}
 {
  \If{$t_2\in\cT'_2$}
  {
  \If{$|\mu'-\theta_{2',t_2}|<\sqrt{2/t_2\log\big(T/t_2\big(\log^2(T/t_2)+1\big)\big)}$}
  {
   \textbf{break}\;
  }
  }
 Pull arm $A_{t+1}=2'$, $t\leftarrow t+1$, $t_2\leftarrow t_2+1$\;
 } 
  \nonl\emph{Stage IV: Commit to the arm with the largest average reward}  \hfill(same as in Algorithm \ref{alg:double-exploration-unknown-gap})\\
\end{algorithm}

\begin{theorem}\label{thm:unknowngap_batch}
In the batched bandit problem, the expected number of rounds used in Algorithm~\ref{alg:batched_DETC_unknown_gap} is $O(1)$. Moreover, the regret of Algorithm \ref{alg:batched_DETC_unknown_gap} is asymptotically optimal. 
\end{theorem}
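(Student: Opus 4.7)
The plan is to decompose both the round count and the regret stage by stage, following the template of the proof of Theorem~\ref{theorem:unknowndelta}. Stages II and IV each commit to a single arm without intermediate queries, so they contribute exactly one round apiece; the task therefore reduces to bounding the expected number of query rounds in Stages I and III, while verifying that the grid rounding introduced by $\cT_2$ and $\cT_2'$ preserves the leading $2\log T/\Delta$ asymptotic regret rate.

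For Stage I, I would reuse the concentration analysis behind Theorem~\ref{theorem:unknowndelta} to show that, with probability $1-o(1)$, the stopping condition $|\hmu_1(t)-\hat{\mu}_2(t)|\ge \sqrt{16\log^{+}(T_1/t)/t}$ becomes active at some $t^{*}=\Theta(\log T_1/\Delta^{2})=\Theta(\log\log T/\Delta^{2})$. Since queries only happen on $\cT_2$ with spacing $2\sqrt{\log T}$, the number of rounds is $\lceil t^{*}/(2\sqrt{\log T})\rceil=O(\log\log T/(\Delta^{2}\sqrt{\log T}))=o(1)$ on this good event, and the tail event that the condition fails to trigger before $c\log T_1/\Delta^{2}$ is controlled by a subgaussian union bound over $\cT_2$ and contributes $o(1)$ more to the expected round count. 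The associated Stage I regret is $\Delta\cdot t^{*}=O(\log\log T/\Delta)=o(\log T/\Delta)$, as in the unbatched analysis.

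Stage III is the technical core. The preliminary estimate $\hat{\Delta}:=|\mu'-\theta_{2',N_1}|$ at the first grid point $N_1=2\log T/\log\log T$ combines a $T_1=\log^{2}T$-sample mean for arm $1'$ with an $N_1$-sample mean for arm $2'$, so subgaussian concentration yields
\begin{equation*}
\Pr\bigl(|\hat{\Delta}-\Delta|>\Delta(\log T)^{-1/4}\bigr)\;\le\;2\exp\!\bigl(-c\,N_1\Delta^{2}(\log T)^{-1/2}\bigr)\;=\;o\bigl(1/\mathrm{polylog}(T)\bigr).
\end{equation*}
On the good event $\mathcal{G}:=\{|\hat{\Delta}-\Delta|\le\Delta(\log T)^{-1/4}\}$, the next grid point $\tau^{*}:=\lceil 2N_2\log(T\log^{3}T)/\hat{\Delta}^{2}+N_2(\log T)^{2/3}/\hat{\Delta}^{2}\rceil$ equals $(1+o(1))\cdot 2\log T/\Delta^{2}$, and the $N_2=(1+(\log T)^{-1/4})^{2}$ inflation is calibrated precisely so that the Line~10 threshold at $t_2=\tau^{*}$ drops strictly below $\Delta$ with probability $1-o(1/\mathrm{polylog}(T))$; hence Stage III terminates after only two query rounds. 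On $\mathcal{G}^{c}$, the hard cap $t_2\le\log^{2}T$ bounds the number of grid points by $O(\hat{\Delta}^{2}(\log T)^{4/3})$, and combined with the super-polylogarithmic decay of $\Pr(\mathcal{G}^{c})$ the expected round contribution from the bad event is still $o(1)$.

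The asymptotic optimality of the regret then mirrors the proof of Theorem~\ref{theorem:unknowndelta}: on $\mathcal{G}$, Stage III ends at $\tau^{*}=(1+o(1))\cdot 2\log T/\Delta^{2}$, contributing $(1+o(1))\cdot 2\log T/\Delta$ regret; Stages I, II, IV contribute $o(\log T)$; and the $\mathcal{G}^{c}$ contribution is bounded by $\Delta\cdot\log^{2}T\cdot\Pr(\mathcal{G}^{c})=o(1)$, so the limiting ratio $R_{\mu}(T)/\log T$ is still $2/\Delta$. I expect the main obstacle to be the delicate calibration of $\cT_2'$: the parameters $N_1$, $N_2$, and the $(\log T)^{2/3}$ spacing must be chosen simultaneously (i) coarse enough to yield $O(1)$ queries for any plausible realization of $\hat{\Delta}$, (ii) fine enough that the overshoot beyond the natural Stage III stopping time is only a multiplicative $1+o(1)$ factor, and (iii) robust to the $(\log T)^{-1/4}$-scale estimation error in $\hat{\Delta}$ that governs the bias--variance tradeoff; verifying these three properties jointly is where most of the technical work will go.
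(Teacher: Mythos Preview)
Your plan matches the paper's proof closely: the paper likewise reduces to Stages I and III, proves a geometric tail $\PP(\tau_1 \ge 2i\sqrt{\log T}) \le 2^{-i}$ for Stage~I, conditions on the analogue of your event $\mathcal{G}$ (split there into events $E$ for $\mu'$ and $E_3$ for $\theta_{2',N_1}$), derives the tail $\PP(\tau_2 \ge s_i' \mid H_1) \le 2^{-i}$ along $\cT_2'$ for Stage~III, and then recycles the regret estimates from Theorem~\ref{theorem:unknowndelta}.

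There is one genuine omission. The hard cap $t_2 \le \log^2 T$ in Stage~III of Algorithm~\ref{alg:batched_DETC_unknown_gap} is new relative to Algorithm~\ref{alg:double-exploration-unknown-gap}, so Stage~IV can now commit with $\tau_2 = \log^2 T$ without the Line-10 threshold ever having been crossed. The resulting term $\Delta T\,\PP(\tau_2 = \log^2 T,\, a=2)$ is not covered by ``mirroring Theorem~\ref{theorem:unknowndelta}'' (that argument relies on the threshold being crossed in the wrong direction, via the third statement of Lemma~\ref{lem:colt17}), and your $\mathcal{G}^c$ bound $\Delta\cdot \log^2 T \cdot \PP(\mathcal{G}^c)$ does not control it either, since $\PP(\mathcal{G}^c) = \exp\bigl(-\Theta(\Delta^2\sqrt{\log T}/\log\log T)\bigr)$ is far too large to absorb a factor of $T$. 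The paper closes this gap with a separate, direct argument: when the cap is hit, both arms have $\log^2 T$ fresh samples, so Hoeffding gives $\PP(a=2 \mid \tau_2 = \log^2 T) \le \exp(-\Delta^2\log^2 T/4) \le 1/T$ for sufficiently large $T$, whence $\Delta T\,\PP(\tau_2 = \log^2 T,\, a=2) \le \Delta$. You need to add this case to your Stage~IV analysis.
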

The proof of Theorem \ref{thm:unknowngap_batch} can be found in Section \ref{sec:proof_batch_detc_unknown_gap_round}.
Here, we only focus on deriving the asymptotic optimality along with a constant round complexity in the batched bandits setting. For minimax and instance dependent regret bounds, \cite{perchet2016batched} proved that any algorithm achieving the minimax optimality or instance dependent optimality will cost at least $\Omega(\log \log T)$ or $\Omega(\log T/\log \log T)$ rounds respectively. How to extending our minimax/instance-dependent and asymptotic optimal Algorithm~\ref{alg:double-exploration-unknown-gap-minimax}  to the batched bandit setting is an interesting open question.


\section{Conclusion}
\label{sec:future-work}



In this paper, we revisit the explore-then-commit (ETC) type of algorithms for multi-armed bandit problems, which separate the exploration and exploitation stages. We break the barrier that ETC type algorithms cannot achieve the asymptotically optimal regret bound \citep{garivier2016explore}, which is usually attained by fully sequential strategies such as UCB. We propose a double explore-then-commit (DETC) strategy and prove that DETC is asymptotically optimal for subgaussian rewards, which is the first ETC type algorithm that matches the theoretical performance of UCB based algorithms. We also show a variant of DETC for two-armed bandit problems, which can achieve the asymptotic optimality and the minimax/instance-dependent regret bound simultaneously. To demonstrate the advantage of DETC over fully sequential strategies, we apply DETC to the batched bandit problem which has various of real world applications and prove that DETC enjoys a constant round complexity while maintaining the asymptotic optimality at the same time. As a comparison, the round complexity of fully sequential strategies usually scales with the horizon length $T$ of the algorithm. This implies that the proposed DETC algorithm not only enjoys optimal regret bounds under various metrics, but is also practical and easily implementable in applications where the decision maker is expected to not switch its policy frequently.

\appendix
\section{Proof of the Regret Bound of Algorithm \ref{alg:double-exploration-known-gap}}\label{sec:proof_of_thm_knowdelta_2arm}
Now we are going to prove Theorem \ref{theorem:knowndelta_2arm}. We first present a technical lemma that characterizes the concentration properties of subgaussian random variables.
\begin{lemma}[Corollary 5.5 in \cite{lattimore2018bandit}]
\label{lem:subguassian}
Assume that $X_1,\ldots,X_n$ are independent, $\sigma$-subguassian random variables centered around $\mu$. Then for any $\epsilon>0$
\begin{align}
    \mathbb{P}(\hat{\mu}\geq \mu+\epsilon)\leq \exp \bigg(
    -\frac{n\epsilon^2}{2\sigma^2}\bigg) \quad\text{and } \ \ \ \ \ \ \  \PP(\hat{\mu}\leq \mu-\epsilon)\leq \exp \bigg(
    -\frac{n\epsilon^2}{2\sigma^2}\bigg),
\end{align}
where $\hat{\mu}=1/n\sum_{t=1}^n X_t$.
\end{lemma}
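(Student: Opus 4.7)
The plan is to prove this by the classical Chernoff-style argument, i.e.\ exponential Markov combined with the subgaussian moment generating function (MGF) bound and optimization over the dual parameter $\lambda$. First I would recall the definition used in the paper: a random variable $Y$ is $\sigma$-subgaussian if $\mathbb{E}[\exp(\lambda Y)] \leq \exp(\lambda^2 \sigma^2/2)$ for all $\lambda \in \mathbb{R}$ (the paper states this for $\sigma=1$, and the general case is identical up to rescaling). Thus each centered variable $X_t - \mu$ satisfies $\mathbb{E}[\exp(\lambda(X_t-\mu))] \leq \exp(\lambda^2\sigma^2/2)$.

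Next I would fix $\lambda > 0$ and rewrite the event $\{\hat\mu \geq \mu + \epsilon\}$ as $\{\exp(\lambda \sum_{t=1}^n (X_t - \mu)) \geq \exp(\lambda n \epsilon)\}$. Applying Markov's inequality gives
\begin{equation*}
\mathbb{P}(\hat\mu \geq \mu + \epsilon) \leq \exp(-\lambda n \epsilon)\,\mathbb{E}\!\left[\exp\!\left(\lambda \sum_{t=1}^n (X_t - \mu)\right)\right].
\end{equation*}
By independence, the MGF factorizes: $\mathbb{E}[\exp(\lambda \sum_t (X_t-\mu))] = \prod_{t=1}^n \mathbb{E}[\exp(\lambda(X_t-\mu))] \leq \exp(n \lambda^2 \sigma^2 / 2)$. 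Hence the right-hand side is at most $\exp(n \lambda^2 \sigma^2 / 2 - \lambda n \epsilon)$.

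Finally I would optimize the exponent in $\lambda > 0$. Taking the derivative $n\lambda\sigma^2 - n\epsilon = 0$ yields the minimizer $\lambda^\ast = \epsilon/\sigma^2$, which gives exponent $-n\epsilon^2/(2\sigma^2)$ and establishes the upper-tail bound $\mathbb{P}(\hat\mu \geq \mu + \epsilon) \leq \exp(-n\epsilon^2/(2\sigma^2))$. The lower tail follows symmetrically: the variables $-X_t$ are still independent $\sigma$-subgaussian (centered at $-\mu$), since the definition of $\sigma$-subgaussian is symmetric in $\lambda \in \mathbb{R}$, so applying the upper-tail inequality to $-X_1,\dots,-X_n$ yields $\mathbb{P}(\hat\mu \leq \mu - \epsilon) = \mathbb{P}(-\hat\mu \geq -\mu + \epsilon) \leq \exp(-n\epsilon^2/(2\sigma^2))$.

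There is no real obstacle here — every step is a routine application of Chernoff's method. The only substantive ingredient is the MGF bound built into the definition of subgaussianity, and the only calculation is the one-variable quadratic optimization in $\lambda$. Since this is a textbook corollary restated as a lemma, I would keep the exposition concise and simply reference the subgaussian MGF definition, independence to factor the MGF, Markov's inequality, and the symmetric argument for the lower tail.
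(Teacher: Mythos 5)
Your proof is correct and is exactly the standard Chernoff/Cram\'er argument (Markov's inequality on the exponentiated centered sum, factorization of the MGF by independence, the subgaussian MGF bound, and optimization at $\lambda^\ast=\epsilon/\sigma^2$), which is the same derivation behind Corollary 5.5 of \cite{lattimore2018bandit} that the paper cites without reproving. No issues.
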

\begin{proof}[Proof of Theorem \ref{theorem:knowndelta_2arm}]
Let $\tau_2$ be the total number of times arm $2'$ is pulled in \emph{Stage III} of Algorithm \ref{alg:double-exploration-known-gap}. We know that $\tau_2$ is a random variable. 
Recall that $\mu_1>\mu_2$ and $\Delta=\mu_1-\mu_2$. Recall $\tau_1$ is number of times arm 1 is pulled in \emph{Stage I}.  Let $N_2(T)$ denote the total number of times Algorithm \ref{alg:double-exploration-known-gap} pulls arm $2$, which is calculated as
\begin{align}\label{eq:no_pull_arm_decomp}
    N_2(T)
    =\tau_1&+(T_1-\tau_1)\ind\{\hat\mu_1(\tau_1)<\hat\mu_2(\tau_1)\}+\tau_2\ind\{\hat\mu_1(\tau_1)\geq\hat\mu_2(\tau_1)\}\notag\\
    &{}+(T-T_1-\tau_1-\tau_2)\ind\{a=2\}.
\end{align}
Then, the regret of Algorithm \ref{alg:double-exploration-known-gap} $R_{\mu}(T)=\EE[\Delta N_2(T)]$ can be decomposed as follows 
\begin{align}\label{eq:alg1gre}
    R_{\mu}(T)
    &\leq\EE\big[\Delta\tau_1+\Delta(T_1-\tau_1)\ind\{\hat\mu_1(\tau_1)<\hat\mu_2(\tau_1)\}+\Delta\tau_2 \ind\{\hat\mu_1(\tau_1)\geq\hat\mu_2(\tau_1)\ind\}+\Delta T\ind\{a=2\}\big]\notag\\
    &\leq\EE\big[\Delta\tau_1+\Delta T_1\PP(\hat\mu_1(\tau_1)<\hat\mu_2(\tau_1))+\Delta\tau_2 \PP(\hat\mu_1(\tau_1)\geq\hat\mu_2(\tau_1))+\Delta T\PP(a=2)\big]\notag\\
    &\leq\Delta\tau_1+\underbrace{\Delta T_1 \mathbb{P}(\tau_1<T_1,1'=2)}_{I_1}
  +\underbrace{\Delta\mathbb{E}[{\tau_2}]}_{I_2}+ \underbrace{\Delta T\mathbb{P}(\tau_2<T,a=2)}_{I_3}.  
\end{align}
In what follows, we will bound these terms separately. \\

\noindent\textbf{Bounding term $I_1$:} Let $X_i$  and $Y_i$ be the rewards from pulling arm 1 and  arm 2 for the $i$-th time respectively. Thus $X_i-\mu_1$ and $Y_i-\mu_2$ are 1-subgaussian random variables.
Let $S_0=0$ and $S_n=(X_1-Y_1)+\cdots+(X_n-Y_n)$ for every $n\geq 1$. Then $X_i-Y_i-\Delta$ is a $\sqrt{2}$-subgaussian random variable. 
Applying Lemma~\ref{lem:subguassian} with any $\epsilon>0$, we get
\begin{equation}
     \mathbb{P}(S_{\tau_1}/\tau_1\leq \Delta-\epsilon) \leq \exp(-\tau_1\epsilon^2/4)\leq\exp(-\epsilon^2\log(T_1\Delta^2)/\Delta^2)  ,
\end{equation}
where in the last inequality we plugged in the fact that $\tau_1\geq 4\log(T_1\Delta^2)/\Delta^2$. By setting $\epsilon=\Delta$ in the above inequality, we further obtain
$\PP(\tau_1<T_1,1'=2)= \PP (S_{\tau_1}/\tau_1\leq 0)\leq 1/(T_1\Delta^2)$. Hence
\begin{equation}\label{eq:com12}
   I_1= T_1\Delta \mathbb{P}(\tau_1<T_1,1'=2) \leq 1/\Delta.
\end{equation}

\noindent\textbf{Bounding term $I_2$:}  
Recall that $T_1\geq 2\log(T\Delta^2)/(\epsilon_T^2\Delta^2)$. Define event $E=\{\mu'\in (\mu_{1'}-\epsilon_T\Delta, \mu_{1'}+\epsilon_T\Delta)\}$, and let $E^c$ be the complement of $E$.  By Lemma~\ref{lem:subguassian} and the union bound, $\PP(E)\geq 1-2/(T\Delta^2)$.  Therefore, \begin{align}
\label{eq:bound-I_2-added}
    I_2&=\Delta\EE[\tau_2\ind(E)]+\Delta\EE[\tau_2\ind(E^c)]\notag\\
    &=\Delta\EE[\tau_2\ind(E)]+\Delta\EE[\tau_2\mid E^c] \cdot \PP(E^c)\notag\\
    &\leq \Delta\EE[\tau_2\ind(E)]+\Delta T\cdot \frac{2}{T\Delta^2} \notag \\
    & =\Delta\EE[\tau_2\ind(E, 1'=1)]+\Delta\EE[\tau_2 \ind(E, 1'=2)]+2/\Delta.
\end{align}

 We first focus on term $\Delta\EE[\tau_2\ind(E, 1'=1)]$. 
 Observe that when $E$ holds and $1'=1$ (i.e., the chosen arm $1'$ is the best arm),  arm $2'=2$ is pulled in \emph{Stage III} of Algorithm \ref{alg:double-exploration-known-gap}. For ease of presentation, we define the following notations:
\begin{align}\label{eq:2arm_knowngap_def_Sn_process}
  Z_0=0,\quad Z_i=\mu'-Y_{i+\tau_1},\quad
  S'_0=0,\quad
  S'_n=Z_1+\cdots+Z_n,
\end{align}
where $Y_{i+\tau_1}$ is the reward from pulling arm $2$ for the $i$-\emph{th} time in Stage \emph{III}. 
For any $x>0$, we define $$n_{x}=({\log(T\Delta^2)+x})/({2(1-\epsilon_T)^2\Delta^2}).$$ 
We also define a check point parameter $x_0=2\sqrt{\log (T\Delta^2)}$.  

Let $E_1$ denote the event $\{E, 1'=1\}$. Note that in \emph{Stage III} of Algorithm \ref{alg:double-exploration-known-gap}, conditioned on $E_1$, we have 
\begin{align*}
    2(1-\epsilon_T)\Delta |S_{t_2}'|=2(1-\epsilon_T)t_2\Delta|\mu'-\theta_{2',t_2}|<\log(T\Delta^2),
\end{align*}
for $t_2\leq \tau_2-1$. Therefore, conditioned on $E_1$,
\begin{align}\label{eq:2arm_knowngap_event_tau2}
     \bigg\{\tau_2-1 \geq \bigg\lceil \frac{\log(T\Delta^2)+x}{2(1-\epsilon_T)^2\Delta^2} \bigg\rceil \bigg\} & =\{\tau_2-1\geq \lceil n_x\rceil \}
     \notag \\
     & \subseteq \bigg\{ S_{\lceil n_x \rceil}' 
     \leq \frac{\log(T\Delta^2)}{2(1-\epsilon_T)\Delta}  \bigg\}.
\end{align}
Let $\Delta'=\mu'-\mathbb{E}[Y_{i+\tau_1}]$. 
Then, $Z_i-\Delta'$ is 1-subgaussian. We have that conditioned on $E_1$,
\begin{equation}
     \Delta'=\mu'-\mathbb{E}[Y_{1+\tau_1}]=\mu'-\mu_2\geq \mu_1-\epsilon_T\Delta-\mu_2=(1-\epsilon_T)\Delta.
\end{equation}
 By Lemma~\ref{lem:subguassian}, for any $\epsilon>0$, we have 
\begin{equation}
 \label{eq:S_nx}
\begin{split}
 \mathbb{P} \left(\frac{S_{\lceil n_x \rceil}'}{\lceil n_x \rceil} \leq \Delta'-\epsilon \;\middle|\; E_1\right) & \leq \exp\left(-\lceil n_x \rceil\epsilon^2/2\right)  .
\end{split}
\end{equation}
Let $\epsilon=\frac{(1-\epsilon_T)\Delta x}{\log(T\Delta^2)+x}$. Conditioned on $E_1$,
$$\lceil n_x \rceil(\Delta'-\epsilon)\geq \lceil n_x \rceil( (1-\epsilon_T)\Delta-\epsilon )\geq \frac{\log(T\Delta^2)}{2(1-\epsilon_T)\Delta}.$$ 
Combining this with \eqref{eq:S_nx} yields
\begin{align}
\label{eq:roundused1}
    \mathbb{P}\left(S_{\lceil n_x \rceil}'\leq \frac{\log(T\Delta^2)}{2(1-\epsilon_T)\Delta} \;\middle|\; E_1 \right) & \leq  \PP\left(S_{\lceil n_x \rceil}'\leq \lceil n_x \rceil(\Delta'-\epsilon) \;\middle|\; E_1 \right) \notag \\
    & \leq \exp\bigg(-\frac{x^2}{4(\log(T\Delta^2)+x)}\bigg). 
\end{align}
This, when combined with \eqref{eq:2arm_knowngap_event_tau2}, implies
\begin{align*}
    \PP\left(\tau_2-1\geq\bigg\lceil \frac{\log(T\Delta^2)+x}{2(1-\epsilon_T)^2\Delta^2} \bigg\rceil \;\middle|\; E_1\right)\leq\exp\bigg(-\frac{x^2}{4(\log(T\Delta^2)+x)}\bigg).
\end{align*}
Recall that $x_0=2\sqrt{\log (T\Delta^2)}$. For any $x\geq  x_0$, we have $x\sqrt{\log(T\Delta^2)}/2\geq\log(T\Delta^2)$. 
Thus, 
\begin{align}\label{eq:geqyregret}
\int_{n_{x_0}}^{\infty}\mathbb{P}(\tau_2-2\geq v\mid E_1) \dd v
& =\int_{x_0}^{\infty}\mathbb{P}\left( \tau_2-2\geq \frac{\log(T\Delta^2)+x}{2(1-\epsilon_T)^2\Delta^2} \;\middle|\; E_1 \right)\frac{\dd x}{2(1-\epsilon_T)^2\Delta^2} \notag \\
& \leq\int_{x_0}^{\infty}\mathbb{P}\left( \tau_2-1\geq \bigg\lceil \frac{\log(T\Delta^2)+x}{2(1-\epsilon_T)^2\Delta^2}  \bigg\rceil \;\middle|\; E_1 \right)\frac{\dd x}{2(1-\epsilon_T)^2\Delta^2} \notag \\
& \leq \frac{1}{2(1-\epsilon_T)^2\Delta^2} \int_{x_0}^{\infty} \exp\bigg(-\frac{x^2}{4(\log(T\Delta^2)+x)}\bigg) \dd x \notag \\
& \leq \frac{1}{2(1-\epsilon_T)^2\Delta^2} \int_{x_0}^{\infty} \exp \bigg( -\frac{x}{2\sqrt{\log(T\Delta^2)}+4}\bigg) \dd x \notag \\
& \leq \frac{1}{2(1-\epsilon_T)^2\Delta^2} \int_{0}^{\infty} \exp \bigg( -\frac{x}{2\sqrt{\log(T\Delta^2)}+4}\bigg) \dd x \notag \\
& =\frac{\sqrt{\log(T\Delta^2)}+2}{(1-\epsilon_T)^2\Delta^2}.
\end{align}
Then, the expectation of $\Delta\tau_2$ conditioned on $E_1$ is 
\begin{align}\label{eq:regrettau_2}
\Delta\EE[\tau_2\mid E_1] &=\Delta\int_{0}^{\infty}\PP(\tau_2>v \mid E_1)\dd  v\notag\\
&= \Delta\int_{0}^{n_{x_0}+2}\PP(\tau_2>v \mid E_1)\dd  v+\Delta\int_{n_{x_0}}^{\infty}\mathbb{P}( \tau_2-2\geq v \mid E_1) \dd v\notag\\ 
&\leq 2\Delta+\frac{\log(T\Delta^2)}{2(1-\epsilon_T)^2\Delta}+\frac{2\sqrt{\log(T\Delta^2)}+2}{(1-\epsilon_T)^2\Delta}.
\end{align}
Hence, we have
\begin{align}
    \Delta\EE[\tau_2\ind(E, 1'=1)]& 
    =\Delta\EE[\tau_2\mid E_1] \cdot \PP(E_1) \notag\\
    & \leq  \PP(E_1)\cdot \bigg( 2\Delta+\frac{\log(T\Delta^2)}{2(1-\epsilon_T)^2\Delta}+\frac{2\sqrt{\log(T\Delta^2)}+2}{(1-\epsilon_T)^2\Delta}\bigg).
\end{align}

Let $E_2$ denote the event $\{E, 1'=2\}$. In a manner similar to the proof of~\eqref{eq:regrettau_2}, we can show that 
\begin{align}
    \Delta\EE[\tau_2\ind(E, 1'=2)]& 
    =\Delta\EE[\tau_2\mid E_2] \cdot \PP(E_2) \notag \\
    & \leq  \PP(E_2)\cdot \bigg( 2\Delta+\frac{\log(T\Delta^2)}{2(1-\epsilon_T)^2\Delta}+\frac{2\sqrt{\log(T\Delta^2)}+2}{(1-\epsilon_T)^2\Delta}\bigg).
\end{align}

Therefore, we have
\begin{align}
\label{eq:sodafinal-1}
    I_2&\leq\Delta\EE[\tau_2\ind(E, 1'=1)]+\Delta\EE[\tau_2\ind(E, 1'=2)]+\frac{2}{\Delta} \notag \\
    & \leq 2\Delta+\frac{2}{\Delta}+\frac{\log(T\Delta^2)}{2(1-\epsilon_T)^2\Delta}+\frac{2\sqrt{\log(T\Delta^2)}+2}{(1-\epsilon_T)^2\Delta}.
\end{align}

\noindent\textbf{Bounding term $I_3$:} 
For term $I_3$, similar to~\eqref{eq:bound-I_2-added}, we have
\begin{align}
    I_3 = & \Delta \cdot T\PP[\tau_2<T,a=2 \mid E_1]\cdot \PP[E_1] \notag\\
    & {} +\Delta \cdot T\PP[\tau_2<T,a=2 \mid E_2]\cdot \PP[E_2]+\frac{2}{\Delta}.
\end{align} 
We will first prove that ${\mathbb{P}(\tau_2<T, a=2 \mid E_1)\leq {1}/({T\Delta^2})}$. Recall that  $S_n'=\sum_{i=1}^{n}Z_i$ and $Z_i=\mu'-Y_{i+\tau_1}$. In addition, $Z_i-\Delta'$ is 1-subgaussian, and $\Delta'\geq (1-\epsilon_T)\Delta$ whenever  $E_1$ occurs. Then, 
\begin{align}
    \mathbb{E}[\exp(-2\Delta(1-\epsilon_T)Z_1) \mid E_1] & =   \mathbb{E} [\exp(-2\Delta(1-\epsilon_T)Z_1+2\Delta\Delta'(1-\epsilon_T) -2 \Delta\Delta'(1-\epsilon_T) ) \mid E_1] \notag\\
    & =\mathbb{E} [\exp(-2\Delta(1-\epsilon_T)(Z_1-\Delta') -2 \Delta\Delta'(1-\epsilon_T)) \mid E_1 ] \notag \\ 
    & \leq \exp((-2(1-\epsilon_T)\Delta)^2/2 -2(1-\epsilon_T)\Delta\Delta')) \notag\\
    &\leq \exp(2(1-\epsilon_T)\Delta((1-\epsilon_T)\Delta-\Delta')) \notag\\
    & \leq 1,
\end{align}
where the first inequality follows from the definition of subgaussian random variables. 
We consider the sigma-algebra $F_n =\sigma(E_1, Y_{\tau_1+i}, i=1,...,n)$ for $n \geq 1$. Define $F_0={E_1}$ and $M_0=1$. Then, the sequence $\{M_n\}_{n=0, 1, ...}$ with 
$M_n=\exp (-2\Delta(1-\epsilon_T)S'_n)$ is a super-martingale with respect to $\{F_n\}_{n=0, 1, ...}$.   Let $\tau'=T\wedge \inf\{n> 1: S'_n\leq  -{\log (T\Delta^2)}/({2\Delta(1-\epsilon_T)})\}$ be a stopping time. Observe that conditioned on $E_1$,

\begin{align}
    \{\tau_2<T, a=2 \} & \subseteq \left\{\exists 1<n< T: S'_n\leq -\frac{\log(T\Delta^2)}{2\Delta(1-\epsilon_T)}  \right\} \notag \\
    & =\{\tau'<T \}.
\end{align}
Applying Doob's optional stopping theorem \citep{durrett2019probability} yields $\mathbb{E}[M_{\tau'}]\leq \mathbb{E}[M_0]=1$. In addition, when $\tau_2<T$, we have
\begin{align}
M_{\tau'} & =\exp(-2\Delta(1-\epsilon_T)S'_{\tau'})  \geq \exp(\log(T\Delta^2)) =T\Delta^2.
\end{align}
In other words, $\{\tau_2 < T\} \subseteq \{M_{\tau'} \ge T\Delta^2\}$. This leads to
\begin{align}
    \mathbb{P}(\tau_2<T,a=2 \mid E_1) & \leq \mathbb{P}(\tau'<T \mid E_1) \notag \\ & \leq \PP(M_{\tau'}\geq T\Delta^2  \mid E_1) \notag \\ 
    & \leq \mathbb{E}[M_{\tau'}]/(T\Delta^2)\notag\\
    &\leq 1/(T\Delta^2).
\label{eq:com13}    
\end{align} 
where the third inequality follows form Markov's inequality. Similarly,  $\mathbb{P}(\tau_2<T,a=2 \mid E_2)\leq {1}/({T\Delta^2})$ also holds. Thus, term $I_3$ can be upper bounded by $3/\Delta$. \\

\noindent\textbf{Completing the proof:}  Substituting~\eqref{eq:com12}, \eqref{eq:sodafinal-1} and $I_3\leq 3/\Delta$ into \eqref{eq:alg1gre} yields a total regret as follows
\begin{equation}
   R_{\mu}(T)\leq 2\Delta+ \frac{8}{\Delta}+\frac{4\log(T_1\Delta^2)}{\Delta}+\frac{\log(T\Delta^2)+2\sqrt{\log(T\Delta^2)}}{2(1-\epsilon_T)^2\Delta}+\frac{\sqrt{\log(T\Delta^2)}+2}{(1-\epsilon_T)^2\Delta}.\notag
\end{equation}
Recall the choice of $\epsilon_T$ in Theorem \ref{theorem:knowndelta_2arm}. By our choice that $T_1=\lceil 2\log(T\Delta^2)/(\epsilon_T^2 \Delta^2)) \rceil$, we have
\begin{align}\label{eq:2arm_knowngap_T1_bound}
    T_1\leq 1+ \max\{2\log^2T,8\log(T\Delta^2)/\Delta^2\},
\end{align}
which immediately implies, $\lim_{T\rightarrow \infty}4 {\log(T_1\Delta^2)}/(\Delta \log T) =0$. Also note that $\lim_{T\rightarrow\infty}\epsilon_T=0$. Thus, we have $\lim_{T\rightarrow\infty} R_{\mu}(T)/\log
T=1/(2\Delta)$. By \eqref{eq:2arm_knowngap_T1_bound}, we known that $T_1\Delta^2=O(\log (T\Delta^2))$, which results in the worse case regret bound as
\begin{align*}
    R_{\mu}(T)=O\bigg(\Delta+\frac{1}{\Delta}+\frac{\log(T\Delta^2)}{\Delta}+\frac{\log\log(T\Delta^2)}{\Delta} \bigg)=O(\Delta+\frac{\log(T\Delta^2)}{\Delta})=O(\Delta+\sqrt{T}),
\end{align*}
where the last equality is due to the fact that $T\Delta^2>1$ and $\log x\leq 2\sqrt{x}$ for $x>1$.

\end{proof}

\section{Proof of the Regret Bound of Algorithm \ref{alg:double-exploration-unknown-gap}}\label{sec:proof_of_thm_unknowdelta_2arm}
Next, we provide the proof for Theorem \ref{theorem:unknowndelta}. Note that the stopping time of $\stageOne$ and {\it Stage III} in Algorithm~\ref{alg:double-exploration-unknown-gap} is not fixed and instead depends on the random samples, and hence, the Hoeffding's inequality in Lemma~\ref{lem:subguassian} is not directly applicable. To address this issue, we provide the following two Lemmas.

\begin{lemma}\label{lemma:maximal_ineq}
Let $N$ and $M$ be extended real numbers in $\mathbb{R}^{+}$ and $\mathbb{R}^{+}\cup\{+\infty\}$. Let $\gamma$ be a real number in $\mathbb{R}^+$, and let $\hat{\mu}_n=\sum_{s=1}^n X_s/n$ be the empirical mean of $n$ random variables identically independently distributed according to 1-subgaussian distribution. Then 
\begin{align}\label{eq:maximal_ineq}
    \PP(\exists N\leq n\leq M, \hat{\mu}_n+\gamma\leq 0) \leq \exp\bigg(-\frac{N\gamma^2}{2}\bigg).
\end{align}
\end{lemma}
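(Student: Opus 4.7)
The plan is to reduce the uniform-in-$n$ event to a single tail event for an exponential supermartingale and then invoke a maximal inequality. First I would set $S_n = \sum_{s=1}^{n} X_s$, so that $\{\hat\mu_n + \gamma \leq 0\}$ is the same as $\{S_n \leq -n\gamma\}$. Under the standard 1-subgaussian convention (zero mean, $\EE[e^{\lambda X_s}] \leq e^{\lambda^2/2}$ for all $\lambda \in \RR$), a one-line conditional-expectation check shows that for any fixed $\lambda > 0$ the process
\[ M_n := \exp(-\lambda S_n - n\lambda^2/2), \qquad M_0 = 1, \]
is a non-negative supermartingale with respect to the natural filtration of $(X_s)$, since $\EE[e^{-\lambda X_{n+1}}] \leq e^{\lambda^2/2}$ cancels the deterministic factor $e^{-\lambda^2/2}$ introduced by incrementing $n$.

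The key step is to choose $\lambda = \gamma$, because then the threshold $S_n \leq -n\gamma$ translates into the clean lower bound
\[ M_n \;\geq\; \exp(n\gamma^2 - n\gamma^2/2) \;=\; \exp(n\gamma^2/2) \;\geq\; \exp(N\gamma^2/2) \quad \text{for every } n \geq N. \]
Consequently the event in \eqref{eq:maximal_ineq} is contained in $\{\sup_{n \geq 0} M_n \geq e^{N\gamma^2/2}\}$. Applying Ville's (equivalently, Doob's) maximal inequality for non-negative supermartingales, $\PP(\sup_n M_n \geq c) \leq \EE[M_0]/c = 1/c$, with $c = e^{N\gamma^2/2}$ then delivers exactly the claimed bound $\exp(-N\gamma^2/2)$.

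The only subtlety I anticipate is the case $M = +\infty$, since the maximal inequality is most cleanly stated on a finite horizon. I would handle this by applying the inequality on $\{0,1,\dots,K\}$ for arbitrary $K$ and passing to the limit via monotone convergence of the nested events $\{\max_{0 \leq n \leq K} M_n \geq c\}$. Apart from that, nothing is hard --- notably, no union bound over the index range $[N,M]$ is required, which is precisely the reason the resulting rate matches the single-$n=N$ Chernoff/Hoeffding bound of Lemma~\ref{lem:subguassian} rather than degrading with the length of the window.
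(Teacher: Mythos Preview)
Your proposal is correct and is essentially identical to the paper's own proof: both construct the exponential supermartingale $\exp(-\gamma S_n - n\gamma^2/2)$ (the paper writes it as $\exp(-\gamma n\hat\mu_n - n\gamma^2/2)$), observe that $\hat\mu_n+\gamma\le 0$ with $n\ge N$ forces this process above $e^{N\gamma^2/2}$, and then apply Ville's maximal inequality. Your extra remark about handling $M=+\infty$ via a limiting argument is a harmless bit of additional care; the paper simply appeals to Ville's inequality directly over $\sup_{n\in\NN}$.
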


The following lemma characterizes the length of the uniform exploration in $\stageOne$ of Algorithm \ref{alg:double-exploration-unknown-gap}. Since each arm is pulled for the same number of times (e.g., $s$ times), the length of $\stageOne$ is $2s$.
\begin{lemma}
\label{lem:peeling-simp}
Let $n\in \NN^+$,  $X_1,X_2,\cdots$, be  i.i.d.  1-subgaussian random variables, and $Y_1,Y_2,\cdots$, be  i.i.d. 1-subgaussian random variables. Assume without loss of generality that $\EE[X_1]>\EE[Y_1]$. Denote $\Delta=\EE[X_i-Y_i]$, and  $\hat{\mu}_{t}=1/\sum_{n=1}^t (X_n-Y_n)$. Then for any $x>0$,
\begin{align*}
    \PP \bigg(\exists s\geq 1: \hat{\mu}_{s}+\sqrt{\frac{8}{s}\log^{+}\bigg(\frac{N}{s}\bigg)}\leq 0  \bigg)\leq \frac{15}{N\Delta^2}.
\end{align*}
\end{lemma}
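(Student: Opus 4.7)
The plan is to prove this peeling-type concentration inequality by a dyadic blocking argument combined with the maximal inequality of Lemma~\ref{lemma:maximal_ineq}. First I would center the summands: setting $W_i = X_i - Y_i - \Delta$ produces mean-zero $\sqrt{2}$-subgaussian random variables (independent $1$-subgaussians add to $\sqrt{2}$-subgaussian). Writing $\tilde\mu_s = (1/s)\sum_{i=1}^s W_i$ so that $\hat\mu_s = \Delta + \tilde\mu_s$, the event in the lemma becomes $\{\exists\, s\geq 1: \tilde\mu_s + \Delta + \sqrt{(8/s)\log^{+}(N/s)} \leq 0\}$, which is now a one-sided deviation event for a mean-zero martingale.

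Second, I would split the time range into dyadic blocks $B_k = \{s: 2^k \leq s < 2^{k+1}\}$ for $k \geq 0$. Since $s \mapsto (8/s)\log^{+}(N/s)$ is non-increasing, on $B_k$ the square-root threshold is at least $\delta_k := \sqrt{(4/2^k)\log^{+}(N/2^{k+1})}$. Thus the event, restricted to $B_k$, is contained in $\{\exists\, s \geq 2^k: \tilde\mu_s + \Delta + \delta_k \leq 0\}$. The $\sqrt{2}$-subgaussian version of Lemma~\ref{lemma:maximal_ineq} (which replaces $\exp(-N\gamma^2/2)$ by $\exp(-N\gamma^2/4)$) then bounds this by $\exp(-2^k (\Delta + \delta_k)^2 / 4)$. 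Using $(\Delta+\delta_k)^2 \geq \Delta^2 + \delta_k^2$ and $\exp(-\log^{+}(x)) = \min(1, 1/x)$, this factorizes as $\exp(-2^{k-2}\Delta^2)\cdot \min(1, 2^{k+1}/N)$.

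Third, I would union-bound over $k$ and sum the resulting series, splitting at $k_\ast = \lfloor \log_2 N\rfloor$. For $k \leq k_\ast$ the $\min$ factor equals $2^{k+1}/N$, so I bound $\sum_{k \leq k_\ast} 2^k \exp(-2^{k-2}\Delta^2) = O(1/\Delta^2)$ via the elementary dyadic estimate that $\sum_j 2^j e^{-c\, 2^j}$ is dominated by its single largest term of order $1/c$, with tails on both sides forming convergent geometric series. For $k > k_\ast$ the $\min$ factor is $1$ and the smallest exponent is at least $N\Delta^2/8$, so $e^{-y} \leq 1/y$ together with super-geometric decay controls this piece by $O(1/(N\Delta^2))$ as well. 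The main technical obstacle is tracking the numerical constant: producing exactly $15/(N\Delta^2)$ requires sharp accounting in the sum $\sum_j 2^j e^{-c\, 2^j}$ rather than the looser bound $xe^{-x} \leq 2e^{-x/2}$, as well as careful bookkeeping between the two regimes $k \leq k_\ast$ and $k > k_\ast$; a crude summation gives the same order of magnitude but a noticeably larger leading constant.
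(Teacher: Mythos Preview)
Your approach is correct and matches the paper's: both center the summands to obtain a mean-zero subgaussian sequence and then apply the standard dyadic peeling bound. The only difference is that the paper does not carry out the peeling sum itself but simply cites Lemma~9.3 of \cite{lattimore2018bandit} for the final inequality (including the constant $15$), so your concern about tracking the exact constant is resolved by citation rather than by a sharper summation.
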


Moreover, we need following inequalities on the confidence bound of the average rewards. Similar results have also been proved in \cite{menard2017minimax} for bounding the KL divergence between two exponential family distributions for different arms.
\begin{lemma}\label{lem:colt17}
Let $\delta>0$  and $M_1, M_2,\ldots,M_n$ be 1-subgaussian random variables with zero means. Denote $\hat{\mu}_n=\sum_{s=1}^n M_s/n$. Then the following statements hold:
\begin{enumerate}
    \item  for any $T_1 \leq T$, 
    \begin{equation}
        \sum_{n=1}^T \mathbb{P}\bigg(\hat{\mu}_{n}+\sqrt{\frac{4}{n}\log^{+}\bigg(\frac{T_1}{n}\bigg)} \geq \delta \bigg)\leq 1+\frac{4\log^{+}({T_1}{\delta^2})}{\delta^2} +\frac{3}{\delta^2}+\frac{\sqrt{8\pi {\log^{+}({T_1}{\delta^2})}}}{\delta^2};
    \end{equation}
    
    \item if $T\delta^2\geq e^2$, then 
    \begin{align}
    \sum_{n=1}^T \mathbb{P}\Bigg(\hat{\mu}_{n}+\sqrt{\frac{2}{n}\log\bigg(\frac{T}{n}\bigg(\log^2\frac{T}{n}+1\bigg)\bigg)} \geq \delta \Bigg)
    &\leq 1+\frac{2\log({T}{\delta^2}(\log^2({T}{\delta^2})+1))}{\delta^2}+\frac{3}{\delta^2}\notag\\&\qquad+\frac{\sqrt{4\pi {\log({T}{\delta^2}(\log^2({T}{\delta^2})+1))}}}{\delta^2};
    \end{align}
    \item if $T\delta^2\geq 4e^3$, then 
    \begin{equation}
    \label{lem:main-equation-3}
        \mathbb{P}\bigg(\exists s\leq T: \hat{\mu}_{s}+\sqrt{\frac{2}{s}\log\bigg(\frac{T}{s}\bigg(\log^2\frac{T}{s}+1\bigg)\bigg)}+\delta\leq 0 \bigg)\leq \frac{4(16e^2+1)}{T\delta^2}.
    \end{equation}
\end{enumerate}
\end{lemma}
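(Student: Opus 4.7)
The plan is to dispatch the three parts using two distinct techniques: parts 1 and 2 via a threshold-splitting argument combined with Hoeffding's inequality (Lemma~\ref{lem:subguassian}), and part 3 via a geometric peeling on top of the maximal inequality (Lemma~\ref{lemma:maximal_ineq}).

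For parts 1 and 2, each summand is $\mathbb{P}(\hat\mu_n \geq \delta - u_n)$ where $u_n$ has the form $\sqrt{(c/n)\log \phi(T_1/n)}$ with $c\in\{2,4\}$ and $\phi$ either the identity or $x \mapsto x(\log^2 x + 1)$. I would introduce a threshold $n^*$ chosen so that $u_{n^*} \approx \delta$; for part 1 this gives $n^* \asymp 4 \log^+(T_1\delta^2)/\delta^2$, and for part 2 $n^* \asymp 2\log(T\delta^2(\log^2(T\delta^2)+1))/\delta^2$, which already match the leading terms in each target inequality. For $n \leq n^*$ I would bound each probability by $1$, contributing $n^*$ to the sum. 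For $n > n^*$, $\delta - u_n > 0$ and grows with $n$, so Hoeffding's inequality gives $\mathbb{P}(\hat\mu_n \geq \delta - u_n) \leq \exp(-n(\delta - u_n)^2/2)$. Expanding $(\delta - u_n)^2 \geq \delta^2 - 2\delta u_n$ and using $nu_n^2 = c\log \phi(T_1/n)$, I would compare the resulting tail sum to the Gaussian integral
\[
\int_{n^*}^\infty \exp\!\bigl(-n\delta^2/2 + \delta\sqrt{c\, n \log \phi(T_1/n)}\bigr)\, dn,
\]
which after the substitution $y = n\delta^2$ evaluates, up to constants, to the $\sqrt{\log(T_1\delta^2)}/\delta^2$ and $1/\delta^2$ additive terms appearing in the statement; the precise coefficients $\sqrt{8\pi}$ and $\sqrt{4\pi}$ come out of a shifted-Gaussian tail estimate rather than a crude AM-GM bound.

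For part 3 the estimate must hold uniformly in $s$, so I would use a geometric peeling. By symmetry (since $-M_i$ are also mean-zero 1-subgaussian), the target event coincides with $\{\exists s \leq T:\ \hat\mu_s \geq \delta + u_s\}$ for $u_s = \sqrt{(2/s)\log(T/s \cdot (\log^2(T/s)+1))}$. Partition $[1,T]$ into dyadic blocks $I_k = [2^k, 2^{k+1})$; since $u_s$ is decreasing in $s$, we have $u_s \geq u_{2^{k+1}}$ on each block. Lemma~\ref{lemma:maximal_ineq} then yields
\[
\mathbb{P}\bigl(\exists s \in I_k:\ \hat\mu_s \geq \delta + u_{2^{k+1}}\bigr) \leq \exp\!\bigl(-2^k(\delta + u_{2^{k+1}})^2/2\bigr) \leq \exp(-2^k\delta^2/2)\cdot \exp(-2^k u_{2^{k+1}}^2/2),
\]
using $(\delta + u)^2 \geq \delta^2 + u^2$. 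The second factor simplifies via the defining identity $s u_s^2/2 = \log(T/s\cdot(\log^2(T/s)+1))$ to $[T/2^{k+1}\cdot(\log^2(T/2^{k+1})+1)]^{-1/2}$, whose dyadic sum converges (it is essentially the telescoping $\arctan$ integral $\int dt/[t(\log^2 t + 1)]$), while the first factor $\exp(-2^k\delta^2/2)$ becomes geometrically summable once $2^k \gtrsim \log(T\delta^2)/\delta^2$. Summing over the $O(\log T)$ blocks, splitting at $k^* \asymp \log_2(\log(T\delta^2)/\delta^2)$ between the two regimes, and invoking the hypothesis $T\delta^2 \geq 4e^3$ to control boundary terms yields the final $4(16e^2+1)/(T\delta^2)$ bound.

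The main obstacle will be bookkeeping numerical constants through each of these steps. In parts 1 and 2, the Gaussian-tail integral must produce exactly a $\sqrt{\log(T_1\delta^2)}/\delta^2$ cross term with the stated $\sqrt{8\pi}$ or $\sqrt{4\pi}$ coefficient, which forces a completion-of-the-square calculation rather than a looser AM-GM estimate, and requires clean handling of the floor/ceiling at $n^*$. In part 3 the dyadic base must interact precisely with the $(\log^2+1)$ correction so that the peeled sum converges, and the split threshold $k^*$ must be chosen so that the geometric factor $\exp(-2^k\delta^2/2)$ kicks in at exactly the right place; the specific constant $16e^2+1$ is an artifact of this tuning together with the condition $T\delta^2 \geq 4e^3$, and tracking it requires keeping the peeling sums and the boundary estimates on the same arithmetic footing.
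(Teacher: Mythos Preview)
Your plan for parts 1 and 2 is essentially the paper's argument: introduce the threshold $\gamma' = \max\{4\log^{+}(T_1\delta^2)/\delta^2,\,1/\delta^2\}$ (respectively $\rho$), bound the first $\gamma'$ summands by $1$, and for $n>\gamma'$ use Hoeffding together with the key observation $u_n\le \delta\sqrt{\gamma'/n}$, which turns the exponent into the perfect square $-\delta^2(\sqrt{n}-\sqrt{\gamma'})^2/2$; the substitution $y=\delta(\sqrt{x}-\sqrt{\gamma'})$ in the integral then produces exactly the $2/\delta^2$ and $\sqrt{2\pi\gamma'}/\delta$ terms. Your phrasing ``expanding $(\delta-u_n)^2\ge \delta^2-2\delta u_n$'' is the wrong reflex here---that crude lower bound would not give the stated constants---but you already flag that a shifted-Gaussian computation is what is needed, so this is only a matter of execution.

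Part 3, however, does not go through as written. After your dyadic peeling and the inequality $(\delta+u)^2\ge \delta^2+u^2$, the second factor is $\exp(-2^k u_{2^{k+1}}^2/2)=[\,T/2^{k+1}\cdot(\log^2(T/2^{k+1})+1)\,]^{-1/2}$, with exponent $-1/2$ rather than $-1$, because $2^k u_{2^{k+1}}^2=\tfrac12\cdot 2^{k+1}u_{2^{k+1}}^2$ loses a factor of two. On the small-$k$ side ($k<k^\ast$) you bound the first factor by $1$; the remaining dyadic sum of the second factor does converge, but only to an absolute constant (your $\arctan$ heuristic corresponds to the $-1$ power, not the $-1/2$ power you actually have). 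A constant is nowhere near $O(1/(T\delta^2))$: for instance with $\delta=1$ the target is $O(1/T)$ while your small-$k$ block already contributes $\Theta(1)$. Even if you keep the first factor, the term at $k=k^\ast-1$ is of order $(T\delta^2)^{-3/4}$, and the terms only get worse as $k$ decreases.

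The paper's proof avoids this loss by a different decomposition. It first splits deterministically at $s=f(\delta)=\tfrac{2}{\delta^2}\log(T\delta^2/4)$. For $s\ge f(\delta)$ it drops $u_s$ entirely and applies Lemma~\ref{lemma:maximal_ineq} once to get $e^{-\delta^2 f(\delta)/2}=4/(T\delta^2)$. For $s\le f(\delta)$ it drops $\delta$ entirely (so the exponent is the full $N\gamma_\ell^2$, not half of it) and then peels with a \emph{tuned} base $\beta=C/(C-1)$, where $C=\log\bigl(\tfrac{T}{2f(\delta)}(1+\log^2\tfrac{T}{2f(\delta)})\bigr)$; summing the resulting geometric series and invoking the elementary Lemma~\ref{lem:ebeta} gives $A_1\le 4e^2 C e^{-C}\le 64e^2/(T\delta^2)$. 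The choice of $\beta$ close to $1$ is exactly what recovers the full exponent that your fixed dyadic base $2$ throws away, and the $16e^2+1$ constant is the sum of these two pieces.
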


\begin{proof}[Proof of Theorem \ref{theorem:unknowndelta}]
Let $\tau_1$ be the number of times each arm is pulled in \emph{Stage I} of Algorithm~\ref{alg:double-exploration-unknown-gap} and $\tau_2$ be  the total number of times arm $2'$ is pulled in \emph{Stage III} of Algorithm~\ref{alg:double-exploration-unknown-gap}. 
Similar to \eqref{eq:alg1gre}, the regret of Algorithm~\ref{alg:double-exploration-unknown-gap} can be decomposed as follows
\begin{align}\label{eq:unknowndeltau_regret_decomp}
    R_{\mu}(T)
    &\leq\underbrace{\Delta T_1 \mathbb{P}(\tau_1<T,1'=2)}_{I_1}
  +\underbrace{\Delta \EE[\tau_1]+\Delta\mathbb{E}[{\tau_2}]}_{I_2}+ \underbrace{\Delta T\mathbb{P}(\tau_2< T,a=2)}_{I_3}.  
\end{align}
 
Since we focus on the asymptotic optimality, we define $\epsilon_T=\sqrt{2\log (T\Delta^2)/(T_1\Delta^2)}$ and assume $\epsilon_T\in (0,1/2)$, $T\Delta^2\geq 16e^3$.\\
\noindent\textbf{Bounding term $I_1$:} Let $X_s$ and $Y_s$ be the reward of arm $1$ and $2$ when they are pulled for the $s$-th time respectively, $s=1,2,\ldots$.  Recall that $\hmu_{k,s}$ is the average reward for arm $k$ after its $s$-th pull.    
Applying Lemma~\ref{lem:peeling-simp}, we have
\begin{align}
\label{eq:unknowndeltau_stage1_bound}
    \PP(\tau_1<T,1'=2)& \leq \PP\bigg( \exists s\in\NN: 2s\leq T, \ \hat{\mu}_{1,s}-\hat{\mu}_{2,s}\leq -\sqrt{\frac{8\log^{+}(T_1/(2s))}{s}} \bigg) \notag\\
    & \leq \frac{30}{T_1\Delta^2}.
\end{align}
where the last inequality comes from Lemma~\ref{lem:peeling-simp}. Therefore $I_1\leq 30/\Delta$.  


\noindent \textbf{Bounding term  $I_2$:} 
By the definition of $\tau_1$ and the stopping rule of $\stageOne$ in Algorithm \ref{alg:double-exploration-unknown-gap}, we have
\begin{align}\label{eq:unknowndeltau_1}
    \mathbb{E}[\tau_1]=\sum_{s=1}^{T}\PP(\tau_1\geq s) & \leq \sum_{s=1}^{T/2}\PP \bigg(\hat{\mu}_{1,s}-\hat{\mu}_{2,s}\leq \sqrt{\frac{8\log^{+}({T_1}/({2s}))}{s}} \bigg) \notag\\
    & = \sum_{s=1}^{T/2} \PP \bigg(\frac{\sum_{i=1}^s Z_i}{s}\leq \sqrt{\frac{4}{s}\log^{+} \Big(\frac{T_1}{2s}\Big)}-\frac{\Delta}{\sqrt{2}} \bigg) \notag\\
    & \leq  \sum_{s=1}^{T} \PP \bigg(-\frac{\sum_{i=1}^s Z_i}{s}+\sqrt{\frac{4}{s}\log^{+} \Big(\frac{T_1/2}{s}\Big)}\geq \frac{\Delta}{\sqrt{2}} \bigg) \notag\\
   & \leq  1+\frac{8\log^{+}(T_1\Delta^2/4)}{\Delta^2}+\frac{6}{\Delta^2}+\frac{2\sqrt{8\pi\log^{+}(T_1\Delta^2/4)}}{\Delta^2},
\end{align}
where the equality is by the definition of  $\sum_{i=1}^s Z_i/s=\sum_{i=1}^s(X_i-Y_i-\Delta)/(\sqrt{2}s)=(\hmu_{1,s}-\hmu_{2,s}-\Delta)/\sqrt{2}$, and the last inequality is due to the first statement of Lemma~\ref{lem:colt17} since $-Z_i$ are 1-subgaussian variables as well.

Let 
\begin{align}
\label{eq:def:eps_T}
    \epsilon_T=\sqrt{2\log (T\Delta^2)/(T_1\Delta^2)}.
\end{align}
 Since we focus on the asymptotic optimality ($T\rightarrow \infty$) and $T_1=\log^2 T$, we assume 
\begin{align}
\label{eq:unknown-gap-assumption}
   \epsilon_T\in (0,1/2) \qquad {\text and } \qquad T\Delta^2\geq 16e^3. 
\end{align}
Let $E$ be the event $\mu'\in [\mu_{1'}-\epsilon_T\Delta,\mu_{1'}+\epsilon_T\Delta]$.
Applying Lemma~\ref{lem:subguassian} and union bound, $\PP(E)\geq 1-2/(T\Delta^2)$.   Similar to~\eqref{eq:bound-I_2-added}, we have
\begin{align}
   \label{eq:bound-I_2-added-unknown}
    \EE[\tau_2]\leq\EE[\tau_2\ind(E, 1'=1)]+\EE[\tau_2 \ind(E, 1'=2)]+2/\Delta^2.
\end{align}
To bound $\EE[\tau_2\ind(E, 1'=1)]$, we assume  event $E$ holds and the chosen arm $1'$ is the best arm, i.e., $1'=1$. Let $E_1=\{E, 1'=1\}$.  Let $\Delta'=\mu'-\mathbb{E}[Y_{i+\tau_1}]$. Then conditioned on $E_1$, $\Delta'\in [(1-\epsilon_T)\Delta, (1+\epsilon_T)\Delta]$. Since $\epsilon_T \in (0,1/2)$ and $T\Delta^2\geq 16 e^3$,  we have that conditioned on $E_1$, $T(\Delta')^2\geq(1-\epsilon_T)^2T\Delta^2\geq 4e^3$. 
Let $W_i={\mu'}-Y_{i+\tau_1}-\Delta'$. Then $-W_i$ is 1-subgaussian random variable. By the stopping rule of $\stageThree$ in Algorithm \ref{alg:double-exploration-unknown-gap}, it holds that 
\begin{align}\label{eq:expectedtau_2}
        \mathbb{E}[\tau_2\mid E_1]&\leq\sum_{t_2=1}^{T} \PP(\tau_2\geq t_2\mid E_1) \notag\\
        & = \sum_{t_2=1}^{T}\PP\bigg( \mu'-\theta_{2',t_2}\leq \sqrt{\frac{2}{t_2}\log\Big(\frac{T}{t_2}\Big(\log^2\frac{T}{t_2}+1\Big)\Big) } \;\bigg|\; E_1\bigg) \notag\\
         & =\sum_{t_2=1}^{T}\PP\bigg( -\frac{\sum_{i=1}^{t_2}W_i}{t_2}+\sqrt{\frac{2}{t_2}\log\Big(\frac{T}{t_2}\Big(\log^2\frac{T}{t_2}+1\Big)\Big) }\geq \Delta' \;\bigg|\; E_1  \bigg) \notag\\
           & \leq 1+\frac{3+2\log(4T\Delta^2(\log^2(4T\Delta^2)+1)) +\sqrt{4\pi\log(4T\Delta^2(\log^2(4T\Delta^2)+1))}}{(1-\epsilon_T)^2\Delta^2}.
\end{align}
where the last inequality is due to  the second statement of Lemma~\ref{lem:colt17} and $-W_i$ are $1$-subGuassian. Let $E_2=\{E,1'=2\}$, using the same argument, we can derive same bound as in \eqref{eq:expectedtau_2} for $\mathbb{E}[\tau_2\mid E_2]$. Then We have
\begin{align}
\label{eq:unknone-bound-I_2-added}
    \Delta\EE[\tau_2]& \leq\Delta\EE[\tau_2\ind(E_1)]+\Delta\EE[\tau_2 \ind(E_2)]+\frac{2}{\Delta} \notag \\
    & \leq \Delta+\frac{2}{\Delta}  +\frac{3+2\log(4T\Delta^2(\log^2(4T\Delta^2)+1)) +\sqrt{4\pi\log(4T\Delta^2(\log^2(4T\Delta^2)+1))}}{(1-\epsilon_T)^2\Delta}.
  \end{align}

\noindent\textbf{Bounding term $I_3$:}
$\mathbb{P}(\tau_2< T,a=2)$ is the joint probability between the event that the chosen arm after $\stageThree$ is arm $2$ and the event that the following stopping condition will be satisfied in $\stageThree$:
\begin{align}
   |\mu'-\theta_{2',t_2}|<\sqrt{2/t_2\log\big(T/t_2\big(\log^2(T/t_2)+1\big)\big)}.
\end{align}
Similar to~\eqref{eq:bound-I_2-added-unknown},  
\begin{align}
\label{eq:unknown-added-1}
     I_3\leq\Delta T\PP[\tau_2< T,a=2 \mid E_1]\PP[E_1]+\Delta T\PP[\tau_2< T,a=2 \mid E_2]\PP[E_2]+\frac{2}{\Delta}.
\end{align}
Again, we first assume $E_1$ holds. By definition, we have that conditioned on $E_1$, $\sum_{i}^sW_i/s=\mu'-\theta_{2',s}-\Delta'$ and $W_i$ is 1-subgaussian with zero mean. Recall that we have $T(\Delta')^2\geq 4e^3$. By the third statement of Lemma~\ref{lem:colt17}, we have
\begin{align}\label{eq:2arm_unknowngap_term3}
 \PP(\tau_2<T,a=2 \mid E_1)   
&\leq  \PP\bigg(\exists t_2\geq 1, \mu'-\theta_{2',t_2}+\sqrt{\frac{2}{t_2}\log\Big(\frac{T}{t_2}\Big(\log^2\frac{T}{t_2}+1\Big)\Big) }  \leq 0 \;\bigg|\; E_1 \bigg) \notag\\
&\leq  \PP\bigg(\exists t_2\geq 1, \mu'-\theta_{2',t_2}-\Delta'+\Delta'+\sqrt{\frac{2}{t_2}\log\Big(\frac{T}{t_2}\Big(\log^2\frac{T}{t_2}+1\Big)\Big) }  \leq 0  \;\bigg|\; E_1 \bigg) \notag\\
 &\leq   \frac{4(16e^2+1)}{T(1-\epsilon_T)^2\Delta^2}. 
\end{align}
When $E_2$ holds, the proof is similar to the previous one. In particular, we only need to change the notations to $\Delta'=\EE[X_{i+\tau_1}]-\mu'$, which  satisfies conditioned on $E_2$, $\Delta'\in [(1-\epsilon_T)\Delta, (1+\epsilon_T)\Delta]$. Hence, we can derive same bound as \eqref{eq:2arm_unknowngap_term3} for term $\PP(\tau_2<T,a=2 \mid E_2)$ .   \\
Therefore, 
\begin{align}
\label{eq:unknone-bound-I_3}
    I_3= \Delta T\mathbb{P}(\tau_2< T,a=2)\leq \frac{2}{\Delta}+ \frac{4(16e^2+1)}{(1-\epsilon_T)^2\Delta}.
\end{align}

\noindent\textbf{Completing the proof:}
  Therefore, substituting \eqref{eq:unknowndeltau_stage1_bound},  \eqref{eq:unknowndeltau_1}, \eqref{eq:unknone-bound-I_2-added} and \eqref{eq:unknone-bound-I_3}  into \eqref{eq:unknowndeltau_regret_decomp}, we have
\begin{align}
\label{eq:unknowndelta-final}
    R_{\mu}(T)& \leq 2\Delta+ \frac{40+8\log^{+}(T_1\Delta^2/4)+2\sqrt{8\pi\log^{+}(T_1\Delta^2)}}{\Delta}  \\ 
    &\qquad +\frac{4(16e^2+2)+2\log(4T\Delta^2(\log^2(4T\Delta^2)+1)) +\sqrt{4\pi\log(4T\Delta^2(\log^2(4T\Delta^2)+1))}}{(1-\epsilon_T)^2\Delta}.
\end{align}
Recall that $\epsilon_T^2={2\log(T\Delta^2)}/({T_1\Delta^2})$.
Let $T_1=\log^2 T$. When $T\rightarrow \infty$, we have $\epsilon_T\rightarrow 0$, and hence $\lim_{T\rightarrow \infty}R_{\mu}(T)/T =2/\Delta$.
\end{proof}

\section{Proof of the Regret Bound of Algorithm~\ref{alg:double-exploration-unknown-gap-minimax}}\label{sec:proof_of_two_arm_all_optimal}
In this section, we provide the proof of Theorem~\ref{theorem:unknowndelta-minimax}. It will mostly follow the proof framework in Section \ref{sec:proof_of_thm_unknowdelta_2arm} for Theorem \ref{theorem:unknowndelta}. Recall that in the proof of Theorem \ref{theorem:unknowndelta}, we used the concentration inequalities in Lemma~\ref{lem:colt17} to upper bound $\tau_2$, which is the total number of times that the suboptimal arm $2'$ is pulled in $\stageThree$ of Algorithm \ref{alg:double-exploration-unknown-gap}. Now in Line~\ref{line-alg3-4} of Algorithm~\ref{alg:double-exploration-unknown-gap-minimax}, we added the extra stopping time $\log^2 T$ to $\stageThree$. Therefore, Lemma~\ref{lem:colt17} is no longer directly applicable here. Instead, we need the following refined concentration lemma.  
\begin{lemma}\label{lem:colt17-1}
Let $\delta\in(0,2/\log^4 T)$ and $M_1, M_2,\ldots,M_n$ be 1-subgaussian random variables with zero means. Denote $\hat{\mu}_n=\sum_{s=1}^n M_s/n$. Then the following inequality holds:
   \begin{equation}
        \mathbb{P}\bigg(\exists s\leq \log^2 T: \hat{\mu}_{s}+\sqrt{\frac{2}{s}\log\bigg(\frac{eT}{s}\bigg(\log^2\frac{T}{s}+1\bigg)\bigg)}-\delta\leq 0 \bigg)\leq \frac{16e^2\log T}{T}.
    \end{equation}
\end{lemma}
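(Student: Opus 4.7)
The proof plan parallels the peeling (stitching) argument behind Lemma~\ref{lem:colt17} part~3, adapted to the truncated horizon $s\le\log^2 T$ and the small \emph{negative} offset $-\delta$. Writing $W_s:=\sqrt{(2/s)\log(eT/s\cdot(\log^2(T/s)+1))}$ for the confidence width, the goal is to bound the probability that $\hat\mu_s+W_s$ dips below the microscopic threshold $\delta\le 2/\log^4 T$. A preliminary step is to verify that $W_s$ is monotone decreasing on $[1,\log^2 T]$, which follows from differentiating $sW_s^2/2=\log(eT/s\cdot(\log^2(T/s)+1))$ and noting that both $eT/s$ and $\log(T/s)$ decrease in $s$.

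Next, I would partition $[1,\log^2 T]$ into dyadic buckets $[s_k,s_{k+1})$ with $s_k=2^k$, giving $K=O(\log\log T)$ buckets. Within bucket $k$, monotonicity of $W_s$ yields the set inclusion
\begin{equation*}
\{\exists s\in[s_k,s_{k+1}):\hat\mu_s+W_s-\delta\le 0\}\subseteq\{\exists s\in[s_k,s_{k+1}):\hat\mu_s+(W_{s_{k+1}}-\delta)\le 0\},
\end{equation*}
so Lemma~\ref{lemma:maximal_ineq} applied with $N=s_k$ and $\gamma=W_{s_{k+1}}-\delta>0$ gives the per-bucket bound $P_k\le\exp(-s_k(W_{s_{k+1}}-\delta)^2/2)$. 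To decouple the $-\delta$ perturbation, I would expand $(W_{s_{k+1}}-\delta)^2\ge W_{s_{k+1}}^2-2\delta W_{s_{k+1}}$ and use the crude estimates $s_k\le\log^2 T$, $\delta\le 2/\log^4 T$, and $W_{s_{k+1}}\le 2\sqrt{\log T}$ to verify $s_k\delta W_{s_{k+1}}=O(1/\log^{3/2} T)=o(1)$ uniformly in $k$. Hence up to a harmless constant factor, $P_k\le C\exp(-s_k W_{s_{k+1}}^2/2)$, and since $s_{k+1}=2s_k$ the exponent simplifies to $\tfrac12\log(eT/s_{k+1}\cdot(\log^2(T/s_{k+1})+1))$, giving $P_k\le C\sqrt{s_{k+1}/(eT(\log^2(T/s_{k+1})+1))}$.

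The main obstacle is the final summation. A plain dyadic peeling loses a factor of $1/2$ inside the exponent and only produces $\sum_k P_k=O(1/\sqrt{T})$, which is looser than the stated $O(\log T/T)$ rate. To close this gap I expect one must invoke a mixture-martingale (uniform concentration) argument: integrate the exponential supermartingale $\exp(\lambda T_n-n\lambda^2/2)$ against a prior on $\lambda$ that adaptively matches the optimal $\lambda^\star=W_s$ at each $s$, in the spirit of the stitching construction that underlies Lemma~\ref{lem:colt17} part~3 and the minimax concentration tools of~\citet{menard2017minimax}. Such an adaptive device produces a uniform-in-time bound whose logarithmic penalty aligns with the specific $\log(eT/s\cdot(\log^2(T/s)+1))$ factor inside $W_s$, and summing the resulting per-$s$ estimates up to $\log^2 T$ delivers the target rate. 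The residual $-\delta$ perturbation folds in harmlessly by the $o(1)$ cross-term estimate above, and a careful tally of the multiplicative constants absorbed during the stitching yields the prefactor $16e^2$.
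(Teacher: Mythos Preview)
Your proposal has a genuine gap, and the fix you suggest is not the one the paper uses.

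First, you miss the reason the extra factor $e$ sits inside the logarithm. The paper's opening move is to show that for every $s\le\log^2 T$ and $\delta\le 2/\log^4 T$,
\[
\sqrt{\tfrac{2}{s}\log\!\big(\tfrac{eT}{s}(\log^2\tfrac{T}{s}+1)\big)}-\delta
\;\ge\;
\sqrt{\tfrac{2}{s}\log\!\big(\tfrac{T}{s}(\log^2\tfrac{T}{s}+1)\big)}.
\]
Writing $a(s)=2/s$ and $b(s)=(2/s)\log(T/s\cdot(\log^2(T/s)+1))$, this is $\sqrt{a+b}-\sqrt{b}=a/(\sqrt{a+b}+\sqrt{b})\ge (2/\log^2 T)/\log^2 T=2/\log^4 T\ge\delta$. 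So the $-\delta$ perturbation is absorbed \emph{exactly} by dropping the $e$, and the problem reduces to the clean $\delta=0$ statement with $T$ in place of $eT$. Your cross-term expansion $(W-\delta)^2\ge W^2-2\delta W$ is correct but unnecessary, and it leaves $\delta$ floating around in the exponent rather than eliminating it.

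Second, and more seriously, you correctly diagnose that dyadic peeling ($\beta=2$) loses a factor $1/2$ in the exponent and only delivers $O(1/\sqrt{T})$, but your proposed remedy---a mixture-martingale integration over $\lambda$---is not what is needed and you do not carry it out. The actual repair is already sitting in the proof of Lemma~\ref{lem:colt17} part~3: one peels with a \emph{data-dependent} ratio $\beta=C/(C-1)$, where $C=\log\!\big(\tfrac{T}{2f}(1+\log^2\tfrac{T}{2f})\big)$ and here $f=\log^2 T$ (replacing $f(\delta)$). This choice gives $A_1\le 4e^2 C e^{-C}$, and then $Ce^{-C}\le 4f/(T\log(T/2f))\le 4\log T/T$. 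The whole argument is a verbatim rerun of the $A_1$ bound in Lemma~\ref{lem:colt17}, only with $f=\log^2 T$ fixed rather than $f(\delta)=2\delta^{-2}\log(T\delta^2/4)$; no $A_2$ term appears because the range $s\le\log^2 T=f$ is already the ``small $s$'' regime. You were pointing at the right reference (M\'enard--Garivier) but misidentified the mechanism: it is optimized-$\beta$ peeling, not a mixture prior.
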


\begin{proof}[Proof of Theorem~\ref{theorem:unknowndelta-minimax}]
For the sake of simplicity, we use the same notation that used in Theorem~\ref{theorem:unknowndelta}. 
Similar to \eqref{eq:alg1gre}, the regret of Algorithm~\ref{alg:double-exploration-unknown-gap-minimax} can be decomposed as follows
\begin{align}\label{eq:unknowndeltau_regret_decomp-minimax}
    R_{\mu}(T)
    &\leq\underbrace{\Delta T_1 \mathbb{P}(\tau_1<T_1,1'=2)}_{I_1}
  +\underbrace{\Delta \EE[\tau_1]+\Delta\mathbb{E}[{\tau_2}]}_{I_2}+ \underbrace{\Delta T\mathbb{P}(\tau_2<\log^2 T,a=2)}_{I_3}\notag \\
  & \qquad +\underbrace{\PP(\tau_2=\log^2 T)R(IV\mid \tau_2=\log^2 T)}_{I_4},  
\end{align}
where terms $I_1$, $I_2$ and $I_3$ are  the same as or slightly different from that in \eqref{eq:alg1gre}, and term $I_4$ is a new regret caused by Lines \ref{line-pis}-\ref{line-pis-1} of Algorithm \ref{alg:double-exploration-unknown-gap-minimax}, where $\tau_2=\log^2 T$ and $R(IV\mid \tau_2=\log^2 T)$ represents the regret of Lines \ref{line-pis}-\ref{line-pis-1} in $\stageFour$.


\noindent{\bf Proof of Asymptotic Optimality:} The proof of the asymptotic optimality is almost the same as that in Section \ref{sec:proof_of_thm_unknowdelta_2arm}. Recall the definition in \eqref{eq:def:eps_T} that $\epsilon_T=\sqrt{2\log (T\Delta^2)/(T_1\Delta^2)}$. To derive the asymptotic regret bound, since we consider the case that $T\rightarrow \infty$, we can trivially assume $\epsilon_T\in(0,1/2)$ and $T\Delta^2\geq 16e^3$. Note that $\stageOne$ and $\stageTwo$ of Algorithm \ref{alg:double-exploration-unknown-gap-minimax} are exactly the same as that of Algorithm \ref{alg:double-exploration-unknown-gap}. Based on the proof in Section \ref{sec:proof_of_thm_unknowdelta_2arm}, it is easy to obtain the following results. 
\begin{align}
&\Delta T_1 \mathbb{P}(\tau_1<T_1,1'=2)=O\bigg(\frac{1}{\Delta}\bigg),\label{eq:minimax-c.3}\\
   &\Delta\EE[\tau_1]=O\bigg(\Delta+\frac{\log^+(T_1\Delta^2)}{\Delta}\bigg),\label{eq:minimax-c.4}\\
   &\Delta\EE[\tau_2]\leq\Delta+ \frac{O(1)+2\log(4e\cdot T\Delta^2(\log^2(4e\cdot T\Delta^2)+1))}{(1-\epsilon_T)^2\Delta}\notag\\
   &\qquad\qquad+\frac{\sqrt{4\pi\log(4e\cdot T\Delta^2(\log^2(4e\cdot T\Delta^2)+1))}}{(1-\epsilon_T)^2\Delta}. \label{eq:minimax-c.5}
\end{align}
which are due to \eqref{eq:unknowndeltau_stage1_bound}, \eqref{eq:unknowndeltau_1} and \eqref{eq:unknone-bound-I_2-added} respectively. 

For term $I_3$, $\mathbb{P}(\tau_2< \log^2 T,a=2)$ is the joint probability between the event that the chosen arm after $\stageThree$ is the suboptimal arm $2$ and the event that the following stopping condition will be satisfied after less than $\log^2 T$ time steps executed in $\stageThree$:
\begin{align}
   |\mu'-\theta_{2',t_2}|<\sqrt{2/t_2\log\big(eT/t_2\big(\log^2(T/t_2)+1\big)\big)}.
\end{align}
Recall the proof in Section \ref{sec:proof_of_thm_unknowdelta_2arm} and note that the above probability is smaller than that in Algorithm \ref{alg:double-exploration-unknown-gap} due to the extra requirement $\tau_2<\log^2 T$.  Therefore, by \eqref{eq:unknone-bound-I_3} we have
\begin{align}
\label{eq:minimax-c.7}
   I_3= \Delta T\mathbb{P}(\tau_2<\log^2 T,a=2)= O\bigg(\frac{1}{(1-\epsilon_T)^2\Delta}\bigg).
\end{align}
 Now, we bound the new term $I_4$. Note that $\tau_2=\log^2 T$ implies  Lines \ref{line-pis}-\ref{line-pis-1}  is performed in $\stageFour$. Let $\tau_3$ be the number of pulls of each arm in Line \ref{line-alg3-18}. Then the regret in Lines \ref{line-pis}-\ref{line-pis-1} can be upper bounded as $R(IV\mid\tau_2=\log^2 T)\leq \Delta\EE[\tau_3]+\Delta T\PP(\tau_3<T, a=2)$.  Similar to the proof in \eqref{eq:unknowndeltau_stage1_bound}, we have
 \begin{align}
 \label{eq:minimax-asym-2}
 \PP(\tau_3<T, a=2) &\leq \PP\bigg( \exists s\in\NN: 2s\leq T, \ p_{1,s}-p_{2,s}\leq -\sqrt{\frac{8\log(T/s)}{s}} \bigg) \leq \frac{15}{T\Delta^2}.
 \end{align}
Similar to the proof of \eqref{eq:unknowndeltau_1}, we have
 \begin{align}
 \label{eq:minimax-asym-3}
     \EE[\tau_3]=\sum_{s=1}^{T}\PP(\tau_3\geq s)&\leq \sum_{s=1}^{T}\PP\bigg(p_{1s}-p_{2s}\leq \sqrt{\frac{8\log(T/s)}{s}} \bigg) \notag \\
     &\leq 1+\frac{8\log(T\Delta^2)}{\Delta^2}+\frac{6}{\Delta^2}+\frac{2\sqrt{8\pi\log(T\Delta^2)}}{\Delta^2}.
 \end{align}
 Therefore, the regret generated by Lines \ref{line-pis}-\ref{line-pis-1} is
 \begin{align}
  \label{eq:mini-add}
         R(IV\mid \tau_2=\log^2T)=O\bigg(\Delta+\frac{
        \log (T\Delta^2)}{\Delta}\bigg).
 \end{align}
To obtain the final bound for term $I_4$, we need to calculate the probability $\PP(\tau_2=\log^2 T)$. Since
\begin{align}
   \EE[\tau_2]&=\EE[\tau_2|\tau_2=\log^2 T]\PP(\tau_2=\log^2 T)+\EE[\tau_2|\tau_2<\log^2 T]\PP(\tau_2<\log^2 T) \notag \\
   &\geq \log^2 T\PP(\tau_2=\log^2 T),
\end{align}
combining the above result with \eqref{eq:minimax-c.5}, we have 
 \begin{align}
 \label{eq:minimax-asy-4}
   \PP(\tau_2=\log^2 T)= O\bigg(\frac{\log(T\Delta^2)}{\Delta^2\log^2 T}\bigg).
 \end{align}
Combining \eqref{eq:mini-add} and \eqref{eq:minimax-asy-4} together, we have
 \begin{align*}
      \lim_{T\rightarrow \infty}\frac{I_4}{\log T}=\lim_{T\rightarrow \infty}\frac{\PP(\tau_2=\log^2 T)R(IV\mid \tau_2=\log^2 T)}{\log T}=0.
 \end{align*}
In conclusion, substituting the above results back into the regret decomposition in \eqref{eq:unknowndeltau_regret_decomp-minimax}, we have $\lim_{T\rightarrow \infty}R_{\mu}(T)/\log T=2/\Delta$, which proves the asymptotic optimality of Algorithm \ref{alg:double-exploration-unknown-gap-minimax}.

\noindent{\bf Proof of Minimax/Instance-Dependent Optimality:} 
When $T\Delta^2 \leq 16e^3$, we have $R_{\mu}(T)\leq T\Delta=O(\sqrt{T})$ and $R_{\mu}(T)\leq T\Delta= O(1/\Delta)$, which is trivially minimax/instance-dependant optimal. Hence,  we  assume $T\Delta^2 \geq 16e^3$ in the rest of the proof. Different from the previous proof, $\epsilon_T$ defined in \eqref{eq:def:eps_T} may not fall in the interval $(0,1/2)$ now. In particular, when the gap $\Delta$ is very small, the estimation of $\mu_{1'}$ will not be sufficiently accurate such that $\mu'\in[\mu_{1'}-\epsilon_T\Delta, \mu_{1'}+\epsilon_T\Delta]$. To handle this scenario, we will consider the following two cases. \\
{\bf Case 1: $\Delta>1/\log^4 T$.} Actually, if the unknown gap $\Delta$ is larger than $1/\log^4 T$, the proofs in the previous part for the asymptotic optimality still holds. Note that $T_1=\log^{10} T$, then $\epsilon_T=\sqrt{2\log(T\Delta^2)/T_1\Delta^2}\in(0,1/2)$. By the same argument  as in  \eqref{eq:minimax-c.3}, \eqref{eq:minimax-c.4}, \eqref{eq:minimax-c.5} and \eqref{eq:minimax-c.7}, we have $I_1+I_2+I_3=O(\Delta+\log(T\Delta^2)/\Delta)$. Also by \eqref{eq:mini-add}, we  have $I_4\leq R(IV|\tau_2=\log^2 T)=O(\Delta+{\log(T\Delta^2)}/{\Delta})$. Thus substituting these terms back into the regret decomposition in \eqref{eq:unknowndeltau_regret_decomp-minimax} yields $R_{\mu}(T)=O(\Delta+\log(T\Delta^2)/\Delta)=O(\Delta+\sqrt{T})$. \\
{\bf Case 2: $\Delta<1/\log^4 T$.} In this case, term $I_1$ and $\Delta\EE[\tau_1]$ can be still bounded in the same way  as  in   \eqref{eq:minimax-c.3}, \eqref{eq:minimax-c.4}, 
which leads to $I_1+\Delta\EE[\tau_1]=O({1}/{\Delta}+{\log^+(T_1\Delta^2)}/{\Delta})$. 

Now we bound terms $\EE[\tau_2]$ and $I_3$.  Recall that in the previous part for proving the asymptotic regret,  the bounds of term $\EE[\tau_2]$ in \eqref{eq:minimax-c.5} and term $I_3$ in \eqref{eq:minimax-c.7} are heavily based on the results in  \eqref{eq:unknone-bound-I_2-added} and \eqref{eq:unknone-bound-I_3}. However, the results in \eqref{eq:unknone-bound-I_2-added} and  \eqref{eq:unknone-bound-I_3} only hold based on the assumption $\epsilon_T\in (0,1/2)$, which is not true in the case $\Delta<1/\log^4 T$. Hence, \eqref{eq:minimax-c.5} and \eqref{eq:minimax-c.7} are not applicable here. Now, we bound these terms without assuming  $\epsilon_T\in(0,1/2)$.  For term $\Delta\EE[\tau_2]$, since we pull $2'$ at most $\log^2 T$ times in Stage {\it III} of Algorithm \ref{alg:double-exploration-unknown-gap-minimax}, it can be trivially seen that $\Delta\EE[\tau_2]\leq \Delta\log^2 T\leq 1$. 

For term $I_3$, note that we have pulled arm $1'$ for $T_1=\log^{10} T$ times after $\stageTwo$. Applying Lemma~\ref{lem:subguassian}, we obtain
\begin{align*}
    \PP(|\mu'-\mu_{1'}|\geq 1/\log^4 T)\leq  2/T^{1/2\log T}\leq 1/T, 
\end{align*}
where $\mu'$ is the average reward for arm $1'$ at the end of $\stageTwo$ and we used the fact that $ T\geq e^3$. Define event $E'=\{|\mu'-\mu_{1'}|\leq 1/\log^4 T\}$ and its complement as $E'^{c}$.  We further have
\begin{align}\label{eq:proof_alg_minimax_I3_decomp}
    I_3&\leq \Delta T\PP(\tau_2<\log^2 T, a=2 \mid E')+\Delta T\PP(E'^c) \notag  \\
    &  \leq \Delta T\PP(\tau_2<\log^2 T  \mid E')+\Delta. 
\end{align}
Conditioned on event $E'$, we have  $|\mu'-\mu_{2'}|\leq|\mu'-\mu_{1'}|+|\mu_{1'}-\mu_{2'}|\leq 2/\log^4 T$ since $|\mu_{1'}-\mu_{2'}|=\Delta<1/\log^4 T$.  Based on this observation, we have 
\begin{align*}
    & \PP(\tau_2<\log^2 T\mid E') \notag \\
    & \leq \PP\bigg( \exists t_2\leq \log^2 T, |\mu'-\theta_{2',t_2}|\geq   \sqrt{\frac{2}{t_2}\log \bigg(\frac{eT}{t_2}\bigg( \log^2 \frac{T}{t_2}+1\bigg) \bigg)} \ \bigg| \ E' \bigg) \notag \\
    &\leq \PP\bigg(\exists t_2\leq \log^2 T, -(\mu'-\theta_{2',t_2})+\sqrt{\frac{2}{t_2}\log \bigg(\frac{eT}{t_2}\bigg( \log^2 \frac{T}{t_2}+1\bigg) \bigg)}\leq0 \ \bigg| \ E'\bigg)\notag \\
    &\qquad + \PP\bigg(\exists t_2\leq \log^2 T, -(\mu'-\theta_{2',t_2})-\sqrt{\frac{2}{t_2}\log \bigg(\frac{eT}{t_2}\bigg( \log^2 \frac{T}{t_2}+1\bigg) \bigg)}\geq0 \ \bigg| \ E'\bigg) \notag \\
    &\leq \PP\bigg(\exists t_2\leq \log^2 T, (\mu'-\mu_{2'})-(\mu'-\theta_{2',t_2})-|\mu'-\mu_{2'}|+\sqrt{\frac{2}{t_2}\log \bigg(\frac{eT}{t_2}\bigg( \log^2 \frac{T}{t_2}+1\bigg) \bigg)}\leq0 \ \bigg| \ E'\bigg)\notag \\
    &\qquad + \PP\bigg(\exists t_2\leq \log^2 T, (\mu'-\mu_{2'})-(\mu'-\theta_{2',t_2})+|\mu'-\mu_{2'}|-\sqrt{\frac{2}{t_2}\log \bigg(\frac{eT}{t_2}\bigg( \log^2 \frac{T}{t_2}+1\bigg) \bigg)}\geq0 \ \bigg| \ E'\bigg) \notag \\
    &\leq \frac{32e^2\log T}{T},
\end{align*}
where the first inequality is due to the stopping rule of $\stageThree$ in Algorithm \ref{alg:double-exploration-unknown-gap-minimax}, the second inequality is due to the fact that $\{|x-y|\geq z\}\subset\{x-y\geq z\}\bigcup\{x-y\leq -z\}$, the third inequality is due to the fact that $\mu'-\mu_{2'}-|\mu'-\mu_{2'}|\leq 0$ and   $\mu'-\mu_{2'}+|\mu'-\mu_{2'}|\geq 0$,
and in the last inequality we apply Lemma \ref{lem:colt17-1} with $\delta=|\mu'-\mu_{2'}|$. 
Therefore, substituting the above inequality back into \eqref{eq:proof_alg_minimax_I3_decomp},  we have the following bound for term $I_3$:
\begin{align*}
   I_3&\leq \Delta T\PP(\tau_2<\log^2 T| E')+\Delta \leq \Delta T \frac{16e^2\log T}{T}+\Delta\leq 1,
\end{align*}
where the last inequality is due to $\Delta<1/\log^4 T$ and the fact that $T\geq e^3$.
For term $I_4$, conditioned on $\tau_2=\log^2 T$, the regret of $\stageFour$ (namely, term $R(IV \mid \tau_2=\log^2 T)$) only depends on the data collected in Lines \ref{line-pis}-\ref{line-pis-1} of Algorithm \ref{alg:double-exploration-unknown-gap-minimax}, which is therefore the same as in \eqref{eq:mini-add}. we have 
\begin{align*}
    I_4\leq R(IV \mid \tau_2=\log^2 T)=O\bigg(\Delta+\frac{\log(T\Delta^2)}{\Delta}\bigg).
\end{align*} 
Hence, for case 2, the total regret $R_{\mu}(T)=O(\Delta+\log(T\Delta^2)/\Delta)=O(\Delta+\sqrt{T})$.
\end{proof}

\section{Proof of the Regret Bound of Algorithm \ref{alg:double-exploration-unknown-gap-K}}\label{sec:proof_of_thm_k_arm}
In this section, we prove the regret bound of DETC for $K$-armed bandits.
\begin{proof}[Proof of Theorem \ref{theorem:unknowndelta_Karm}]
Let $T_i$ be the total number of pulls of arm $i$ throughout the algorithm, $i\geq 2$.  Since by definition the regret is $R_{\mu}(T)=\sum_{i}\EE[T_i\Delta_i]$, it suffices to prove 
\begin{align}
    \lim_{T\rightarrow \infty} \frac{\EE[T_i]}{\log (T)}=\frac{2}{\Delta_i^2}.
\end{align}
Denote $\tau_{2,i}$ as the  number of pulls of arm $i$ in {\it Stage III} of Algorithm \ref{alg:double-exploration-unknown-gap-K}. Similar to \eqref{eq:no_pull_arm_decomp} and \eqref{eq:alg1gre}, the term $\EE[T_i]$ can be decomposed as follows
\begin{align}\label{eq:unknowndeltau_regret_decomp-top-k}
    \EE[T_i]
    &\leq \sqrt{\log T}+\underbrace{ \log^2 T\mathbb{P}(1'=i)}_{I_1}
   +\underbrace{\mathbb{E}[{\tau_{2,i}}]}_{I_2}+ \underbrace{T\mathbb{P}(\hat{\mu}_{1'}\geq\theta_{j',t_j}, \failflag=0,a=i)}_{I_3}\notag \\
  & +\underbrace{ \log^2 T \mathbb{P}(\failflag=1)+T\mathbb{P}(\failflag=1,a=i)}_{I_4},  
\end{align}
where the last term $I_4$ characterizes the failing probability of the first three stages and the ETC step in the last two lines of Algorithm \ref{alg:double-exploration-unknown-gap-K}.

\noindent\textbf{Bounding term $I_1$:}
 Let $\hat{\mu}_{i,s}$ be the estimated reward of arm $i$ after its $s$-th pull. Let $\tau_1=\sqrt{\log T}$. Let $X$ be the reward of arm $1$ and $Y^i$ be the reward of arm $i$ for $i>1$. Let $S^i_{n}=X_1-Y^i_{1}+\cdots +X_n-Y^i_{n}$. After pulling arm $1$ and arm $i$ $\tau_1$ times, using Lemma~\ref{lem:subguassian},  we get
\begin{align}
\label{eq:k-I_1-1}
    \PP(S^i_{\tau_1}/\tau_1\leq \Delta_i-\epsilon)\leq \exp(-\tau_1\epsilon^2/4).
\end{align}
For sufficiently large $T$ such that $T>K$ and for all $i$, it holds
\begin{equation}
\label{kassume1}
    \frac{\sqrt{\log T}}{\log K+2\log \log T}\geq\frac{4}{\Delta_i^2},
\end{equation}
Setting $\epsilon=\Delta_i$ in~\eqref{eq:k-I_1-1}, we have $\PP(\hat{\mu}_{1,\tau_1}\leq \hat{\mu}_{i,\tau_1})\leq 1/( K\log^2 T)$. Applying union bound, we have 
\begin{align}
\label{eq:karm-e5}
    \PP(\hat{\mu}_{1,\tau_1}\geq \max_{i} \hat{\mu}_{i,\tau_1})=\PP(1'=1) \geq 1-\frac{1}{\log^2 T},
\end{align}
which further implies  $I_1\leq 1$.

\noindent\textbf{Bounding term $I_2$:}
 Let $\epsilon_i=\sqrt{4\log (T\Delta_i^2)/((\log T)^2\Delta_i^2)}$.  Applying  Lemma~\ref{lem:subguassian}, we have 
\begin{align}
\label{eq:top-k-mu'-bound}
    \PP(\mu'\notin (\mu_{1'}-\epsilon_i\Delta_i,\mu_{1'}+\epsilon_i\Delta_i))\leq 2/(T\Delta_i^2).
\end{align} Similar to~\eqref{eq:assume2}, we choose a large $T$ such that for all $\Delta_i>0$,
\begin{align}
\label{kassume2}
    \sqrt{\frac{4\log(T\Delta_i^2)}{\Delta_i^2\log^2 T}}\leq \frac{1}{(\log T)^{\frac{1}{3}}},
\end{align}
then $\epsilon_i\leq 1/(\log T)^{\frac{1}{3}}$.
Let $E$ be the event $\mu'\in (\mu_{1'}-\epsilon_i\Delta_i,\mu_{1'}+\epsilon_i\Delta_i)$.  Let $E_1$ be the event $\{E,1'=1\}$. Note that $\Pr(1'=1)\geq 1-1/\log^2 T$, $\Pr(E^c)\leq 2/(T\Delta_i^2)$ and $\tau_{2,i}\leq \log^2 T$, the term $I_2$ can be decomposed as 
\begin{align}
\label{eq:top-k-bound-Etau2i}
    \EE[\tau_{2,i}]& =\EE[\tau_{2,i} \ind(1'=1)]+\EE[\tau_{2,i} \ind(1'\neq 1)] \notag \\
    & \leq \EE[\tau_{2,i} \ind(1'=1)]+1 \notag \\
      & \leq \EE[\tau_{2,i} \ind(E_1)]+\EE[\tau_{2,i} \ind(E^c)]+1 \notag \\
    & \leq  1+\frac{2}{\Delta_i}+\EE[\tau_{2,i} \mid E_1] .
\end{align}
We can derive the same bound as $\EE[\tau_2 \mid E_1]$ in~\eqref{eq:expectedtau_2} for  $\EE[\tau_{2,i} \mid E_1]$. We have
\begin{align}\label{eq:expectedtau_2-top-k}
        I_2&=\mathbb{E}[\tau_{2,i}\mid E_1]\notag\\
        &\leq 1+\frac{3+2\log(4T\Delta_i^2(\log^2(4T\Delta_i^2)+1)) +\sqrt{4\pi\log(4T\Delta_i^2(\log^2(4T\Delta_i^2)+1))}}{(1-\epsilon_i)^2\Delta_i^2}.
\end{align}

\noindent\textbf{Bounding term $I_3$:}
When $\failflag=0$, we can follow the same proof for bounding $I_3$ in~\eqref{eq:2arm_unknowngap_term3}. Therefore, we can obtain
\begin{align}
\label{eq:unknone-bound-I_3-top-k}
    I_3\leq \frac{2}{\Delta_i^2}+ \frac{4(16e^2+1)}{(1-\epsilon_i)^2\Delta_i^2}.
\end{align}

\noindent\textbf{Bounding term $I_4$:}
For term $\PP(\failflag=1)$, similar to~\eqref{eq:top-k-bound-Etau2i}, we have
\begin{align}\label{eq:fail_eq_1}
    \PP(\failflag=1) &= \PP(\failflag=1\mid 1'=1) \Pr(1'=1)+\PP(\failflag=1\mid 1'\neq 1)\Pr(1'\neq 1) \notag \\
    & \leq  \PP(\failflag=1\mid 1'=1)+\frac{1}{\log^2 T} \notag \\
    & \leq   \PP(\failflag=1\mid E,1'=1)\Pr(E\mid 1'=1)+\Pr(E^c \mid 1'=1) +\frac{1}{\log^2 T} \notag \\
    & \leq  \PP(\failflag=1\mid E_1)+\frac{2}{T\Delta_i^2}+ \frac{1}{\log^2 T},
\end{align}
where the first and third inequalities are due to the law of total probability, the second inequality is due to \eqref{eq:karm-e5}, and the last inequality is due to \eqref{eq:top-k-mu'-bound}. Let $\Delta'_i=\mu'-\EE[Y^i_1]$, $W_r={\mu'}-Y^i_{r+\tau_1}-\Delta'_i$.  We have that conditioned on $E_1$, $\sum_{r}^sW_r/s=\mu'-\theta_{2',s}-\Delta'$ and $W_r$ is 1-subgaussian with zero mean. By the third statement of Lemma~\ref{lem:colt17}, we have
\begin{align}\label{eq:2arm_unknowngap_term3-top-k-1}
 \PP(\failflag=1 \mid E_1)   
&\leq  \PP\bigg(\exists t_i\geq 1, \mu'-\theta_{i',t_i}+\sqrt{\frac{2}{t_i}\log\Big(\frac{T}{t_i}\Big(\log^2\frac{T}{t_i}+1\Big)\Big) }  \leq 0 \ \bigg | E_1  \bigg) \notag\\
 &\leq   \frac{4(16e^2+1)}{T(1-\epsilon_i)^2\Delta_i^2}. 
\end{align}
For term $T\mathbb{P}(\failflag=1,a=i)$, we choose large enough $T$ to ensure 
\begin{align}
\label{kassume3}
    \exp(-\Delta_i^2\log^2 T/4)\leq \frac{1}{T}.
\end{align}
Then, following the similar  argument in ~\eqref{eq:lagretau2},  we can obtain
\begin{align}\label{eq:fail_eq_1_a_eq_i}
    \PP(\failflag=1,a=i)\leq \frac{1}{T}.
\end{align}
Therefore, substituting \eqref{eq:fail_eq_1}, \eqref{eq:2arm_unknowngap_term3-top-k-1} and \eqref{eq:fail_eq_1_a_eq_i} into the definition of $I_4$ in \eqref{eq:unknowndeltau_regret_decomp-top-k}, we have 
\begin{align}
\label{eq:karmt-I4}
    I_4=\log^2 T\PP(\failflag=1)+T \PP(\failflag=1,a=i) \notag \\
    \leq 2+\frac{2\log^2 T}{T\Delta_i^2}+ \frac{4(16e^2+1)\log^2 T}{T(1-\epsilon_i)^2\Delta_i^2}.
\end{align}

\noindent\textbf{Completing the proof:} we can choose a  sufficiently large $T$ such that all the conditions ~\eqref{kassume1}, \eqref{kassume2}, \eqref{kassume3} are satisfied simultaneously. Substituting~\eqref{eq:karmt-I4}, \eqref{eq:unknone-bound-I_3-top-k}, \eqref{eq:expectedtau_2-top-k} and $I_1\leq 1$ back into \eqref{eq:unknowndeltau_regret_decomp-top-k}, we have
\begin{align*}
    \EE[T_i]\leq 4+\frac{C+2\log(4T\Delta_i^2(\log^2(4T\Delta_i^2)+1)) +\sqrt{4\pi\log(4T\Delta_i^2(\log^2(4T\Delta_i^2)+1))}}{(1-\epsilon_i)^2\Delta_i^2},
\end{align*}
for all $i\geq 2$, where $C>0$ is a universal constant. Note that for $T\rightarrow \infty$,  $\epsilon_i\leq 1/(\log T)^{\frac{1}{3}}$. Hence we have $\lim_{T\rightarrow \infty} \EE[T_i]/\log T=2/\Delta_i^2$ and $\lim_{T\rightarrow \infty} R_{\mu}(T)/\log T=\sum_{i}2/\Delta_i$.
\end{proof}

\section{Proof of the Regret Bound of Algorithm \ref{alg:double-exploration-unknown-gap_anytime}}
\label{sec:detc-anytime}
Now we provide the proof of the regret bound of the anytime version DETC algorithm.
\begin{proof}
The regret of Algorithm \ref{alg:double-exploration-unknown-gap_anytime} is caused by pulling the suboptimal arm $2$, which gives rise to $ R_{\mu}(T)=\sum_{t=1}^{T}\Delta\EE[\ind\{A_t=2\}]$. We will consider two intermediate points $t=\sqrt{\log T}$ and $t=\log^2 T$. Then we can decompose the regret of Algorithm \ref{alg:double-exploration-unknown-gap_anytime} as follows:
\begin{align}\label{eq:anytime-regret-decomp}
    R_{\mu}(T)=\underbrace{\sum_{t=1}^{\sqrt{\log T}}\Delta\EE[\ind\{A_t=2\}]}_{I_1}+\underbrace{\sum_{t=\sqrt{\log T}}^{\log^2 T}\Delta\EE[\ind\{A_t=2\}]}_{I_2}+\underbrace{\sum_{t=\log^2 T}^{T}\Delta\EE[\ind\{A_t=2\}]}_{I_3}.
\end{align}
In what follows, we will bound these terms separately.

\noindent\textbf{Bounding term $I_1$:}
Since the horizon length in this part is only $\sqrt{\log T}$, we can directly upper bound it as $I_1\leq \Delta\sqrt{\log T}$.

\noindent\textbf{Bounding term $I_2$:} Since Algorithm \ref{alg:double-exploration-unknown-gap_anytime} has multiple epochs, we will rewrite the regret in $I_2$ in an epoch-wise fashion. Specifically, without loss of generality, we assume that there are two integers $r_1$ and $r_2$ such that $2^{r_1}=\sqrt{\log T}$ and $2^{r_2}=\log^2 T$ respectively. Denote  $\tau_{2,r}$ to be the total number of pulls of arm 2 in the $r$-th epoch, $r=1,2,\ldots$. Then we can rewrite $I_2$ in the following way:
\begin{align}\label{eq:anytime-I2-decomp}
    I_2=\sum_{t=\sqrt{\log T}}^{\log^2 T}\Delta\EE[\ind\{A_t=2\}]=\sum_{r=r_1}^{r_2}\Delta\EE[\tau_{2,r}].
\end{align}
Note that the chosen arm $1'$ may be different in different epochs. In order to make the presentation more precise, we use $1'(r)$ to denote the arm that is chosen by Line~\ref{algline-anytime-choose-best-arm} in the $r$-th epoch of Algorithm \ref{alg:double-exploration-unknown-gap_anytime}. Also note that at the beginning of epoch $r$, the current time step of the algorithm is $t=2^{r}$. Let $\epsilon_T=1/\log\log T$ and define event
\begin{align*}
     E=\Bigg\{\bigcap_{r=r_1,r_1+1,\ldots,r_2} \big\{ |\hat{\mu}_{1'(r)}(2^r)-\mu_{1'(r)}|< \epsilon_T\Delta \big\}  \Bigg\}.
\end{align*}
Event $E$ essentially says that at the beginning of any epoch $r\in[r_1,r_2]$, the average reward of the chosen arm $1'(r)$ is always close to its mean reward within a margin $\epsilon_T\Delta$. The characterization of this event is the key to analyzing the number of suboptimal arms pulled in each epoch. 

Now we compute the probability that event $E$ happens. For any $r\in[r_1,r_2]$,  at the beginning of the $r$-th epoch, we know that the algorithm has run for $2^r$ times steps. Recall the definition of $T_k(t)$, the number of times that arm $1'(r)$ is pulled is $T_{1'(r)}(2^r)$.  Since arm $1'(r)$ is  the arm that has been pulled for the most times so far, it must have been pulled for more than $2^{r-1}\geq2^{r_1-1}=\sqrt{\log T}/2$ times, namely, $T_{1'(r)}(2^r)\geq\sqrt{\log T}/2$. By Lemma \ref{lem:subguassian}, we have
\begin{align}
\label{eq:anytime-bound-1'}
    \PP\big(\big|\hat{\mu}_{1'}\big(2^r\big)-\mu_{1'}\big|\geq \epsilon_T\Delta\big)&\leq 2\exp\bigg(-\frac{T_{1'(r)}(2^r)\epsilon_T^2\Delta^2}{2}\bigg)\notag\\
    &\leq 2\exp\bigg(-\frac{\sqrt{\log T}\epsilon_T^2\Delta^2}{4}\bigg)\notag\\
    &\leq \frac{2}{\log^4 T},
\end{align}
where the last inequality holds due to $\epsilon_T=1/\log\log T$ and when 
$T$ is sufficiently large $T$ such that
\begin{align}
\label{eq:con-anytime-1}
    \frac{\sqrt{\log T}}{4\log\log T}\geq \frac{4(\log\log T)^2}{\Delta^2}.
\end{align}
Let $E^c$ be the complement of event $E$. Then it holds that
\begin{align}
\label{eq:anytime-E^c-1}
    \PP(E^c) &=\PP\bigg(\bigg\{\bigcap_{r=r_1,r_1+1,\ldots,r_2} \big\{ |\hat{\mu}_{1'}(2^r)-\mu_{1'}|<\epsilon_T\Delta \big\}  \bigg\}^c\bigg) \notag \\
    &=\PP\bigg(\bigcup_{r=r_1,r_1+1,\ldots,r_2} \big\{ |\hat{\mu}_{1'}(2^r)-\mu_{1'}|\geq\epsilon_T\Delta \big\}  \bigg) \notag \\    
    & \leq \sum_{r=r_1}^{r_2}\PP(|\hat{\mu}_{1'}(2^r)-\mu_{1'}|\geq \epsilon_T\Delta)\notag\\
    &\leq 1/\log^3 T,
\end{align}
where in the first inequality we applied the union bound over all epochs $r\in[r_1,r_2]$, and the last inequality is due to \eqref{eq:anytime-bound-1'} and  $r_2=2\log_2\log T\leq \log T/2$ for sufficiently large $T$. 

Based on the characterization of event $E$, we bound the summation of $\tau_{2,r}$ in \eqref{eq:anytime-I2-decomp} as follows:
\begin{align}
\label{eq:boundtau2r}
    \sum_{r=r_1}^{r_2}\EE[\tau_{2,r}]&\leq \sum_{r=r_1}^{r_2}\EE[\tau_{2,r}|E]\PP(E)+\sum_{r=1}^{r_2}2^r\PP(E^c) \notag \\
    &\leq \sum_{r=r_1}^{r_2}\EE[\tau_{2,r}|E]\PP(E)+\frac{2^{r_2+1}}{\log^3 T} \notag \\
    & \leq \sum_{r=r_1}^{r_2}\EE[\tau_{2,r}|E]\PP(E)+\frac{2}{\log T}. 
\end{align}
where in the first inequality we used the fact the $\tau_{2,r}$ is at most $2^r$ in the $r$-th epoch, the second inequality is due to  \eqref{eq:anytime-E^c-1}, and the last inequality is due to $2^{r_2}=\log^2 T$. 

In the $r$-th epoch of Algorithm \ref{alg:double-exploration-unknown-gap_anytime}, $\tau_{2,r}$ is contributed by two part: the number of pulls of arm 2 in Line \ref{line-alg5-5} and the number of pulls of arm 2 in Line \ref{line-alg5-8}. We denote them as $c_r^+$ and $c_r^-$ respectively such that $\tau_{2,r}=c_r^++c_r^-$. In epoch $r\in[r_1,r_2]$, by the fact that $\EE[x]=\sum_{s}\PP(x>s)$ we have
\begin{align}
\label{eq:bounding-cr+-I_2}
   &\EE[c^+_r|E]\PP(E)\notag\\
   &= \sum_{s=1}^{T}\PP(c^+_r \geq s \mid E)\PP(E) \notag \\
   &\leq\sum_{t=2^r}^{2^{r+1}} \PP\Bigg(\hat\mu_1(t)-\hat{\mu}_{2}(t)\leq \sqrt{\frac{2}{T_{2}(t)}\log\bigg(\frac{r\cdot 2^r}{T_{2}(t)}\bigg(\log^2\bigg(\frac{r\cdot 2^r}{T_{2}(t)}\bigg)+1\bigg)\bigg)}  \ \Bigg|E\Bigg)\PP(E) \notag\\   
   &\leq\sum_{t=2^r}^{2^{r+1}} \PP\Bigg(\mu_1-\epsilon_T\Delta-\hat{\mu}_{2}(t)\leq \sqrt{\frac{2}{T_{2}(t)}\log\bigg(\frac{r\cdot 2^r}{T_{2}(t)}\bigg(\log^2\bigg(\frac{r\cdot 2^r}{T_{2}(t)}\bigg)+1\bigg)\bigg)}  \ \Bigg|E\Bigg)\PP(E)  \notag\\
 &\leq \sum_{t=1}^{2^{r+1}} \PP\bigg(\hat{\mu}_{2}(t)-\mu_1+\Delta+ \sqrt{\frac{2}{T_{2}(t)}\log\bigg(\frac{r\cdot 2^r}{T_{2}(t)}\bigg(\log^2(\frac{r\cdot 2^r}{T_{2}(t)})+1\bigg)\bigg)}\geq(1-\epsilon_T)\Delta  \bigg),
\end{align}
where in the first inequality, $c_r^+> 0$ (arm 2 is pulled in Line \ref{line-alg5-5}) means the arm chosen in this epoch is arm $1'(r)=2$ and the stopping condition in Line \ref{line-stopping} of Algorithm~\ref{alg:double-exploration-unknown-gap_anytime} is satisfied, in the second inequality, we used the fact that conditioned on event $E$, it holds that $\hat\mu_1(t)\geq\mu_1-\epsilon_T\Delta$, and in the last inequality, we used the fact that $\PP(x|y)\PP(y)=\PP(x,y)\leq\PP(x)$ for any random variables $x$ and $y$. Now note that $\hat\mu_2(t)-\mu_1+\Delta=\hat\mu_2(t)-\mu_2$ is $1$-subgaussian with zero mean. Applying the second statement of Lemma~\ref{lem:colt17} with $\delta=(1-\epsilon_T)\Delta$, we have
\begin{align}\label{eq:beforetau2-cr+}
    \sum_{r=r_1}^{r_2}\EE[c^+_r|E]\PP(E)&=   \sum_{r=r_1}^{r_2} O\bigg(\frac{\log(r\cdot 2^r\Delta^2)}{(1-\epsilon_T)^2\Delta^2} \bigg) \notag \\
& =2\log \log T\cdot  O\bigg(\frac{\log(r_2\cdot 2^{r_2}\Delta^2)}{\Delta^2} \bigg) \notag \\
& =O(\sqrt{\log T}),
\end{align}
where the last equality is from the upper bound of $1/\Delta^2$ in \eqref{eq:con-anytime-1}  and the following upper bound of $\Delta^2$:
 \begin{align}
 \label{eq:con-anytime-2}
     \Delta\leq \log T \qquad \text{ and }  \qquad \log \log T\geq 4,
 \end{align} 
which holds for sufficiently large $T$.

Now we bound $\EE[c_r^-|E]\PP(E)$. Note that when $c_r^->0$ (arm 2 is pulled in Line \ref{line-alg5-8}), we know that (1) the stopping condition in Line \ref{line-stopping} of Algorithm \ref{alg:double-exploration-unknown-gap_anytime} is violated by some $t\leq2^{r+1}$; and (2) arm $a(r)=2$.  Therefore, we have
\begin{align}\label{eq:anytime-c_r^-}
  \EE[c^-_r|E]\PP(E)
  &=\EE[c^-_r|E,c_r^->0]\PP(E)\PP(c_r^->0|E)\notag\\
  &=\EE[c^-_r|E,c_r^->0]\PP(E)\big[\PP(c_r^->0, 1'=1\mid E)+\PP(c_r^->0, 1'=2\mid E)\big].
\end{align}
For the first term in \eqref{eq:anytime-c_r^-}, similar to the proof in \eqref{eq:bounding-cr+-I_2}, we have
\begin{align}
\label{eq:boundingterm-cr-2I_2}
   &\EE[c^-_r|E,c_r^->0]\PP(E)\PP(c_r^->0, 1'=1\mid E) \notag\\
   &\leq 2^r  \PP\bigg(\exists t\leq 2^{r+1}: {\mu}_1-\epsilon_T\Delta-\hat{\mu}_{2}(t)  <-\sqrt{\frac{2}{T_{2}(t)}\log\bigg(\frac{r\cdot 2^r}{T_{2}(t)}\bigg(\log^2\bigg(\frac{r\cdot 2^r}{T_{2}(t)}\bigg)+1\bigg)\bigg)} \bigg)\notag\\
   &\leq 2^{r+2}O\bigg(\frac{1}{r\cdot 2^r\Delta^2}\bigg),
\end{align}
where in the first inequality we used the fact that $c_r^-\leq 2^r$ and $\hat\mu_1(t)\geq\mu_1-\epsilon_T\Delta$, and the second inequality is due to third statement of Lemma~\ref{lem:colt17}. Using exactly the same argument, we have
\begin{align}
\label{eq:boundingterm-cr--I_2}
   &\EE[c^-_r|E,c_r^->0]\PP(E)\PP(c_r^->0, 1'=2\mid E) \notag\\
   &\leq 2^r  \PP\bigg(\exists t\leq 2^{r+1}:  {\mu}_2+\epsilon_T\Delta-\hat{\mu}_{1}(t)  >\sqrt{\frac{2}{T_{2}(t)}\log\bigg(\frac{r\cdot 2^r}{T_{2}(t)}\bigg(\log^2\bigg(\frac{r\cdot 2^r}{T_{2}(t)}\bigg)+1\bigg)\bigg)} \bigg)\notag\\
   &\leq 2^{r+2}O\bigg(\frac{1}{r\cdot 2^r\Delta^2}\bigg).
\end{align}
Therefore, it holds that
\begin{align}
    \label{eq:bounding-term-c_r^-}
    \EE[c_r^-|E]\PP(E)=O\bigg(\frac{1}{r\Delta^2} \bigg).
\end{align}
We further have
\begin{align}
\label{eq:anytime-c_r^-final}
   \sum_{r=r_1}^{r_2}\EE[c_r^-|E]\PP(E)
    =  &  \sum_{r=r_1}^{r_2} O\bigg(\frac{1}{r\cdot \Delta^2}\bigg) 
   =  O\bigg(\frac{\log r_2}{r_1\Delta^2}\bigg)=O\bigg(\frac{1}{\Delta^2}\bigg).
\end{align}
Combining \eqref{eq:boundtau2r},  \eqref{eq:beforetau2-cr+} and \eqref{eq:anytime-c_r^-final} together, we have
\begin{align}
\label{eq:T_2(tau)}
    I_2&=\Delta\sum_{r=r_1}^{r_2}\EE[\tau_{2,r}]\notag\\
    &=\Delta\sum_{r=r_1}^{r_2}\EE[c_r^+|E]\PP(E)+\Delta\sum_{r=r_1}^{r_2}\EE[c_r^-|E]\PP(E)+\frac{2\Delta}{\log T}\notag\\
    &=O\bigg(\Delta\sqrt{\log T}+\frac{1}{\Delta}+\frac{\Delta}{\log T}\bigg).
\end{align}

\noindent{\bf Bounding term $I_3$:} 
We start with decomposing $I_3$ into two terms. The first term is the number of pulls of arm 2 at Line~\ref{line-alg5-5}, i.e., $\sum_{r=r_2+1}^{\log_2 T}c_r^+$ and the second term is the number of pulls of arm 2 at Line~\ref{line-alg5-8}, i.e., $\sum_{r=r_2+1}^{\log_2 T}c_r^-$. Therefore, we have
\begin{align}
\label{eq:anytime-I_3}
  I_3&  =\EE\Bigg[\sum_{r=r_2+1}^{\log_2 T}c_{r}^-\Bigg]+\EE\Bigg[\sum_{r=r_2+1}^{\log_2 T}c_{r}^+\Bigg].
\end{align}
Define event 
\begin{align*}
     E'=\Bigg\{\bigcap_{r=r_2+1,\ldots,\log_2 T} \big\{ |\hat{\mu}_{1'(r)}(2^r)-\mu_{1'(r)}|< \epsilon_T\Delta \big\}  \Bigg\}.
\end{align*}
$E'$ says that for epoch $r\geq r_2+1$, the average reward of  $1'(r)$ is close to its mean reward within a margin $\epsilon_T\Delta$, which  plays a similar role as $E$ does. 
Now, we compute the probability that $E'$ happens. Since arm $1'(r)$ is the arm that has been pulled for the most times so far, $1'(r)$ have been pulled for more than $2^{r_2}\geq \log^2 T$ times. By Lemma \ref{lem:subguassian}, we have
\begin{align}
\label{eq:anytime-bound-1'-T}
    \PP\big(\big|\hat{\mu}_{1'}\big(2^r\big)-\mu_{1'}\big|\geq \epsilon_T\Delta\big)&\leq 2\exp\bigg(-\frac{T_{1'(r)}(2^r)\epsilon_T^2\Delta^2}{2}\bigg)\notag\\
    &\leq 2\exp\bigg(-\frac{\log^2 T\epsilon_T^2\Delta^2}{2}\bigg)\notag\\
    &\leq \frac{2}{T^2},
\end{align}
where the last inequality holds due to $\epsilon_T=1/\log\log T$ and when 
$T$ is sufficiently large $T$ such that
\begin{align}
\label{eq:con-anytime-1-T}
    \frac{\log^2 T}{2\log T}\geq \frac{2(\log\log T)^2}{\Delta^2}.
\end{align}
Let $E'^c$ be the complement of event $E'$. Then it holds that
\begin{align}
\label{eq:anytime-E^c-1-T}
    \PP(E'^c) &=\PP\bigg(\bigg\{\bigcap_{r=r_2+1,\ldots,\log_2 T} \big\{ |\hat{\mu}_{1'}(2^r)-\mu_{1'}|<\epsilon_T\Delta \big\}  \bigg\}^c\bigg) \notag \\
    & \leq \sum_{r=r_2+1}^{\log_2 T}\PP(|\hat{\mu}_{1'}(2^r)-\mu_{1'}|\geq \epsilon_T\Delta)\notag\\
    &\leq 1/T,
\end{align}
where in the first inequality we applied the union bound over all epochs $r\in [r_2+1,\log_2 T]$, and the last inequality is due to \eqref{eq:anytime-bound-1'-T} and  $\log_2 T\leq T/2$. 
 Based on the characterization of event $E'$, we bound the summation of $c_r^+$ and $c_r^-$ in \eqref{eq:anytime-I_3} as follows:
\begin{align}
\label{eq:boundtau2r-T}
    \sum_{r=r_2+1}^{\log_2 T}\EE[c_r^+]+\sum_{r=r_2+1}^{\log_2 T}\EE[c_r^-] 
&\leq  \sum_{r=r_2+1}^{\log_2 T}\EE[c_r^+\mid E']\PP(E')+\sum_{r=r_2+1}^{\log_2 T}\EE[c_r^-\mid E']\PP(E')+\sum_{r=r_2+1}^{\log_2 T}2^r\PP(E'^c) \notag \\
&\leq  \sum_{r=r_2+1}^{\log_2 T}\EE[c_r^+\mid E']\PP(E')+\sum_{r=r_2+1}^{\log_2 T}\EE[c_r^-\mid E']\PP(E')+2, 
\end{align}
where the second inequality is due to  \eqref{eq:anytime-E^c-1-T}. Now, we  bound term $\sum_{r=r_2+1}^{\log_2 T}\EE[c^+_r|E']\PP(E')$.  Using the previous results~\eqref{eq:bounding-cr+-I_2} of bounding $\EE[c^+_r|E]\PP(E)$, we have
\begin{align}
    & \sum_{r=r_2+1}^{\log_2 T}\EE[c^+_r|E']\PP(E') \notag \\
&\leq  \sum_{r=r_2+1}^{\log_2 T}\sum_{t=2^r}^{2^{r+1}} \PP\Bigg(\hat{\mu}_{2}(t)-\mu_1+\Delta+ \sqrt{\frac{2}{T_{2}(t)}\log\bigg(\frac{r\cdot 2^r}{T_{2}(t)}\bigg(\log^2\bigg(\frac{r\cdot 2^r}{T_{2}(t)}\bigg)+1\bigg)\bigg)}\geq(1-\epsilon_T)\Delta  \Bigg) \notag \\
&\leq  \sum_{t=1}^{T} \PP\Bigg(\hat{\mu}_{2}(t)-\mu_1+\Delta+ \sqrt{\frac{2}{T_{2}(t)}\log\bigg(\frac{\log_2 T\cdot T}{T_{2}(t)}\bigg(\log^2\bigg(\frac{\log_2 T\cdot T}{T_{2}(t)}\bigg)+1\bigg)\bigg)}\geq(1-\epsilon_T)\Delta  \Bigg) \notag \\
 & \leq \frac{2\log(T\Delta^2\log T)+o(\log (T\Delta^2\log T))}{(1-\epsilon_T)^2\Delta^2},
 \end{align}
where the last inequality is due to  the second statement of Lemma~\ref{lem:colt17}. Now, we turn to bounding term $\sum_{r=r_2
+1}^{\log_2 T}\EE[c^-_r|E']\PP(E')$. Using exactly the same argument on bounding term $\EE[c^-_r|E]\PP(E)$ in \eqref{eq:bounding-term-c_r^-}, we have $\EE[c^-_r|E']\PP(E')=O(1/(r\Delta^2))$. 
Therefore, we have
\begin{align}
\label{eq:anytime-c_r^--}
   \sum_{r=r_2+1}^{\log_2 T}\EE[c_r^-|E']\PP(E')
    =    \sum_{r=r_2+1}^{\log_2 T} O\bigg(\frac{1}{r\cdot \Delta^2}\bigg) 
   =   O\bigg(\int_{x=1}^{\log T} \frac{1}{x\Delta^2}\bigg) \dd x = O\bigg(\frac{\log\log T}{\Delta^2}\bigg).
\end{align}
Combing \eqref{eq:boundtau2r-T}  and \eqref{eq:anytime-c_r^--} together, we have 
\begin{align}
\label{eq:anytime-T-tau2}
    I_3= \frac{2\log(T\Delta^2\log T)+o(\log (T\Delta^2\log T))}{(1-\epsilon_T)^2\Delta^2}+O\bigg(\frac{\log\log T}{\Delta^2}\bigg)+O(1).
\end{align}
Substituting \eqref{eq:T_2(tau)} and \eqref{eq:anytime-T-tau2} into \eqref{eq:anytime-regret-decomp}, we  have 
\begin{align*}
    \lim_{T\rightarrow \infty}\frac{R_{\mu}(T)}{\log T}=\frac{2}{\Delta},
\end{align*}
which completes the proof.
\end{proof}

\section{Round Complexity of Batched DETC}
In this section, we derive the round complexities of Algorithms \ref{alg:batched_DETC_known_gap} and \ref{alg:batched_DETC_unknown_gap} for batched bandit models. We will prove that Batched DETC still enjoys the asymptotic optimality. Note that in batched bandits, our focus is on the asymptotic regret bound and thus we assume that $T$ is sufficiently large throughout the  proofs in this section to simplify the presentation. 

\subsection{Proof of Theorem \ref{thm:knowngap_batch}}\label{sec:proof_batch_detc_known_gap_round}
We first prove the round complexity for Batched DETC (Algorithm \ref{alg:batched_DETC_known_gap}) when the gap $\Delta$ is known.
\begin{proof}
The analysis is very similar to that of Theorem~\ref{theorem:knowndelta_2arm} and thus we will use the same notations therein. Note that $\stageOne$ requires $1$ round of queries since $\tau_1$ is fixed. In addition, $\stageTwo$ and $\stageFour$ need $1$ query at the beginning of stages respectively. Now it remains to calculate the total rounds for $\stageThree$. 

 Recall that $E$ is event $\mu'\in [\mu_{1'}-\epsilon_T\Delta,\mu_{1'}+\epsilon_T\Delta]$, $E_1=\{E,1'=1\}$ and $E_2=\{E,1'=2\}$. We first assume that $E_1$ holds. Let $x_i={i(2\sqrt{\log(T\Delta^2)}+4)}$ and $n_{x_i}=\tau_0+x_i/(2(1-\epsilon_T)^2\Delta^2)$. For simplicity, assume $x_i,n_{x_i} \in \NN^+$.  From ~\eqref{eq:roundused1}, we have 
\begin{equation}\label{eq:test1}
 \begin{split}
  \PP(\tau_2> n_{x_i} \mid E_1)\leq  \PP\bigg(S_{n_{x_i}}\leq \frac{\log(T\Delta^2)}{2(1-\epsilon_T)\Delta}  \;\bigg|\; E_1\bigg) & \leq  \exp\bigg(-\frac{x_i^2}{4(\log(T\Delta^2)+x_i)}
     \bigg) \\
     & \leq \exp \bigg(-\frac{x_i}{2\sqrt{\log (T\Delta^2)}+4} \bigg) \\
     & \leq 2^{-i}.
     \end{split}
\end{equation}
Thus, the expected number of rounds of queries needed in $\stageThree$ of Algorithm \ref{alg:batched_DETC_known_gap} is upper bounded by $\sum_{i=1}^{\infty} i/2^i=2$. Similarly, if $E_2$ holds, we still have the expected number of rounds in {\it Stage III} is upper bounded by 2. Lastly, if $E^c$ holds, we have $\PP(E^c)\leq 2/(T\Delta^2)$.  Note that the increment between consecutive test time points is $(2\sqrt{\log(T\Delta^2)}+4)/(2(1-\epsilon_T)^2\Delta^2)$, thus the expected number of  test time points is at most $T(1-\epsilon_T)^2\Delta^2/({\sqrt{\log(T\Delta^2)}})$. Then the expected number of rounds for this case is bounded by $2(1-\epsilon_T)^2/(\sqrt{\log(T\Delta^2)})$. For $T\rightarrow \infty$, the expected number of rounds cost for this case is $0$. To summarize, the round complexity of Algorithm \ref{alg:batched_DETC_known_gap} is $O(1)$.

Following the same proof in \eqref{eq:geqyregret} and \eqref{eq:regrettau_2}, it is easy to verify that $\mathbb{E}[\tau_2\mid E_1]\leq \tau_0+(2\sqrt{\log(T\Delta^2)}+4)/((1-\epsilon_T)^2\Delta^2)$, which is no larger than the bound in \eqref{eq:regrettau_2}. 
The bounds for other terms remain the same.
Therefore, the batched version of Algorithm~\ref{alg:batched_DETC_known_gap} is still asymptotically optimal, instance-dependent optimal and minimax optimal. 
\end{proof}

\subsection{Proof of Theorem \ref{thm:unknowngap_batch}}\label{sec:proof_batch_detc_unknown_gap_round}
Now we  prove the round complexity and regret bound for Batched DETC (Algorithm \ref{alg:batched_DETC_unknown_gap}) when the gap $\Delta$ is unknown.
\begin{proof}
For the sake of simplicity, we use the same notations that are used in Theorem~\ref{theorem:unknowndelta} and its proof. To compute the round complexity and regret of $\stageOne$, we first compute the probability that $\tau_1> 2i \sqrt{\log T}$.
 We assume $T$ is large enough such that it satisfies 
\begin{equation}
\label{eq:assume1}
\sqrt{\log T}\geq 16\log^{+}(T_1\Delta^2/2)/\Delta^2,
\end{equation}
where we recall that $T_1=\log^2 T$. Let $s_i=2i \sqrt{\log T}$ for $i=1,2,\ldots$ and  $\gamma=4\log^{+}(T_1\Delta^2/2)/\Delta^2$. From ~\eqref{eq:assume1}, it is easy to verify that $s_i\geq 32i/\Delta^2$, $\gamma/s_i \leq 1/8 $ and
$\sqrt{4\log^{+} (T_1/2s_i)/s_i}\leq \Delta \sqrt{\gamma/s_i}$. The stopping rule in $\stageOne$ implies
\begin{align*}
    \PP ( \tau_1 \geq s_i) & \leq \PP \Bigg( \hat{\mu}_{1,s_i}-\hat{\mu}_{2,s_i} \leq \sqrt{\frac{8}{s_i}\log^{+}\bigg(\frac{T_1}{2s_i}\bigg)} \Bigg) \\
    &  = \PP\bigg( \frac{\sum_{i=1}^{s_i}Z_i}{s_i}\leq \sqrt{\frac{4}{s_i}\log^{+}\bigg(\frac{T_1}{2s_i}\bigg)}-\frac{\Delta}{\sqrt{2}}\bigg) \\
    &  \leq \PP \bigg( \frac{\sum_{i=1}^{s_i}Z_i}{s_i}\leq \Delta\sqrt{\frac{\gamma}{s_i}}-\frac{\Delta}{\sqrt{2}} \bigg) \\
    &   \leq \exp \bigg( -\frac{s_i\Delta^2}{2}\bigg( \frac{1}{\sqrt{2}}-\sqrt{\frac{\gamma}{s_i}}\bigg)^2 \bigg) \\
    &   \leq \exp(-i) \\
    &\leq 2^{-i},
\end{align*}
where 
the third inequality follows from Lemma~\ref{lem:subguassian} and the fourth inequality is due to the fact that  $s_i\geq 32i/\Delta^2$ and $\gamma/s_i \leq 1/8 $. Hence by the choice of testing points in \eqref{eq:testtt}, the expected number of rounds needed in $\stageOne$ of Algorithm \ref{alg:batched_DETC_unknown_gap} is upper bounded by $\sum_{i=1}^{\infty}i/2^i\leq 2$. The expectation of $\tau_1$ is upper bounded by $\mathbb{E}[\tau_1]\leq \sum_{i=1}^{\infty}2i\sqrt{\log T}/2^i\leq 4\sqrt{\log T}$, which matches the bound derived in \eqref{eq:unknowndeltau_1}.

Now we focus on bounding term $\Delta\EE[\tau_2]$ and the round complexity in $\stageThree$. Let $\epsilon'_T=\sqrt{2}\epsilon_T=\sqrt{4\log(T\Delta^2)/{(T_1\Delta^2)}}$. Let $E$ be the event $\mu'\in[\mu_{1'}-\epsilon'_T\Delta,\mu_{1'}+\epsilon'_T\Delta]$.
 Applying Lemma~\ref{lem:subguassian}, we have $\PP(E^c)\leq 1/(T^2\Delta^4)$. Hence, the expected number of test time points contributed by case $E^c$  is $O(1/(T\Delta^4))$ which goes to zero when $T\rightarrow\infty$. Similarly, we assume that $E$ holds and the chosen arm $1'=1$. Recall $E_1=\{E, 1'=1 \}$.  Recall that this condition also implies $\Delta'\in [(1-\epsilon'_T)\Delta, (1+\epsilon'_T)\Delta]$, where $\epsilon'_T=\sqrt{\log(T\Delta^2)/(T_1\Delta^2)}$ and $T_1=\log^2 T$. When $T$ is large enough such that it satisfies
\begin{align} \label{eq:assume2} 
\sqrt{\frac{4\log(T\Delta^2)}{\Delta^2\log^2 T}}\leq \frac{1}{(\log T)^{\frac{1}{3}}},
\end{align} 
we have $\epsilon'_T\leq 1/ (\log T)^{\frac{1}{3}}$. Furthermore, we can also choose a large $T$ such that
\begin{equation}
\label{eq:assume3}
 \sqrt{\log T} (\Delta')^2 \geq 2 (\log \log T)^2.
\end{equation}
Applying Lemma~\ref{lem:subguassian}, we have
\begin{align}\label{eq:theta_2prime_concentrate}
    \PP\Big(\mu_{2'}-{\Delta'}{{(\log T)}^{-\frac{1}{4}}}\leq\theta_{2',N_1}\leq \mu_{2'}+{\Delta'}{{(\log T)}^{-\frac{1}{4}}}\mid E_1\Big)& \geq 1-2\exp \bigg( -\frac{2\log T(\Delta')^2}{2\sqrt{\log T}\log \log T}\bigg)  \notag\\
    & \geq 1-\frac{2}{\log^2 T},
\end{align}
where the last inequality follows by~\eqref{eq:assume3}. This means that after the first round of $\stageThree$ in Algorithm \ref{alg:batched_DETC_unknown_gap}, the average reward for arm $2'$ concentrates around the true value $\mu_{2'}$ with a high probability. Let $E_3$ be the event $\mu_{2'}-{\Delta'}/{\sqrt[4]{\log T}}\leq \theta_{2',N_1}\leq \mu_{2'}+{\Delta'}/{\sqrt[4]{\log T}}$.
Recall that $E_1=\{E,1'=1\}$ and $E_2=\{E,1'=2\}$. Let $H_1=\{E_1,E_3\}$ and $H_2=\{E_2,E_3\}$. We have
\begin{align}
    \EE[\tau_2]& \leq \EE[\tau_2\mid E_1,E_3]\PP[E_1,E_3]+\EE[\tau_2\mid E_2,E_3]\PP[E_2,E_3]+\EE[\tau_2\mid E_3^c]\PP[E_3^c]+\EE[\tau_2 \mid E^c]\PP[E^c] \notag \\
    &\leq \EE[\tau_2\mid H_1]\PP[H_1]+\EE[\tau_2\mid H_2]\PP[H_2]+\EE[\tau_2\mid E_3^c]\PP[E_3^c]+ 2/(T\Delta^3)
\end{align}
We first focus on term $\EE[\tau_2\mid H_1]$. We assume event $H_1$ holds. Define
\begin{align*}
s_i'&= \frac{2(1+{1}/{\sqrt[4]{\log T}})^2\log (T\log^3 T)}{\hat{\Delta}^2}+\frac{i(1+{1}/{\sqrt[4]{\log T}})^2(\log T)^{\frac{2}{3}}}{\hat{\Delta}^2},\\
\gamma'&=\frac{2\log\big(T(\Delta')^2[\log^2({T}{(\Delta')^2})+1]\big)}{(\Delta')^2},
\end{align*}
for $i=1,2,\ldots$. Recall the definition of test time points in \eqref{eq:testt_2}, we know that the $(i+1)$-th test in $\stageThree$ happens at time step $t_2=s_i'$. We choose a large enough $T$ such that
\begin{equation}
\label{eq:assume4}
    \log^3 T\geq (\Delta')^2(\log^2 (T(\Delta')^2)+1).
\end{equation}
Let $\Delta'=\mu'-\mu_{2'}$. Hence conditioned on $H_1$, $\hat{\Delta}=\mu'-\theta_{2',N_1}\in  [(1-1/\sqrt[4]{\log T})\Delta', (1+1/\sqrt[4]{\log T})\Delta']$.  
Then we have that conditioned on $H_1$
\begin{equation}
\label{eq:gamma'}
    \frac{2(1+1/\sqrt[4]{\log T})^2\log (T \log^3 T)}{\hat{\Delta}^2}\geq \frac{2\log (T \log^3 T)}{(\Delta')^2} \geq \gamma',
\end{equation}
where the last inequality is due to \eqref{eq:assume4}. On the other hand, we also have that conditioned on $H_1$
 \begin{equation}
     s_i'\geq\frac{2(1+1/\sqrt[4]{\log T})^2\log (T \log^3 T)}{\hat{\Delta}^2}\geq \frac{2}{(\Delta')^2}.
 \end{equation}
Therefore, by the definition of $\gamma'$, it holds that  conditioned on $H_1$
\begin{align*}
    \Delta' \sqrt{\frac{\gamma'}{s_i'}}&=\sqrt{\frac{2}{s_i'}\log(T(\Delta')^2(\log^2(T(\Delta')^2)+1))}\geq\sqrt{\frac{2}{s_i'}\log\bigg(\frac{T}{s_i'}\bigg(\log^2 \bigg(\frac{T}{s_i'}\bigg)+1\bigg)\bigg)}.
\end{align*}
Recall the definition $W_i={\mu'}-Y_{i+\tau_1}-\Delta'$ used in \eqref{eq:expectedtau_2}. From the stopping rule of $\stageThree$ in Algorithm \ref{alg:double-exploration-unknown-gap}, conditioned on $H_1$, we obtain
\begin{align}
\label{eq:1/2^i}
  \PP(\tau_2\geq s_i'\mid H_1) 
        & \leq \PP\bigg(\mu'-\theta_{2',s_i'}\leq \sqrt{\frac{2}{s_i'}\log\Big(\frac{T}{s_i'}\Big(\log^2\frac{T}{s_i'}+1\Big)\Big) } \;\bigg|\; H_1\bigg) \notag \\
        & =\PP\bigg( \frac{\sum_{i=1}^{s_i'}W_i}{s_i'}+\Delta' \leq  \sqrt{\frac{2}{s_i'}\log\Big(\frac{T}{s_i'}\Big(\log^2\frac{T}{s_i'}+1\Big)\Big) } \;\bigg|\; H_1\bigg) \notag \\
        & \leq \exp \bigg( -\frac{s_i'(\Delta')^2}{2} \bigg( 1-\sqrt{\frac{\gamma'}{s_i'}}\bigg)^2 \bigg) \notag \\
        & = \exp \bigg(-\frac{(\Delta')^2}{2} (\sqrt{s_i'}-\sqrt{\gamma'})^2 \bigg)  \notag \\
        & = \exp  \bigg(-\frac{(\Delta')^2}{2} \bigg(\frac{s_i'-\gamma'}{\sqrt{s_i'}+\sqrt{\gamma'}}\bigg)^2 \bigg)  \notag \\
        & \leq \exp \bigg(-\frac{i^2(\log T)^{4/3}}{8s_i'(\Delta')^2}  \bigg) ,
\end{align}
where the second inequality from Lemma~\ref{lem:subguassian} 
and in the last inequality we used the fact that  $s_i'-\gamma'\geq i(1+{1}/{\sqrt[4]{\log T}})^2(\log T)^{\frac{2}{3}}/({\hat{\Delta}^2}) \geq i(\log T)^{\frac{2}{3}}/(\Delta')^2$ by \eqref{eq:gamma'}. Choose sufficiently large $T$ to ensure
\begin{equation}
\label{eq:assume5}
   (\log T)^{\frac{4}{3}}\geq 8s_i'(\Delta')^2.
\end{equation}
Substituting \eqref{eq:assume5} back into~\eqref{eq:1/2^i} yields $\PP(\tau_2\geq s_i' \mid H_1) \leq {1}/{2^i}$. Similarly, $\PP(\tau_2\geq s_i' \mid H_2) \leq {1}/{2^i}$, Thus conditioned on $H_1$ (or $H_2$), the expected rounds used in \emph{Stage III} of Algorithm \ref{alg:double-exploration-unknown-gap} is upper bounded by $\sum_{i=1}^{\infty}i/2^i\leq 2$. Recall that from ~\eqref{eq:assume2}, $\epsilon'_T\leq 1/(\log T)^{\frac{1}{3}}$.
Conditional on $H_1$, the expectation of $\tau_2$ is upper bounded by 
\begin{align}\label{eq:logT^2/3}
\mathbb{E}[\tau_2\mid H_1]&\leq s_1'+\sum_{i=2}[(s_i'-s_{1}')\PP(\tau_2\geq s_i' \mid H_1)]\notag\\
&\leq \frac{2(1+{1}/{(\log T)^{\frac{1}{4}}})^2\log (T \log^3 T)}{\hat{\Delta}^2}+\frac{2(1+{1}/{\sqrt[4]{\log T}})^2(\log T)^{\frac{2}{3}}}{{\hat{\Delta}}^2} \notag\\
&\leq     \frac{2(1+{1}/{(\log T)^{\frac{1}{4}}})^2\log (T \log^3 T)+2(1+{1}/{(\log T)^{\frac{1}{4}}})^2(\log T)^{\frac{2}{3}}}{(1-{1}/{(\log T)^{\frac{1}{3}}})^2(1-{1}/{(\log T)^{\frac{1}{4}}})^2{\Delta}^2},
\end{align}
where the last inequality is due to $\Delta'\in[(1-\epsilon'_T)\Delta,(1+\epsilon'_T)\Delta]$. Similarly, we can derive same bound as in~\eqref{eq:logT^2/3} for $\EE[\tau_2\mid H_2]$. 

For the case $E_3^c$. Note that $\tau_2 \leq \log^2 T$ and we have $\PP(E_3^c)\leq 2/\log^2 T$ by \eqref{eq:theta_2prime_concentrate}. Therefore $\EE[\tau_2 \mid E_3^c]$ can be upper bounded by $2$, which is dominated by \eqref{eq:logT^2/3}.  Since $\tau_2\leq \log^2 T$, conditioned on $E_3^c$, the  expected rounds  is upper bounded by
    $\PP(E_3^c) \cdot  \log^2T\leq 2$. To summarize, we have proved that conditioned on $H_1$ (or $H_2$, or $E^c$, or $E_3^c$), the expected rounds cost is $O(1)$. Therefore, the expected rounds cost of \emph{Stage III} is $O(1)$.


Note that the above analysis does not change the regret incurred in $\stageThree$. A slight difference of this proof from that of Theorem \ref{theorem:unknowndelta} arises when we terminate $\stageThree$ with $t_2=\log^2 T$. The term $I_3$ can be written as 
\begin{align}
 I_3=\Delta T\PP(\tau_2=\log^2 T, a=2)+\Delta T\PP(\tau_2<\log^2 T, a=2),
\end{align}
We can derive same bound as \eqref{eq:unknone-bound-I_3} for term $\Delta T\PP(\tau_2<\log^2 T, a=2)$. Now, we focus on term $\Delta T\PP(\tau_2=\log^2 T, a=2)$. For this case, we have tested $\log^2 T$ samples for both arm $1$ and $2$.  Let $G_0=0$ and $G_n=(X_1-Y_{1+\tau_1})+\cdots+(X_n-Y_{n+\tau_1})$ for every $n\geq 1$. Then $X_i-Y_{i+\tau_1}-\Delta$ is a $\sqrt{2}$-subgaussian random variable. 
Applying Lemma~\ref{lem:subguassian} with $\epsilon=\Delta$ yields
\begin{align*}
     \mathbb{P}\bigg(\frac{G_{\tau_2}}{\tau_2}\leq 0\bigg) \leq \exp \bigg(-\frac{\tau_2\Delta^2}{4}\bigg).
\end{align*}
Conditioned on $\tau_2=\log^2 T$, we further obtain
$\PP(a=2)=\PP(G_{\tau_2}\leq 0)\leq \exp(-\Delta^2\log^2 T /4)\leq1/T$, where in the last inequality we again choose large enough $T$ to ensure 
\begin{equation}
\label{eq:assume6}
\exp(-\Delta^2\log^2 T /4) \leq \frac{1}{T}.
\end{equation}
Therefore, we have proved that conditional on $\tau_2=\log^2 T$,
\begin{equation}
\label{eq:lagretau2}
    \mathbb{P}(a=2) \leq \frac{1}{T}.
\end{equation}
Hence, $\Delta T\PP(\tau_2=\log^2 T, a=2)\leq 1/\Delta$.

To summarize, we can choose a  sufficiently large $T$ such that all the conditions ~\eqref{eq:assume1}, \eqref{eq:assume2}, \eqref{eq:assume3}, \eqref{eq:assume4}, \eqref{eq:assume5} and \eqref{eq:assume6} are satisfied simultaneously. Then the round complexity of Algorithm~\ref{alg:double-exploration-unknown-gap} is $O(1)$. For the regret bound, since the only difference between Algorithm \ref{alg:batched_DETC_unknown_gap} and Algorithm \ref{alg:double-exploration-unknown-gap} is the stopping rules of $\stageOne$ and $\stageThree$,  
we only need to combine the regret for terms  ~\eqref{eq:logT^2/3} and \eqref{eq:lagretau2} and the fact that $\Delta\EE[\tau_1]\leq 4\Delta\sqrt{\log T}$ to obtain the total regret. 
Therefore, we have $\lim_{T\rightarrow \infty}R(T)/\log T =2/\Delta$.
\end{proof}

\section{Proof of Concentration Lemmas}

In this section, we provide the proof of the concentration lemma and the maximal inequality for subgaussian random variables. 

\subsection{Proof of Lemma \ref{lemma:maximal_ineq}}
Our proof relies on the following maximal inequality for supermartingales.
\begin{lemma}[\cite{Ville1939}] If $(S_n)$ is a non-negative supermartingale, then for any $x>0$, 
\begin{align*}
    \PP\bigg(\sup_{n\in \NN}S_n>x\bigg)\leq \frac{\EE [S_0]}{x}.
\end{align*}
\end{lemma}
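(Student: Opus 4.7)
The plan is to reduce the maximal bound to Markov's inequality by introducing a first-passage stopping time and invoking the optional stopping theorem at a truncated version of it. First I would define $\tau = \inf\{n \in \NN : S_n > x\}$, with the convention $\tau = \infty$ when no such $n$ exists, and record the identity $\{\sup_{n\in\NN} S_n > x\} = \{\tau < \infty\}$. The strategy is to bound $\PP(\tau \leq N)$ for every finite $N$ and then pass to the limit via monotone convergence, since the events $\{\tau \leq N\}$ increase to $\{\tau < \infty\}$.

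For each fixed $N$, the bounded stopping time $\tau \wedge N$ makes the optional stopping theorem directly applicable to the non-negative supermartingale $(S_n)$, yielding $\EE[S_{\tau \wedge N}] \leq \EE[S_0]$. The key step is then a case split: on $\{\tau \leq N\}$ the definition of $\tau$ forces $S_{\tau \wedge N} = S_{\tau} > x$, whereas on $\{\tau > N\}$ we only use $S_{\tau \wedge N} = S_N \geq 0$. Combining these gives
\begin{align*}
    \EE[S_0] \;\geq\; \EE[S_{\tau \wedge N}] \;\geq\; \EE\bigl[S_{\tau \wedge N}\,\ind\{\tau \leq N\}\bigr] \;\geq\; x\,\PP(\tau \leq N),
\end{align*}
so $\PP(\tau \leq N) \leq \EE[S_0]/x$ uniformly in $N$. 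Letting $N \to \infty$ and using monotone convergence of probabilities gives $\PP(\sup_{n\in\NN} S_n > x) = \PP(\tau < \infty) \leq \EE[S_0]/x$, as claimed.

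The only step requiring care is the optional stopping invocation; since $\tau \wedge N$ is a bounded stopping time, no integrability assumption beyond the supermartingale property is needed, so this is standard rather than a real obstacle. The case $\EE[S_0] = \infty$ is vacuous, and the slight asymmetry between the strict inequality $S_n > x$ in the event and the bound $\EE[S_{\tau\wedge N}\,\ind\{\tau \leq N\}] \geq x\,\PP(\tau \leq N)$ is already built into the definition of $\tau$. Overall I expect the whole argument to fit in a few lines once the stopping time is introduced.
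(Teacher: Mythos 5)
Your proof is correct. The paper does not actually prove this lemma --- it is stated as a citation to Ville (1939) and used as a black box --- so there is no in-paper argument to compare against; your optional-stopping derivation is the standard proof of this fact. The key points are all in order: $\tau\wedge N$ is a bounded stopping time so Doob's optional stopping theorem applies to the supermartingale without extra integrability hypotheses, non-negativity is exactly what lets you discard the contribution of $\{\tau>N\}$, the strict inequality $S_\tau>x$ on $\{\tau\le N\}$ matches the strict inequality in the event $\{\sup_n S_n>x\}=\{\tau<\infty\}$ (since the supremum is over a countable index set), and continuity from below handles the passage $N\to\infty$.
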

\begin{proof}[Proof of Lemma \ref{lemma:maximal_ineq}]
The proof follows from the same idea as the proof of Lemma 4 (Maximal Inequality) in \cite{menard2017minimax}. If $\hat{\mu}_n>0$, then \eqref{eq:maximal_ineq} holds trivially. Otherwise, if event $\{\exists N\leq n\leq M, \hat{\mu}_n+\gamma\leq0 \}$ holds, then the following three inequalities also hold simultaneously: 
\begin{align*}
     \hat{\mu}_n\leq0, \qquad -\gamma\hat{\mu}_n-\frac{\gamma^2}{2}\geq \gamma^2-\frac{\gamma^2}{2}=\frac{\gamma^2}{2}, \ \ \ \ \text{and} \ \ \ \ -\gamma n \hat{\mu}_n- \frac{n\gamma^2}{2} \geq \frac{N\gamma^2}{2},
\end{align*}
where the second inequality is due to $\hat{\mu}_n\leq-\gamma$ and the last is due to $n\geq N$. 
Therefore, we have
\begin{align*}
      \PP(\exists N\leq n  \leq M, \hat{\mu}_n+\gamma\leq 0) &\leq \PP\bigg(\exists N\leq n  \leq M, -\gamma n \hat{\mu}_n-\frac{n\gamma^2}{2} \geq \frac{N\gamma^2}{2}\bigg) \notag \\
      & = \PP \bigg( \max_{N\leq n \leq M} \exp \bigg(-\gamma n \hat{\mu}_n-\frac{n\gamma^2}{2}\bigg)\geq \exp \bigg(\frac{N\gamma^2}{2} \bigg)\bigg) \notag \\
        & \leq \PP \bigg( \max_{1\leq n \leq M} \exp \bigg(-\gamma n \hat{\mu}_n-\frac{n\gamma^2}{2}\bigg)\geq \exp \bigg(\frac{N\gamma^2}{2} \bigg)\bigg) \notag \\
     & \leq \frac{\EE[\exp(-\gamma X_1-\gamma^2/2)] } {\exp ( N\gamma^2/2)} \notag \\
      & \leq \exp \bigg( -\frac{N\gamma^2}{2}\bigg),
      \end{align*}
where the third inequality is from Ville's maximal inequality \citep{Ville1939} for non-negative supermartingale and the fact that $S_n=\exp(-\gamma n \hat{\mu}_n-n\gamma^2/2)$ is a non-negative supermartingale. To show $S_n$ is a non-negative supermartingale, we have 
\begin{align*}
       \EE[\exp(-\gamma n\hat\mu_n-n\gamma^2/2)|S_1,\ldots,S_{n-1}] & = S_{n-1}\EE[\exp(-\gamma X_n)]\exp(-\gamma^2/2) \notag \\
      &\leq S_{n-1}\exp(\gamma^2/2)\exp(-\gamma^2/2)  \\
    & \leq S_{n-1},
\end{align*}
where the first inequality is from the definition of 1-subgaussian random variables. This completes the proof.
\end{proof}

\subsection{Proof of Lemma \ref{lem:peeling-simp}}
\begin{proof}
Let $Z_i=(X_i-Y_i-\Delta)/\sqrt{2}$. Then $Z_s$ is a $1$-subgaussian random variable with zero mean. Applying the standard peeling technique, we have
\begin{align}
  &\PP\bigg( \exists s\geq 1: \hat{\mu}_{s} +\sqrt{\frac{8\log^{+}(N/s)}{s}}\leq 0 \bigg) \notag\\
    & \leq \PP \Bigg( \exists s \geq 1: \frac{\sum_{i=1}^{s}Z_i}{s} +\sqrt{\frac{4\log^{+}(N/s)}{s}}+\frac{\Delta}{\sqrt{2}}\leq 0\bigg) \notag\\
& \leq \frac{15}{N\Delta^2},
\end{align}
where the last inequality is from  Lemma 9.3 of \cite{lattimore2018bandit}.
\end{proof}

\subsection{Proof of Lemma \ref{lem:colt17}}
To prove Lemma \ref{lem:colt17}, we also need the following technical lemma from \cite{menard2017minimax}. 
\begin{lemma}
\label{lem:ebeta}
For all $\beta>1$ we have
\begin{equation}
    \frac{1}{e^{\log(\beta)/\beta}-1}\leq 2\max \{\beta, \beta/(\beta-1) \}.
\end{equation}
\end{lemma}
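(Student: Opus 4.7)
\textbf{Proof proposal for Lemma~\ref{lem:ebeta}.}
The plan is a two-step argument: first reduce the exponential-in-a-denominator to something simpler via the elementary bound $e^x \geq 1+x$, then perform a case split matching the structure of the $\max$ on the right-hand side.

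Since $\beta > 1$ implies $x := \log(\beta)/\beta > 0$, I would apply $e^x \geq 1 + x$ to obtain $e^{\log(\beta)/\beta} - 1 \geq \log(\beta)/\beta$, and hence
\[
\frac{1}{e^{\log(\beta)/\beta}-1} \leq \frac{\beta}{\log \beta}.
\]
It then suffices to prove $\beta/\log\beta \leq 2\max\{\beta,\beta/(\beta-1)\}$, equivalently $1/\log\beta \leq 2\max\{1,\, 1/(\beta-1)\}$.

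Next I would split on $\beta \geq 2$ versus $1 < \beta < 2$, noting that the two arguments of the $\max$ cross exactly at $\beta = 2$. In the regime $\beta \geq 2$ the $\max$ equals $1$, so the target reduces to $\log\beta \geq 1/2$, which is immediate from $\beta \geq 2 > \sqrt{e}$. In the regime $1 < \beta < 2$ the $\max$ equals $1/(\beta-1)$, so the target becomes $2\log\beta \geq \beta-1$; I would verify this by noting that $g(\beta) := 2\log\beta - (\beta-1)$ satisfies $g(1)=0$ and $g'(\beta) = 2/\beta - 1 > 0$ throughout $(1,2)$, hence $g>0$ on that interval.

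The result is essentially an elementary analytic inequality, so I do not anticipate a serious obstacle. The only subtle point is ensuring that the case split aligns with both sides: the threshold $\beta = 2$ is simultaneously where the $\max$ changes its active argument and where $\log \beta$ first becomes large enough for the crude bound $e^x - 1 \geq x$ to suffice. Should this linear bound turn out to be too weak in some subregime (it does not), one could upgrade to $e^x - 1 \geq x + x^2/2$, but this refinement is unnecessary.
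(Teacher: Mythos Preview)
Your proof is correct. The paper does not actually supply its own proof of this lemma; it simply quotes it as a technical fact from \cite{menard2017minimax} and uses it in the proof of Lemma~\ref{lem:colt17}. So there is nothing to compare against in the paper itself, but your argument stands on its own: the reduction via $e^x-1\ge x$ followed by the case split at $\beta=2$ is clean, and both cases are verified correctly (in particular the monotonicity argument for $g(\beta)=2\log\beta-(\beta-1)$ on $(1,2)$ is airtight).
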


\begin{proof}[Proof of Lemma \ref{lem:colt17}]
For the first statement, let $\gamma=4\log^{+}(T_1\delta^2)/\delta^2$. Note that for $n\geq 1/\delta^2$, it holds that
\begin{equation}
   \delta \sqrt{\frac{\gamma}{n}}=\sqrt{\frac{4}{n}\log^{+}(T_1\delta^2)}\geq\sqrt{\frac{4}{n}\log^{+}\Big(\frac{T_1}{n}\Big)}.
\end{equation}
Let $\gamma'=\max\{\gamma,1/\delta^2\}$.
Therefore, we have
\begin{align}
    \sum_{n=1}^T \PP\bigg( \hat{\mu}_n + \sqrt{\frac{4}{n}\log^{+}\Big(\frac{T_1}{n}\Big)} \geq \delta\bigg) 
  &\leq  \gamma'+ \sum_{n=\lceil\gamma  \rceil}^{T} \mathbb{P}\bigg(\hat{\mu}_{n}\geq \delta\bigg( 1-\sqrt{\frac{\gamma'}{n}}\bigg)  \bigg) \notag\\
   &\leq    \gamma' + \sum_{n=\lceil\gamma'  \rceil}^{\infty} \exp\bigg( -\frac{\delta^2( \sqrt{n}-\sqrt{\gamma'})^2}{2}\bigg) \label{eq:sum_bound_stat1_hoeffding}\\
   &\leq  \gamma'+1+ \int_{\gamma'}^{\infty} \exp\bigg( -\frac{\delta^2( \sqrt{x}-\sqrt{\gamma'})^2}{2}\bigg) \dd x \notag\\
   &\leq  \gamma'+ 1+\frac{2}{\delta}\int_{0}^{\infty} \Big(\frac{y}{\delta}+\sqrt{\gamma'}\Big)\exp(-y^2/2) \dd y \notag\\
   &\leq  \gamma'+1 +\frac{2}{\delta^2}+\frac{\sqrt{2\pi \gamma'}}{\delta}, \label{eq:sum_bound_stat1_int}
\end{align}
where \eqref{eq:sum_bound_stat1_hoeffding} is the result of Lemma \ref{lem:subguassian} and \eqref{eq:sum_bound_stat1_int} is due to the fact that $\int_{0}^{\infty}y\exp(-y^2/2)\dd y=1$ and $\int_{0}^{\infty}\exp(-y^2/2)\dd y=\sqrt{2\pi}/2$. \eqref{eq:sum_bound_stat1_int} immediately implies the claim in the first statement:
\begin{align}
    \sum_{n=1}^{T}\PP\bigg( \hat{\mu}_n + \sqrt{\frac{4}{n}\log^{+}\Big(\frac{T_1}{n}\Big)} \geq \delta\bigg) 
    \leq & \gamma'+\sum_{n=\lceil \gamma' \rceil}^{T} \PP \bigg(\hat{\mu}_n\geq \delta\bigg( 1- \sqrt{\frac{\gamma'}{n}}\bigg) \bigg) \notag\\
     \leq & \gamma'+1+\frac{2}{\delta^2}+\frac{\sqrt{2\pi\gamma'}}{\delta}.
\end{align}
Plugging $\gamma'\leq4\log^{+}(T_1\delta^2)/\delta^2+1/\delta^2$ to above equation, we obtain
\begin{equation}
        \sum_{n=1}^T \mathbb{P}\bigg(\hat{\mu}_{n}+\sqrt{\frac{4}{n}\log^{+}\bigg(\frac{T_1}{n}\bigg)} \geq \delta \bigg)\leq 1+\frac{4\log^{+}({T_1}{\delta^2})}{\delta^2} +\frac{3}{\delta^2}+\frac{\sqrt{8\pi {\log^{+}({T_1}{\delta^2})}}}{\delta^2}.
   \end{equation}
For the second statement, its proof is similar to that of the first one. Let us define the following quantity: 
 \begin{equation}\rho=\frac{2\log({T}{\delta^2}(\log^2({T}{\delta^2})+1))}{\delta^2}.
\end{equation} 
Note that for all $n\geq1/\delta^2$, it holds that
\begin{align}
    \delta \sqrt{\frac{\rho}{n}}&=\sqrt{\frac{2\log({T}{\delta^2}(\log^2({T}{\delta^2})+1))}{n}}\geq \sqrt{\frac{2}{n}\log\bigg(\frac{T}{n}\bigg(\log^2\frac{T}{n}+1\bigg)\bigg)}.
\end{align}
 Using the same argument in \eqref{eq:sum_bound_stat1_int} we can show that
\begin{align*}
 \sum_{n=1}^T \mathbb{P}\Bigg(\hat{\mu}_{n}+\sqrt{\frac{2}{n}\log\bigg(\frac{T}{n}\bigg(\log^2\frac{T}{n}+1\bigg)\bigg)} \geq \delta \Bigg) &\leq  1+\frac{2\log({T}{\delta^2}(\log^2({T}{\delta^2})+1))}{\delta^2} +\frac{3}{\delta^2}\notag\\
 &\qquad+\frac{\sqrt{4\pi {\log({T}{\delta^2}(\log^2({T}{\delta^2})+1))}}}{\delta^2}.
\end{align*}
To prove the last statement, we borrow the idea from  \cite{menard2017minimax} for proving the regret of kl-UCB$^{++}$. Define $f(\delta)=2/\delta^2\log(T\delta^2/4)$. Then we can decompose the event $\{\exists s: s\leq T\}$ into two cases: $\{\exists s: s\leq f(\delta)\}$ and $\{\exists s: f(\delta)\leq s\leq T\}$.
\begin{align}\label{eq:sum_bound_stat2_decomp}
    & \mathbb{P}\bigg( \exists s \leq T: \hat{\mu}_s+\sqrt{\frac{2}{s}\log\Big(\frac{T}{s}\Big(\log^2\frac{T}{s}+1\Big)\Big)}+\delta\leq 0 \bigg) \notag\\
    &\leq  \underbrace{\mathbb{P}\bigg(\exists s \leq f(\delta):  \hat{\mu}_s \leq -\sqrt{\frac{2}{s}\log\Big(\frac{T}{s}\Big(\log^2\frac{T}{s}+1\Big)\Big)}\bigg)}_{A_1} + \underbrace{\mathbb{P}(\exists s, f(\delta)\leq s \leq T: \hat{\mu}_s\leq -\delta)}_{A_2}. 
\end{align}
Note that when $T\delta^2\geq 4e^3$, $f(\delta)\geq 0$.  
Let $\beta>1$ be a parameter that will be chosen later. Applying the peeling technique, we can bound term $A_1$ as follows.
 \begin{equation}\label{eq:sum_bound_stat2_decomp_A1}
     A_1\leq \sum_{\ell=0}^{\infty}\underbrace{\mathbb{P}\bigg(\exists s, \frac{f(\delta)}{\beta^{\ell+1}}\leq s \leq \frac{f(\delta)}{\beta^{\ell}}: \hat{\mu}_s+\sqrt{\frac{2}{s}\log\Big(\frac{T}{s}\Big(\log^2\frac{T}{s}+1\Big)\Big)}\leq 0 \bigg)}_{A_{1}^{\ell}}.
 \end{equation}
For each $\ell=0,1,\ldots$, define $\gamma_l$ to be
\begin{equation}
 \gamma_{\ell}= \frac{\beta^{\ell}}{f(\delta)}\log\bigg(\frac{T\beta^\ell}{2f(\delta)} \bigg( 1+\log^2 \frac{T}{2f(\delta)}\bigg) \bigg),
\end{equation}
which by definition immediately implies
\begin{align*}
    \sqrt{2\gamma_l}=\sqrt{\frac{2\beta^{\ell}}{f(\delta)}\log\bigg(\frac{T\beta^\ell}{2f(\delta)} \bigg( 1+\log^2 \frac{T}{2f(\delta)}\bigg) \bigg)}\leq\sqrt{\frac{2}{s}\log\bigg(\frac{T}{2s} \bigg(\log^2 \frac{T}{s}\bigg)+1 \bigg)},
\end{align*}
where in the above inequality we used the fact that $s\leq f(\delta)/\beta^{\ell}$ and that $f(\delta)\geq s/2$ since $\beta>1$. Therefore, we have
\begin{align} \label{eq:sum_bound_stat2_decomp_A1l}
  & \mathbb{P} \bigg(\exists s, \frac{f(\delta)}{\beta^{\ell+1}}\leq s \leq \frac{f(\delta)}{\beta^{\ell}}: \hat{\mu}_s+\sqrt{\frac{2}{s}\log\Big(\frac{T}{s}\Big(\log^2\frac{T}{s}+1\Big)\Big)}\leq 0 \bigg)  \notag\\
  &\leq  \mathbb{P} \bigg(\exists \frac{f(\delta)}{\beta^{\ell+1}}\leq s \leq \frac{f(\delta)}{\beta^{\ell}}: \hat{\mu}_s+\sqrt{2\gamma_\ell}\leq 0 \bigg) \notag\\
  &\leq  \exp\bigg( -\frac{f(\delta)}{\beta^{\ell+1}}\gamma_\ell\bigg)\notag\\
  &= e^{-\ell \log(\beta)/\beta-C/\beta},
\end{align}
where the second inequality is by Doob's maximal inequality (Lemma \ref{lemma:maximal_ineq}), the last equation is due to the definition of $\gamma_{\ell}$, and the parameter $C$ is defined to be
 \begin{equation}
     C := \log \bigg(    \frac{T}{2f(\delta)} \bigg(1+\log^2 \frac{T}{2f(\delta)} \bigg)\bigg).
 \end{equation}
Substituting \eqref{eq:sum_bound_stat2_decomp_A1l} back into \eqref{eq:sum_bound_stat2_decomp_A1}, we get 
\begin{align*}
    A_1\leq \sum_{\ell=0}^{\infty} e^{-\ell \log(\beta)/\beta-C/\beta}& =\frac{e^{-C/\beta} }{1-e^{-\log(\beta)/\beta}}\leq \frac{e^{1-C/\beta}}{e^{\log(\beta)/\beta}-1} \leq 2e \max (\beta,\beta/(\beta-1)) e^{-C/\beta},
\end{align*}
where the second inequality is due to $\log \beta\leq\beta$ and thus $e^{\log(\beta)/\beta}\leq e$, and the last inequality comes from Lemma~\ref{lem:ebeta}.
Since $T\delta^2 \geq 4e^3$, we have $T/(2f(\delta))=T\delta^2/(4\log(T\delta^2/4))\geq \sqrt{T\delta^2/4}\geq e^{3/2}$, which further implies
\begin{equation}
    C=\log \bigg(    \frac{T}{2f(\delta)} \bigg(1+\log^2 \frac{T}{2f(\delta)} \bigg)\bigg) \geq \log\bigg(\frac{T}{2f(\delta)}\bigg)=\log \bigg( \frac{T\delta^2}{4\log (\frac{T\delta^2}{4})}\bigg) \geq 3/2.
\end{equation}
Now we choose $\beta:=C/(C-1)$, so that $1<\beta\leq 2C$ and $\beta/(\beta-1)=C$. Together with the definition of $f$, this choice immediately yields 
\begin{align}
\label{eq:lemc.1-G.19}
    A_1\leq 4eCe^{-C/\beta}=4e^2Ce^{-C}.
\end{align}
 Note that
\begin{align}
\label{eq:alg-mini-added}
    Ce^{-C}&=\bigg( \frac{T}{2f(\delta)}\bigg( 1+\log^2  \frac{T}{2f(\delta)}  \bigg) \bigg)^{-1}\log\bigg( \frac{T}{2f(\delta)}\bigg( 1+\log^2  \frac{T}{2f(\delta)}  \bigg) \bigg)\notag\\
    &\leq   \frac{2f(\delta)}{T\log^2(T/(2f(\delta)))} \log\bigg( \frac{T}{2f(\delta)}\bigg( 1+\log^2  \frac{T}{2f(\delta)}  \bigg) \bigg)\notag\\
    &\leq \frac{4f(\delta)}{T\log(T/(2f(\delta)))}\notag\\
    &=\frac{8\log(T\delta^2/4)}{T\delta^2\log([T\delta^2/4]/\log(T\delta^2/4))}\notag\\
    &\leq \frac{16}{T\delta^2},
\end{align}
where in the second and the third inequalities, we used the fact that that for all $x\geq e^{3/2}$, \begin{equation}
    \frac{\log (x(1+\log^2x) )}{\log x} \leq 2 \ \ \ \ \ \ \ \ \text{and}   \ \ \ \ \ \ \ \ \ \ \frac{\log x}{\log(x/
    \log x)} \leq 2.
\end{equation}
Therefore, we have proved so far $A_1\leq 64e^2/(T\delta^2)$. For term $A_2$ in \eqref{eq:sum_bound_stat2_decomp}, we can again apply the maximal inequality in Lemma \ref{lemma:maximal_ineq} and obtain
\begin{equation}
    A_2=\mathbb{P}(\exists s, f(\delta)\leq s \leq T: \hat{\mu}_s\leq -\delta)\leq e^{-\delta^2f(\delta)/2}=\frac{4}{T\delta^2}.
\end{equation}
Finally, combining the above results, we get 
\begin{equation}
\label{eq:final-colt17}
    \mathbb{P}\bigg( \exists s \leq f(\delta), \hat{\mu}_s+\sqrt{\frac{2}{s}\log\Big(\frac{T}{s}\Big(\log^2\frac{T}{s}+1\Big)\Big)}+\delta\leq 0 \bigg) \leq \frac{4(16e^2+1)}{T\delta^2}.
\end{equation}
This completes the proof.
\end{proof}

\subsection{Proof of Lemma~\ref{lem:colt17-1}}
\begin{proof}
Recall $\delta\in(0,2/\log^4 T)$. Note that for $s\leq \log^2 T$,
\begin{align*}
  \sqrt{\frac{2}{s}\log\bigg(\frac{eT}{s}\bigg(\log^2\frac{T}{s}+1\bigg)\bigg)}-\delta&\geq  \sqrt{\frac{2}{s}\bigg(1+\log\bigg(\frac{T}{s}\bigg(\log^2\frac{T}{s}+1\bigg)\bigg)\bigg)}-\frac{2}{\log^4 T}  \notag \\
  & \geq  \sqrt{\frac{2}{s}\log\bigg(\frac{T}{s}\bigg(\log^2\frac{T}{s}+1\bigg)\bigg)}.
\end{align*}
Let $a(s)=2/s$, $b(s)=2/s\log(T/s(\log^2 (T/s)+1)$.  The last inequality  is equals to $\sqrt{a(s)+b(s)}-\sqrt{b(s)}\geq 2/\log^4 T$ for $s\leq \log^2 T$, which holds because (i): 
\begin{align*}
    \sqrt{a(s)+b(s)}-\sqrt{b(s)}=a(s)/(\sqrt{a(s)+b(s)}+\sqrt{b(s)});
\end{align*} 
(ii): for $s\leq \log^2 T$, then $a(s)\geq a(\log T)= 2/\log^2 T$,
\begin{align*}
   \sqrt{a(s)+b(s)}+\sqrt{b(s)}\leq \sqrt{a(1)+b(1)}+\sqrt{b(1)}<\log^2 T,
\end{align*}
 hence $a(s)/(\sqrt{a(s)+b(s)}+\sqrt{b(s)})\geq 2/\log^4 T$. 
 Now, we only need to prove 
\begin{align}
 \label{lem:main-equation-33}
      \mathbb{P}\bigg(\exists s\leq \log^2 T: \hat{\mu}_{s}+\sqrt{\frac{2}{s}\log\bigg(\frac{T}{s}\bigg(\log^2\frac{T}{s}+1\bigg)\bigg)}\leq 0 \bigg)\leq \frac{16e^2\log T}{T}.
\end{align}
The rest proof of Lemma~\ref{lem:colt17-1} is similar to the proof of Lemma~\ref{lem:colt17}.
Let 
$A_1$ be the r.h.s. \eqref{lem:main-equation-33} and $f=\log^2 T$.
Then applying the peeling technique, we can bound $A_1$ as follows.
\begin{align*}
    A_1\leq \sum_{\ell=0}^{\infty} \PP \bigg(\exists s, \frac{f}{\beta^{\ell+1}}\leq s \leq \frac{f}{\beta^{\ell}}: \hat{\mu}_{s}+\sqrt{\frac{2}{s}\log\bigg(\frac{T}{s}\bigg(\log^2\frac{T}{s}+1\bigg)\bigg)}\leq 0 \bigg).
\end{align*}
  Similar to \eqref{eq:lemc.1-G.19}, we have $A_1\leq 2e\max (\beta,\beta/(\beta-1))e^{-C/\beta}\leq 4e^2Ce^{-C}$. Then  \eqref{eq:alg-mini-added} becomes
\begin{align}
    Ce^{-C}&= \bigg( \frac{T}{2f}\bigg( 1+\log^2  \frac{T}{2f}  \bigg) \bigg)^{-1}\log\bigg( \frac{T}{2f}\bigg( 1+\log^2  \frac{T}{2f}  \bigg) \bigg)\notag\\
    &\leq   \frac{4f}{T\log(T/(2f))} \notag\\
    &\leq \frac{4\log T}{T},
\end{align}
where the last inequality is due to $f=\log^2 T$. 
Therefore, we have
\begin{align*}
    \mathbb{P}\bigg(\exists s \leq \log^2 T:  \hat{\mu}_s +\sqrt{\frac{2}{s}\log\Big(\frac{eT}{s}\Big(\log^2\frac{T}{s}+1\Big)\Big)}-\delta\leq 0\bigg) \leq \frac{16e^2\log T}{T} .
\end{align*}
This completes the proof.
\end{proof}

\bibliography{bandits}
\bibliographystyle{ims}
\end{document}